\newcommand{\accpt}{\mathsf{Acc}}
\newcommand{\baccpt}{\bar{\mathsf{Acc}}}
\newcommand{\rta}{\rightarrow}
\newcommand{\Rd}{\mathcal{R}}
\newcommand{\mdp}{\mathcal{M}}
\newcommand{\psmh}{\bar{P}_{\mathrm{MH}}}
\newcommand{\msmh}{\bar{M}_{\mathrm{MH}}}
\newcommand{\bbfPi}{\mathbf{\bar{\Pi}}}
\newcommand{\qed}{}
\newcommand{\lta}{\leftarrow}
\newcommand{\eqdef}{\triangleq}
\newcommand{\bfA}{\mathbf{A}}
\newcommand{\expc}{\mathrm{I\! E}}
\newcommand{\lonorm}[1]{|\!| #1 |\!|_1}
\newcommand{\ltvnorm}[1]{|\!| #1 |\!|_{\mathrm{TV}}}
\newcommand{\argmin}{\mathop{\arg\min}}
\newcommand{\barphi}{\bar{\phi}}
\newcommand{\mdps}{\mathsf{M}}
\newcommand{\clstngs}{\mathcal{C}}
\newcommand{\ovst}{\overset}
\newcommand{\tldW}{\tilde{W}}
\newcommand{\bfy}{\mathbf{y}}
\begin{document}





\title{Clustering Markov Decision Processes For Continual Transfer}


\author{\name M. M. Hassan Mahmud$^\dag$$^\ddag$ \email hmahmud42@gmail.com \\
\name Majd Hawasly$^\dag$ \email m.hawasly@ed.ac.uk \\ 
\name Benjamin Rosman$^\dag$$^\star$$^+$ \email brosman@csir.co.za \\
\name Subramanian Ramamoorthy$^\dag$ \email s.ramamoorthy@ed.ac.uk \\
\addr $^\dag$School of Informatics\\
 University of Edinburgh\\
Edinburgh, EH8 9AB, UK.\\
\addr $\ddag$Now at: Blue Yonder UK Ltd.\\
London, UB11 1FW, UK.\\
\addr $^\star$ Now at: Mobile Intelligent Autonomous Systems Group\\
Council for Scientific and Industrial Research\\
Pretoria, South Africa.\\
\addr $^+$ Now at: School of Computer Science and Applied Mathematics\\
University of the Witwatersrand, South Africa
}

\editor{TBD}

\maketitle

\begin{abstract}
We present algorithms to effectively represent a set of Markov decision processes (MDPs), whose optimal policies have already been learned, by a smaller {\em source} subset for lifelong, policy-reuse-based transfer learning in reinforcement learning. This is necessary when the number of {\em previous} tasks is large and the cost of measuring similarity counteracts the benefit of transfer. The source subset forms an `$\eps$-net' over the original set of MDPs, in the sense that for each previous MDP $\mdp_p$, there is a source $\mdp^s$ whose optimal policy has  $<\eps$ regret in $\mdp_p$. Our contributions are as follows. We present EXP-3-Transfer, a principled policy-reuse algorithm that {\em optimally} reuses a given source policy set when learning for a new MDP. We present a framework to cluster the previous MDPs to extract a source subset. The framework consists of (i) a distance $d_V$ over MDPs to measure policy-based similarity between MDPs; (ii) a cost function $g(\cdot)$ that uses $d_V$ to measure how good a particular clustering is for generating useful source tasks for EXP-3-Transfer and (iii) a provably convergent  algorithm, MHAV, for finding the optimal clustering. We validate our algorithms through experiments in a surveillance domain.
\end{abstract}

\begin{keywords}
Reinforcement learning, transfer learning, Markov decision process, MDP abstractions, policy reuse, Markov chain Monte Carlo, discrete optimisation.
\end{keywords}

\section{Introduction}\label{sec_intro}

Reinforcement learning (RL) in Markov decision processes (MDPs) is a well known framework in machine learning for modelling artificial agents \citep{puterman:1994, sutton_barto:1998}, where the agent's task is one of sequential decision making. While problems such as policy learning in these MDPs are well posed in terms of an objective such as maximising the expected reward, they are often based on a specific instance of an underlying Markov Decision Process model which may or may not be explicitly known to the learning agent. 

Realistic agents must cope with environments with variability, wherein a process generates many instances of MDPs within which the agent must now solve the optimisation problem. Finding optimal policies with respect to an unrestricted family of possible MDPs is not only intractable, but often leads to highly conservative and not practicable solutions. However, many realistic environments actually occupy a middle ground. 

In many reinforcement learning problems, that are of great interest in practice, the main source of complexity is partial variability in task description rather than a complete change in the domain. Stated in the language of MDPs, these domains involve a family of possible reward and transition functions over a shared state-action space. This is typically the result of non-stationary behavior of some component that describes the problem. One class of problems is where the human-machine interaction setting remains the same, while the participants change. A recent, highly successful example of such a problem is that of website morphing \citep{hauser_urban_liberali_braun:2009}. In this problem, the goal is to present to the user of a website a view/interface adapted to the needs and skill level (savvy, newbie etc.)  of that particular user. The interface to present to a user is determined adaptively based on the sequence of her choice of links. So, even though the setting (i.e. the website) remains the same , the dynamics defining the problem (the links the user might choose) changes with each user -- i.e. each user corresponds to a new task and corresponding change in problem dynamics. Given the change in dynamics, the algorithm has to determine the best policy/sequence of actions (sequence of interfaces to present). Note that since the algorithm knows that a new user has arrived at the homepage, it always knows if the dynamics may have changed -- but not how it has changed. In the MDP formalism, each task corresponds to a particular reward and transition function,  while the state and action space remain the same across tasks. Additionally, the reinforcement learning agent is always told when the dynamics/task have changed.

In this paper, we present an approach to dealing with such problems that is based on the notion of transfer learning. We view the new task instance as being similar to previously experienced ones, although we have no explicit measurement of how similar the new task instance is to any previously seen one. Our objective then is to devise an efficient transfer learning method for reinforcement learning in MDPs \citep{taylor_stone:2009} , that enables learning agents to learn efficiently enough to be useful in domains as the one mentioned above.

As a motivating example that we develop further in our experiments, consider a surveillance agent that is monitoring a large geographical region (this is a variant of the kinds of problems that are considered, for instance, in \citep{an_kempe_kiekintveld_shieh_singh_tambe_vorobeychik:2012}). The agent faces a sequence of monitoring problems where each problem corresponds to the pattern in which infiltrators appear in different locations. If two tasks have similar infiltration patterns, then the same surveillance policy may be good for both of them. During each task, the goal of the agent is to learn the regions where infiltrators appear and choose the appropriate surveillance policy. We do not expect the patterns to be completely different every time, but at the same time we cannot completely rule out a new pattern emerging. In the former case, we should recognize this repetition and take advantage of this fact by reusing old surveillance policies. In the latter case, we should also determine that the new scenario is novel and learn an appropriate policy for that scenario. Furthermore, if the number of previous patterns becomes too large, we also need to compactly re-represent them so that the procedure for determining the correct way to act is more sample efficient.

Our setting of MDPs with changing transition and reward functions are typically referred to as {\em non-stationary} MDPs and has been considered before \citep{nilim_el_ghaoui:2005}, \citep {yu_mannor_shimkin:2009} \citep{yu_mannor:2009}.  In these problems, the goal of the learner is to take actions in such a way so as to perform as well as the stationary policy which is optimal in hindsight with respect to all the transition and reward function pair the MDP attained. Note that for a specific pair, this policy may be substantially worse than the optimal policy for that pair. Our approach in this paper represents an orthogonal setting, wherein we consider the case where the learner tries to {\em track} the optimal policy -- i.e., always perform as well as the best policy for the current set of transition and reward function -- with the proviso that the learner is told when the dynamics have changed. We also require the learner to, if possible, perform better than `pure' reinforcement learning algorithms applied to specific transition and reward functions by transferring information from the transition and reward functions for which the optimal policy was learned previously. 

More formally, in this paper we consider TLRL for the case of a `lifelong' learning agent that learns a (possibly never-ending) sequence of MDPs which are defined on the same discrete state and action space but differ in terms of the transition and reward distributions. We assume the distribution generating the sequence is unknown and unlearnable (for instance, in the motivating surveillance problem described above, the infiltration pattern may depend on the internal variables of the infiltrators that are not known to us). In this setting, the goal of the agent is to, if possible, {\em reuse} the optimal policies in the previous MDPs in order to learn the new MDP more {\em efficiently}. In this continual setting, we assume that the agent operates in the new MDP for a fixed number of episodes, and hence we measure efficiency by the total reward accumulated while learning the new task during these fixed number of episodes. Reusing a policy means that we should try the optimal policies of the previous MDPs in the new MDP and if one results in efficient behavior we should keep using it. However, as we described above, a problem in this setting is that, when the number of previous tasks become too large, transfer becomes ineffective as the agent spends too much time testing the old policies. In this instance, one possible solution to this problem is to find a subset of the $N$ {\em previous} policies, which we call {\em source} policies, that are, in a well-defined and useful sense, representative of all the $N$ previous policies (see Section \ref{sec_relWork} for alternative encodings). In other words, the source policies should form the analogue of an $\eps$-net in a metric space \citep{kolmogorov_fomin:1970} over the space of previous MDPs with respect to an appropriate distance over MDPs.  In this paper we present a clustering based approach to finding this smaller subset of source policies. 

Our overall approach is illustrated in figure \ref{fig_overall}. The main idea is to cluster the $N$ previous MDPs into $c$ clusters, where the number $c$ and the clusters themselves are to be determined via discrete optimization,  and then choose the representative element of each cluster to obtain the source MDPs. The optimal policies of the source MDPs then become the source policies. In our approach to choosing the clustering  and the corresponding source policies, we attempt to ensure a-priori that the chosen source policies are a good representative of the previous tasks for the purposes of transferring to the unknown target task. 

\begin{figure}
\begin{center}
\includegraphics[scale=0.5]{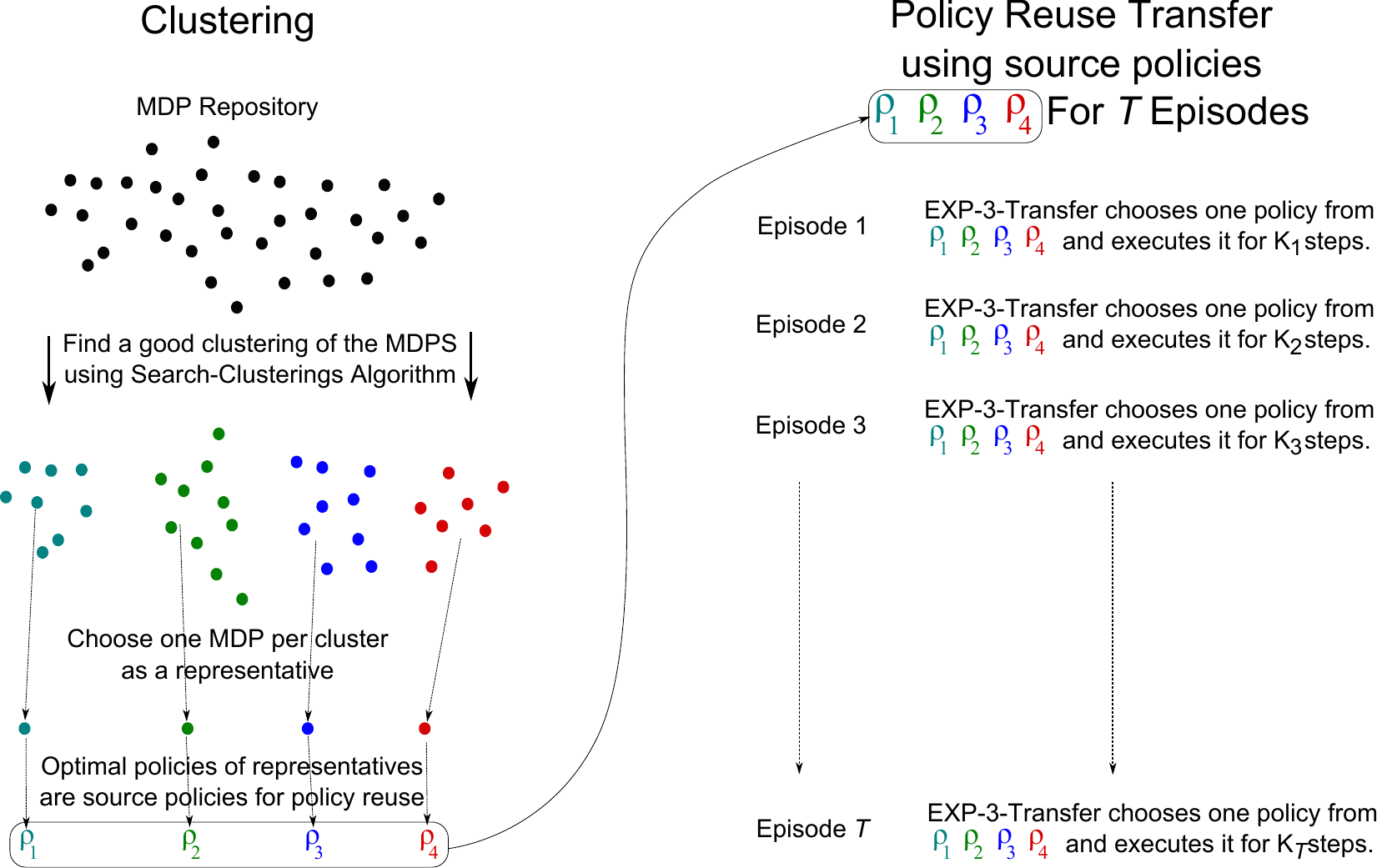}
\end{center}
\caption{This figure illustrates our overall algorithmic approach. Our method consists of two parts. In the first part (left pane) we use the Search-Clusterings algorithm to find a good clustering of the MDPs in our repository, which are then used to generate a source set of policies. In the second part (right pane) these policies are used by EXP-3-Transfer to perform transfer using policy reuse in a new MDP. Reducing the number of policies input to EXP-3-Transfer helps it choose the best policy quickly. On the other hand, reducing policies risks leaving out a policy with good performance in the new MDP. The bulk of the paper is dedicated to deriving a principled way to trade off these two contradictory goals .}\label{fig_overall}
\end{figure}

In particular, we define a transfer learning algorithm, EXP-3-Transfer, with performance bound $g(c)$ that depends on the number $c$ of source policies.  Hence this explicitly measures how good the size of the clustering $c$ is. We are now left with the task of choosing the clustering and the corresponding source policies. To that end, we define a distance function $d_V$ between two MDPs that takes the distance between two MDPs to be how well the optimal policy of one MDP performs in the other (this in turn is measured as the difference between the value of the optimal policy of the former and the value of the optimal policy of the latter at the start states of the latter). Hence, given that our goal is to reuse optimal policies of one MDP in another, we choose our clustering so that within each cluster the pairwise $d_V$ distances between the elements of the cluster are low. Similarly, we choose the source policy for each cluster to be the optimal policy of the MDP in that cluster which has low $d_V$ distance with respect to all the other elements. Hence, the cost of a clustering with $c$ clusters is, roughly speaking, $g(c) + \eps$ where $\eps$ is a measure of the inter-element $d_V$ distances in the clusters.

Given the cost function, we show that it is NP-hard to find the optimal clustering. So, we introduce a Markov chain Monte Carlo based discrete optimization algorithm to find it. The algorithm is an extension of the Metropolis-Hastings algorithm, which we call Metropolis-Hastings with Auxiliary Variables (MHAV in short), and can also be thought of as an extension to simulated annealing \citep{kirkpatrick_gelatt_vecchi:1983} with stochastic temperature changes. Simulated annealing is a well known algorithm for discrete optimization, but requires carefully setting up an infinite sequence of parameters known as the temperature schedule. Determining this schedule in practice to ensure convergence is acknowledged to be very difficult, and practically an art form. In our version of the algorithm, we search over both the temperature and the optimal point simultaneously, thereby handling the problem of setting the schedule automatically.

To summarize, our overall continual transfer algorithm is as follows. The agent continually learns optimal policies for MDPs presented in sequence. When learning the optimal policy for a particular MDP, the learning agent uses the optimal policies of the  previously solved MDPs in a policy reuse transfer learning algorithm. To make transfer more effective, at fixed intervals, the agent clusters the previous MDPs to derive a small subset as the set of source MDPs, whose optimal policies are then used as input to the policy reuse algorithm. The clustering is chosen so as to optimize the regret of the transfer algorithm, and is found by using a convergent discrete optimization algorithm. 

We conclude this brief introduction to our method by noting that our transfer algorithm EXP-3-Transfer is in fact an extension of the well known EXP-3 algorithm \citep{auer_cesa-bianchi_freund_schapire:2002} for non-stochastic multi-armed bandits, and our performance bound $g(c)$ is in fact a regret bound of the type well known in bandit algorithm literature. Our strategy is to cast the policy reuse transfer learning problem as a bandit problem, with `pure reinforcement learning algorithm' as one arm, and the $c$ source policies as the remaining arms. The regret bound for EXP-3 ensures that we minimize negative transfer by never performing much worse than pure reinforcement learning.  We will now discuss related work.



\subsection{Related Work}\label{sec_relWork}

As evidenced by the survey paper \citep{taylor_stone:2009}, a significant amount of work has been done on transfer learning in reinforcement learning. As mentioned previously, lifelong learning in reinforcement learning was first explicitly considered in  \citep{mitchell_thrun:1993,thrun_mitchell:1995,thrun:1996a} in the context of learning in robots. In these works, the main aim was to learn the dynamics of robot motion in one circumstance using a function approximator (such as neural networks) and then use these learnt dynamics as an initial bias in a new situation using an explanation based learning framework. 

In terms of recent work on TLRL, two different strands are related to our work. The first is work on policy reuse and the second on task encoding. The first and, to the best of our knowledge, the only authors to have considered policy reuse are \citep{fernandez_veloso:2006, fernandez_garcia_veloso:2010}. The algorithms presented there, at the beginning of every episode, choose between different source policies by using a softmax criteria on accumulated reward and then use the chosen policy as an initial exploration policy before switching to Q-learning exclusively. In contrast, we extend the EXP-3 algorithm for multi-armed bandits to choose between source policies and Q-learning, and as a result inherit its theoretical guarantees. Additionally, they do not consider the problem of source task selection, whereas in our work this is a major focus. 

Two closely related works are \citep{talvitie_singh:2007} and \citep{azer_lazaric_brunskill:2013} where the authors consider the best way to choose between a set of stationary policies. The main distinction between policy reuse and these is the requirement in the former to not perform much worse than a base RL algorithm (which is also our requirement).

We now look at previous work that uses a smaller set of source tasks to represent the complete set of previous tasks. The problem of source task selection through clustering seems to have been considered only by \cite{carroll_seppi:2005}. They introduce several measures for task similarity and then consider clustering tasks according to those measures. The distance functions introduced were heuristic, and the clustering algorithm used was a simple greedy approach. The evaluation of their method was on several toy problems. In contrast, we derive a cost function for clustering in a principled way to optimize the regret of our EXP-3-Transfer policy reuse algorithm. Additionally, instead of constructing the cluster in a greedy fashion, we search through clustering space using a convergent discrete optimization algorithm.

Recent work that also chooses selectively from previous tasks is \citep{lazaric_restlli_bonarni:2008}, \citep{lazaric_restilli:2011}. The setting for this paper is a collection of tasks defined on the same state-action space with the tasks and the state-action-state triples for the different tasks generated i.i.d. (similar to the multi-task transfer in classification setting considered in \citep{baxter:2000}) rather than sequentially as is typical in RL. Under this setting the authors are able to bound the error when samples from one task are used to learn the new task. This is quite a different setting from us as it is `batch' RL rather than the typical online and sequential RL and measures similarity in terms of the actual transition and reward functions rather than policies or values. Additionally, the analysis and algorithms are derived under the assumption of a fixed set of prior tasks rather than the continual lifelong learning setting we consider.

%
%
%
%
%
%



Source task selection is not the only possible way to represent previous tasks, and the overall goal of finding abstractions for exploiting commonality has received considerable attention in the transfer learning community. Most of the work done in deriving abstractions for the purposes of transfer has been for MDP homomorphisms \citep{ravindran_barto:2003,ferns_panangaden_precup:2004,ravindran:2013,konidaris_barto:2007,sorg_singh:2009,castro_precup:2010}. In these works, similarity between MDPs is defined in terms of bisimulation between states of different MDPs. Bisimulation is a concept borrowed from process algebra. In the context of transfer learning in MDPs, at its most general formulation, a bisimulation is an isomorphism $f,g$ between the state and action spaces that is preserved under the transition distribution -- that is for every state-action-state triple $s,a,s'$, $T_1(s'|s,a) = T_2(f(s')|f(s),g(a))$ where $T_i$ are the transition distribution of the two MDPs. Unfortunately, in this pure form, bisimulation is absolute (two MDPs are either bisimilar or not)  and does not take into account the reward function. And so, in the papers mentioned above, this basic notion was extended in various ways to address both these issues. However, one of the main issues with bisimulation is computational cost, and this remains so in the extensions as well. Another issue with these approaches is that, as observed by \cite{castro_precup:2010}, bisimulation is a worst case metric (two states may have identical optimal actions but still be completely different according to the metric) and as a result requires heuristic modifications. 

Technically, the main difference between our approach and bisimulation based methods is that the similarity between different MDPs are ultimately determined by distance between value functions. In our case, however, we are interested in distance in terms of {\em policy}. As a result, even though two tasks might be quite different in terms of the value function they might be identical in terms of the optimal policy, and our approach will capture this (as noted earlier, failing to do this was one of the issues with bisimulation based approaches). 


Another interesting line of work that uses a different approach to abstracting MDPs is the proto-value function based approach of \citep{ferrante_lazaric_restelli:2008}. Proto-value functions were introduced in \citep{mahadevan:2005} as an efficient way to represent the value function for large state spaces as a linear combination of functions, which are called proto-value functions. The main innovation in this approach is that, in representing the value function as a real function over state space, the state-space is treated as a manifold where the distance between points/states is determined by the reachability graph of the MDP. This idea of a spectral-decomposition of the value function naturally lends itself to transfer learning, as, given a new task, we can imagine using the proto-value functions learned in a previous task to initialize the new value function in the new task. It has been noted that proto-value function based transfer has issues in terms of scalability and tractability. The main difference between this and our work is that, as with the homomorphism based approach, our similarity notion is based on policy similarity, while theirs is based on similarity between value functions.  Identifying policy similarity is more desirable because tasks similar in terms of value function will be similar in terms of policy, but not necessarily the other way round. 

A somewhat different approach to finding MDP abstractions was adopted in  \citep{ammar_tuyls_taylor_driessen_weiss:2012}, where MDPs were related by mapping state action state triples to a lower dimensional space, and consider triples to be equivalent if their representations were found to be close. The authors were able to transfer between tasks such as inverted pendulum, cart pole and mountain car, showing that in these cases their approach was able to discover the fact that the differential equations describing these domains have similar/identical forms. 

%
%
%
%

Finally, our work can also be related to the notion of equivalence between probabilistic models, which has been influential in early work on Bayesian network learning. For instance, Chickering and collaborators wrote a series of papers in which the notion of event equivalence and score equivalence is used to render the problem of searching over network structures somewhat tractable. In \citep{heckerman_chickering:1995}, it is shown that the notion of event equivalence, i.e., that two Bayesian networks should be treated as similar if they agree on the independence and dependence relationships between random variables, can be used to define local structure edit operations that enable learning of network structures. Subsequently, in \citep{chickering_boutlier:2002}, \citep{chickering:2003} this idea is developed further to show that by considering this notion of equivalence it is possible to achieve optimal structure identification with an essentially greedy procedure. We take inspiration from such work, but also note that our task of comparing two sequential decision making problems differs from that of making predictions with a probabilistic model, calling for new notions of process similarity and corresponding algorithms for transfer.



\subsection{Paper Organization}

In the following we proceed as follows. We present the notation and some fundamental notions in Section \ref{sec_prelim}. Then we define our transfer learning algorithm and framework for measuring distance in Section \ref{sec_clustEnc} respectively. Sections \ref{sec_findclust} and \ref{sec_contTrans} presents our clustering algorithm and the full continual transfer algorithm. We then present our experiments in Section \ref{sec_exp} and then end with a conclusion in Section \ref{sec_conc}. 

\section{Preliminaries}\label{sec_prelim}

We use $\eqdef$ for definitions, $Pr$ to denote probability and $\expc$ for expectation. A finite MDP $\mdp$ is defined by the tuple $(\stts,\acts,\Rd,P,R,\gamma)$ where $\stts$ is a finite set of states, $\acts$ is a finite set of actions and $\Rd = [l,u]\subset \bbbr$ is the set of rewards. $P(s'|s,a)$ is a the state transition distribution for $s,s' \in \stts$ and $a \in \acts$ while $R(s,a)$, the reward function, is a random variable taking values in $\Rd$. Finally, $\gamma \in [0,1)$ is the discount rate. 

A (stationary) policy $\pi$ for $\mdp$ is a map $\pi : \stts \rta \acts$. For a policy $\pi$, the Q function $Q^\pi : \stts \times \acts \rta \bbbr$ is given by $Q^\pi(s,a) = \expc[R(s,a)] + \gamma\sum_{s'} P(s'|s,a)Q(s,\pi(s'))$. The value function for $\pi$ is defined as $V^{\pi}(s) =  Q^\pi(s,\pi(s))$. An optimal policy $\pi^*$ satisfies $V^{\pi^*}(s) \geq V^{\pi}(s)$ for all policy $\pi$ and state $s$ -- it can be shown that every finite MDP has an optimal policy.  $V^{\pi^*}$ is written $V^*$, and the corresponding $Q$ function is given by $Q^*(s,a) =  \expc[R(s,a)] + \gamma\sum_{s'} P(s'|s,a) Q^*(s',\pi^*(s'))$. When the agent acts in the MDP, at each step it takes an action $a$ at a state $s$, and moves to the next state $s'$ and the reward $r$. The goal of the agent is to learn $\pi^*$ from these observations and then choose the action $\pi^*(s)$ at each state. If there are multiple optimal policies, we will designate the first policy in a lexicographical order as the canonical policy. We will assume that $R_{\max}$ is a known upper bound on the reward function. Without loss of generality, in the sequel we assume that there is a single initial state $s^\circ$.  We call a policy $\pi$ {\em $\eps$-optimal} if $V^*(s^\circ) - V^\pi(s^\circ) \leq \eps$


{\bf The transfer learning setting.} In the transfer learning setting, we are given previous MDPs $\mdp_i$, $1 \leq i \leq N$ and we transfer from these tasks to learn the $N+1^{st}$ MDP $\mdp_{N+1}$. The learning of $\mdp_{N+1}$ runs for $T$ episodes. We denote the optimal policy of the $i^{th}$ previous MDP by $\pi^*_i$, and the value of a policy $\pi$ in MDP $i$ as $V_i^\pi$. Similarly, we denote the reward and transition functions of the $i^{th}$ MDP by $P_i$ and $R_i$ respectively. We will assume that the rewards of all MDPs fall within the range $[R_{\min},R_{\max}]$ and we define $\Delta R \eqdef R_{\max}-R_{\min}$.



\section{The Policy Reuse Problem}\label{sec_polReuse}

In this section we first concretely define the problem of policy-reuse for transfer learning and then define our algorithm for solving this problem. We define the goal of policy reuse to be to design an algorithm that runs for $T$ episodes on the target task $\mdp_{N+1}$ and, given a collection of {\em source policies} $\rho_1,\rho_2,\cdots,\rho_c$ and the Q-learning algorithm/policy, performs nearly as well as the best policy (in hindsight) in this collection over the $T$ episodes. This requirement has two important implications. First, since the set of policies contain Q-learning, and as Q-learning converges to the optimal policy, it means the algorithm is required to perform {\em nearly as well as a policy that converges to the optimal policy} -- in other words, the algorithm should {\em avoid negative transfer}. Second, if there is a source policy that is near-optimal, then the algorithm is also required to perform {\em nearly as well as that near-optimal policy} -- in other words, the algorithm should {\em transfer from the new to the old task} if possible.

To derive our algorithm we show that policy reuse may be cast as an instance of the non-stochastic multi-armed bandits problem, and hence the classic EXP-3 algorithm \citep{auer_cesa-bianchi_freund_schapire:2002} may be extended to solve policy reuse. We call our extension EXP-3-Transfer and we derive regret bounds for this algorithm (which is similar to the bounds for EXP-3). In particular we show that EXP-3-Transfer performs nearly as well as the best policy in the collection described above, in expectation with respect to the randomness in the algorithm and the reward and transition function of $\mdp_{N+1}$. Hence, this algorithm satisfies the requirements we laid out for a policy reuse algorithm. We now expand these ideas.

\subsection{Defining The Policy Reuse Problem}

In the policy reuse transfer problem, we have the target task $\mdp_{N+1}$ and a set of $c$ source policies $\rho_1,\rho_2,\cdots,\rho_c$ (in general, $\rho_i$s are arbitrary -- but in this paper, each $\rho_i$ is the optimal policy $\pi^*_j$ of some $\mdp_j$ our repository, and Section \ref{sec_clustEnc} shows how to choose the $\rho_i$s in a principled way). It seems that policy reuse as a method for transfer was first introduced in  \citep{fernandez_veloso:2006,fernandez_garcia_veloso:2010}. The algorithm introduced was called Probabilistic Policy Reuse (PPR), and the goal of the algorithm was to balance using the $c$ source policies and pure reinforcement learning policy ($\eps$-greedy Q-learning) so that the learning algorithm converges faster than running pure RL by itself. The basic idea in PPR is as follows. At the beginning of each episode, PPR chooses a policy from among the source policies and the $\eps$-greedy Q-learning policy using a softmax criterion on the observed returns of the policies in previous episodes. It then initiates a policy-reuse episode, where at each step it probabilistically chooses between $\eps$-greedy Q-learning and the chosen policy, with probability of choosing $\eps$-greedy Q-learning going to $1$ during the episode.  In essence, the $c$ source policies serve as an initial exploration policy, so that if they to take the agent through paths of the optimal policy, it would result in faster learning of the optimal policy. 


There are several aspects of the above algorithm that are noteworthy. First, even if the $c$ source policies contain the optimal policy, the algorithm would deterministically switch to Q-learning after the initial phase. Another aspect is that, while there is an intuitive connection between the soft-max criterion and the benefit of using a policy, the actual connection is not made rigorous. Both these issues arise from the fact that the goal of policy reuse was not defined concretely in \citep{fernandez_veloso:2006}. So, taking cue from the definition of online learning algorithms \citep{vovk:1990,littlestone_warmuth:1994,cesa-bianchi_lugosi:2006}, we define the policy reuse problem concretely as  designing an algorithm that chooses policies at every episode in such a way that it does not perform much worse than any of the $c$ policies or Q-learning over the $T$ episodes. As discussed above, this requirement implies both that (i) the algorithm avoids negative transfer and (ii) transfers from/reuses a good policy (if one exists). Formally,
\begin{definition}[The policy reuse problem]\label{def_polreuse}
Let transfer learning for the target task $\mdp_{N+1}$ be run for $T$ episodes, and let $\bar{x}_i(t) \eqdef \sum_{n=1}^{K_t} \gamma^n r_n$ be the discounted return accumulated by running $\rho_i$ (with $\rho_{c+1}$ being the non-stationary Q-learning policy) at episode $t$, with $r_n$ the reward at step $n$ and $K_t$ the length of the episode. Let the total discounted reward for policy $\rho_i$ be $R_i(T) = \sum_{t=1}^T \bar{x}_i(t)$. Let $R_A(T)$ be the total discounted reward accumulated by a policy-reuse algorithm $A$. Then we require that for each $i$ 
\[\expc_{R_{N+1},P_{N+1}}\left [\expc_A[R_i(T) - R_A(T)] \right]  = o(T)\] where $\expc_{R_{N+1},P_{N+1}}$ is the expectation is with respect the dynamics of $\mdp_{N+1}$ and $\expc_A$ is the expectation with respect to randomization in the algorithm. 
\end{definition}
In the following subsections we present the EXP-3-Transfer algorithm for solving this problem and then analyze its performance.


\subsection{EXP-3-Transfer For Policy Reuse}\label{sec_tralg}


In this section, we introduce the non-stochastic multi-armed bandits problem \citep{auer_cesa-bianchi_freund_schapire:2002} and show that the policy reuse problem may be cast as in instance of this problem. We then present EXP-3-Transfer to solve this problem, which is an extension/modification of the classic EXP-3 algorithm \citep{auer_cesa-bianchi_freund_schapire:2002} for non-stochastic multi-armed bandits. We discuss the difference between EXP-3 and EXP-3-Transfer in Section \ref{sec_E3TReg}.


\subsubsection*{Non-Stochastic Multi-armed Bandits}

The non-stochastic multi-armed bandits problem is defined as follows.
\begin{itemize}
\item There are $c+1$ {\em arms} where each arm $i$ has a {\em payoff process} $x_i(t)$  associated with it. No assumptions are made on the payoff process $x_i(t)$ and they may in fact be adversarial (this is the reason the problem is called non-stochastic).

\item A learner operates for $T$ steps and at each step $t$ it pulls/selects one of the arms $i_t$.

\item At step $t$, the learner obtains payoff $x_{i_t}(t)$ and only observes the payoff of the arm $i_t$ it has selected. 
\end{itemize}

The goal of the learner is to minimize its {\em regret} with respect to the best arm, that is choose arms $i_1,i_2,\cdots,i_T$ to minimize the quantity 
\begin{equation}\nonumber
Reg(i_1,i_2,\cdots,i_T) \eqdef \max_{i} \sum_{t=1}^T x_i(t)  - \sum_{t=1}^T x_{i_t}(t)
\end{equation} 
A randomized algorithm, called EXP-3, was developed in \citep{auer_cesa-bianchi_freund_schapire:2002} which minimizes the {\em expected regret}, where the expectation is taken with respect to the randomization in the algorithm. It turns out that the regret of EXP-3 satisfies the requirements for the regret $R_A$ in Definition \ref{def_polreuse} (we expand on this in the next subsection). This implies that the non-stochastic bandits approach, extended to our setting, solves the problem of policy reuse. 

\subsubsection*{Policy Reuse as Non-Stocastic Multi-armed Bandits}

We may cast the policy reuse problem as a non-stochastic bandit problem as follows.
\begin{itemize}

\item Each source policy $\rho_i$ and the the Q-learning policy corresponds to an arm, giving a total of $c+1$ arms.

\item At each step $t$ in the bandit problem, we (the learner) select a policy/arm $\rho_{i_t}$, and execute it for an {\em episode} in the target $\mdp_{N+1}$ (so a step in the multi-armed bandit problem corresponds to an episode in the policy reuse problem). 

\item The payoff we receive for executing policy $\rho_{i_t}$ (i.e. choosing arm $i_t$) is the (normalized) total discounted reward by executing $\rho_{i_t}$ for that episode: $x_{i_t}(t) = [\Delta R(1-\gamma)^{-1}] [\sum_{n=1}^{K_t} \gamma^n r_n - R_{\min}(1-\gamma)^{-1}]$, with $r_n$ the reward at step $n$ and $K_t$ the length of the episode (the normalization is required as EXP-3 expects arm payoffs to lie in $[0,1]$).

\end{itemize}

\subsubsection*{EXP-3-Transfer}

Given the above transformation, our algorithm EXP-3-Transfer for choosing the policy $i_t$ at step $t$ is given in pseudocode form in Algorithm \ref{alg_EXPTran}. The basic idea is straightforward -- at each step it chooses a policy with probability proportional to an {\em adjusted} cumulative observed reward of the policy and a term to encourage exploring the policy if it has not been selected recently. The rewards are adjusted to compensate for the fact that, at each step $t$, the algorithm observes the payoff of only $\rho_{i_t}$, the policy chosen at that step (see Section \ref{sec_E3TReg} for details on the adjustment). In addition, if the algorithm determines that with high probability a particular source policy (i.e. all but the Q-learning policy) is worse than some other source policy, then it eliminates it from further consideration. The idea is that if we remove an arm we know to be sub-optimal, then we save the episodes that would have been wasted trying that policy.


In detail, the main loop of EXP-3-Transfer runs from line \ref{stp_ml_start} to line \ref{stp_ml_end}. $C_t$ contains all the policies not yet eliminated by EXP-3-Transfer. Line \ref{stp_defpit} computes the probability of choosing a policy, which is proportional to the adjusted observed payoff of each policy, plus the exploration term ($\beta/|C_t|$). In the next two steps, a policy is chosen probabilistically, executed and its payoff observed and normalized (normalized because EXP-3 expects payoffs to lie in $[0,1]$). In line \ref{stp_updatez} we record the payoff of the executed policy for use in the elimination step. In lines \ref{stp_adjust} and \ref{stp_updatewt}, the adjusted payoffs of the policies are computed, and their weights updated, respectively. Finally, steps \ref{line_ex3_delta_start} to \ref{line_ex3_delta_end} looks at each stationary policy in $C_t$, and checks to see if average {\em non-adjusted} payoff so far satisfies the elimination condition. If so, the policy is removed from $C_t$. The elimination condition is justified/obtained from the theorems below which derive the regret bound of the algorithm.

\begin{algorithm}[th]
   \caption{EXP-3-Transfer$(\mdp,\{\rho_1,\rho_2,\cdots,\rho_c\},\beta,T,l,\Delta R,\delta)$}
   \label{alg_EXPTran}
\begin{algorithmic}[1]
	
	\STATE {\bfseries Input:} MDP $\mdp$, policies $1$ to $c$: the source policies $\rho_1,\cdots,\rho_c$ and EXP-3 parameters $\beta$ and $T$; $l$ the interval at which to eliminate policies; $\Delta R$, upper bound on the range of the per step rewards; $\delta$, confidence parameter for eliminating source policies.

	\STATE {\bf Initialize:} 

\begin{itemize}	
	
	\item  Set Q-learning policy as the $c+1^{th}$ policy.
	
	\item Set $w_i(1) = 1$,  let $x_i(t)$ be the payoff of the policy $i$ at step $t$; let $C_1 \eqdef \{\rho_1,\rho_2,\cdots,\rho_c,\mbox{Q-learning policy}\}$.
	
	\item Set $n_i \lta 0$, where $1 \leq i \leq c$, and $n_i$ is  the number of times $\rho_i$ has been used; set $z_i \lta 0$, where $1 \leq i \leq c$, and $z_i$ is the total normalized discounted reward observed for $\rho_i$ when it was executed. 
	
\end{itemize}

\FOR{$t = 1$ to $T$}\label{stp_ml_start}
	
	\STATE {\bf If} $\rho_i \in C_t$ {\bf then} set $p_i(t) = (1-\beta)\frac{w_i(t)}{\sum_{\rho_i \in C_t } w_i(t)} + \frac{\beta}{|C_t|}$; else set $p_i(t) = 0$. \label{stp_defpit}
	
	\STATE Select policy $\rho_{i_t}$ for step $t$ to be $i$ with probability $p_i(t)$, increment $n_{i_t} \lta n_{i_t} + 1$. 
	
	\STATE Execute policy $\rho_{i_t}$ for one-episode, and observe discounted payoff $\bar{x}_{i_t}(t)$;  normalize $x_{i_t}(t) \lta [\bar{x}_{i_t}(t) - R_{\min}(1-\gamma)^{-1}] /[\Delta R(1-\gamma)^{-1}]$. \label{stp_normTerm}
	
	\STATE {\bf if} $\rho_{i_t}$ is not the Q-learning policy {\bf then} set $z_{i_t} \lta z_{i_t} + x_{i_t}(t)$. \label{stp_updatez}
	
	\STATE For each $\rho_i \in C_t$, set
           \begin{equation}
					\hat{x}_i(t) \lta 
					\begin{cases}
					x_i(t)/p_i(t) &\mbox{ if } i = i_t\\
					 0  &\mbox{ otherwise  }
					\end{cases}
          \end{equation}
     \label{stp_adjust}
          	
	\STATE For each $\rho_i \in C_t$, set $w_i(t+1) \lta w_i(t)\exp[\beta \hat{x}_i(t)/(c+1)]$. \label{stp_updatewt}

	\STATE Set $C_{t+1} \lta C_t$.

	\IF{$t \mod l = 0$}\label{line_ex3_delta_start}

   		\FOR{$k=1$ to $c$, $\rho_k \in C_t$}\label{line_elim_start}

				\IF{ $\exists \rho_j \in C_t,$  $j \leq c$, such that, for $\eps = z_j/n_j - z_k/n_k$, we have \\ $\eps/2 > \sqrt{-\ln (\delta/2c) (2n_j)^{-1}}$ and $\eps/2  > \sqrt{-\ln (\delta/2c)(2n_k)^{-1}}$}\label{stp_deltaRemove}
				
					\STATE Set $C_{t+1} \lta C_{t} -  \{\rho_k\}$.
				
				\ENDIF

		\ENDFOR

	\ENDIF\label{line_ex3_delta_end}

	\ENDFOR\label{stp_ml_end}
\end{algorithmic}
\end{algorithm}

\subsection{Analysis of EXP-3-Transfer}\label{sec_E3TReg}

To begin analysis of EXP-3-Transfer, we first note that the payoffs/discounted cumulative reward $\bar{x}_i(t)$ of a source policy $\rho_i$ is i.i.d., while the payoff of the Q-learning arm has an unknown non-stationary distribution (because the choice of actions in Q-learning is non-stationary). Our strategy for analyzing the performance despite the non-stationarity, is to assume that there is an unknown adversary that is generating the payoffs for Q-learning and then bound the expected worst-case regret of EXP-3-Transfer with respect to this adversary (this is {\em exactly} the strategy used to analyze EXP-3). In particular, in our adversarial/worst-case analysis we assume that there are three participants Nature, Adversary and Player, who make the following choices, in that order.

\begin{enumerate}

\item Player chooses the number of episodes $T$ and the source policies $\rho_1,\rho_2,\cdots,\rho_c$, and all the other parameters for EXP-3-Transfer

\item Adversary chooses the payoffs for $T$ episodes of Q-learning

\item Nature chooses the episodic payoffs $\bar{x}_i(t)$ for each of the policies $\rho_i$ for $1 \leq t \leq T$, according the i.i.d. distribution that governs the payoffs (the distribution is determined by $P_{N+1},R_{N+1}$, the transition and reward functions respectively of $\mdp_{N+1}$, and $\rho_i$).

\item Player now runs EXP-3-Transfer for $T$ episodes, and observed payoffs $x_{i_t}$ are the ones chosen by Nature and Adversary as above.
\end{enumerate}

The important thing to note is that in the above framework, there are two sources of randomness/stochasticity -- first in step 3 due to the randomization due to the transition and reward functions of $\mdp_{N+1}$ and second in step 4 due to choices made by EXP-3-Transfer. As such, our regret bound holds in expectation with respect to these two sources, which we denote by $\expc_{R,P}$ and $\expc_{E3T}$, respectively. The former is expectation with respect to the transition and reward function in $\mdp_{N+1}$, while the latter is expectation with respect to EXP-3-Transfer. It is crucial to note that in taking these expectations, we assume that the choices made by the adversary in step 2 are taken to be fixed (this is identical to what was done in the analysis of EXP-3 in \cite{auer_cesa-bianchi_fischer:2002}). Our main result for this section is as follows:
\begin{theorem}\label{thm_EXPBound}
For any source policy $\rho_j \in C_T$,
\begin{align}\label{eq_thmEXPBound}
V_{N+1}^{\rho_j} - \frac{1}{T} \expc_{R_{N+1},P_{N+1}} \left [\expc_{E3T}\left [ \sum_{t=1}^T \bar{x}_{i_t}(t) \bigg |\rho_j \in C_T \right] \right ] 
&\leq 
\frac{\Delta R}{1-\gamma} \left( \frac{|C_T|}{\sqrt{c+1}} + \sqrt{c+1} \right ) \times \\
\nonumber
& \;\;\;\;\;\;\;\; \sqrt{(e-1)\ln(c+1)/T}\\
\nonumber
& \leq \frac{\Delta R}{1-\gamma} 2 \sqrt{e-1}  \sqrt{(c+1)\ln(c+1)/T}\\
& =  \frac{\Delta R}{1-\gamma} 2.63 \sqrt{(c+1)\ln(c+1)/T}
\end{align}
when the algorithm is run with $\beta = \min\{1,\sqrt{(c+1)\ln(c+1)/[T(e-1)]}\}$. Here $\expc_{R_{N+1},P_{N+1}}$ is expectation with respect to the distributions $R_{N+1}$ and $P_{N+1}$ of $\mdp_{N+1}$, while  $\expc_{E3T}[\cdot|\rho_j \in C_T]$ is the expectation with respect to randomization of EXP-3-Transfer, conditional on the event that $\rho_j \in C_T$. 

Additionally, with probability $1-\delta$, with respect to randomization due to the target MDP $\mdp$, for all source policy $\rho_i \not\in C_T$, there is at least one $\rho_{i'} \in C_T$ such that
\begin{equation}\nonumber
V_{N+1}^{\rho_i} < V_{N+1}^{\rho_{i'}}
\end{equation}
\end{theorem}
The proof is given in Appendix \ref{app_proofs}, but in the following we discuss the bound itself. For the sequel, we define the following function:
\begin{equation}\label{eq_gc}
g(c,T) \eqdef \Delta R (1-\gamma)^{-1} 2.63 \sqrt{(c+1)\ln(c+1)/T}
\end{equation}
This is the right side of the bound above (we will also use it in Section \ref{sec_clustEnc} to quantify how good a particular set of source policies is for the purpose of transfer). 

The first issue we need to check is whether the bound satisfies the requirements we set out in Definition \ref{def_polreuse}. The definition requires that $\expc_{R_{N+1},P_{N+1}}\expc_A[R_i(T) - R_A(T)]  = o(T)$ where $R_i(T)$ and $R_A(T)$ are the total discounted reward over $T$ episodes and the expectations are with respect the randomization due to reward and transition function of $\mdp_{N+1}$ and the randomization in the algorithm (i.e. $\expc_A \equiv \expc_{E3T}$). 

We consider the edge case when $\delta = 0$, and so no arms are ever eliminated and all $\rho_j \in C_T$. In this case, for any $\rho_j$,  
\begin{equation}\label{eq_RjExpand}
\expc_{R_{N+1},P_{N+1}}[\expc_{A}[R_j(T)]]= \expc_{R_{N+1},P_{N+1}}[R_j(T)] = T\times V^{\rho_j}_{N+1}, 
\end{equation}
which is $T$ times the first term in the left hand side of the bound. Now, for any $\rho_j$
\begin{align}\label{eq_RAexpand}
\nonumber
\expc_{R_{N+1},P_{N+1}}\expc_{A}[R_A(T)]
& = \expc_{R_{N+1},P_{N+1}} \Bigg ( P_{E3T}(\rho_j\in C_T) 
\expc_{E3T}  \left [ \sum_{t=1}^T \bar{x}_{i_t}(t)   \bigg | \rho_j \in C_T \right] \\
\nonumber
& \;\;\; + P_{E3T}(\rho_j \not\in C_T) 
\expc_{E3T}  \left [ \sum_{t=1}^T \bar{x}_{i_t}(t)   \bigg | \rho_j \not\in C_T \right]
\Bigg )\\
& = \expc_{R_{N+1},P_{N+1}} \expc_{E3T}  \left [ \sum_{t=1}^T \bar{x}_{i_t}(t)   \bigg | \rho_j \in C_T \right]
\end{align}
By the first part of Theorem \ref{thm_EXPBound},
\begin{equation}
\nonumber
V^{\rho_j}_{N+1} - \expc_{R_{N+1},P_{N+1}}\expc_{A}[R_A(T)] \leq g(c,T) = o(T)
\end{equation}
which is what is required. Now recall from discussion of Definition \ref{def_polreuse} above that satisfying the regret requirements imply that the algorithm avoids negative transfer and transfers when possible. Hence, the theorem shows that EXP-3-Transfer does indeed avoid negative transfer and transfers when possible. 

When we move $\delta$ away from $0$, we trade off adherence to the requirement in Definition \ref{def_polreuse} for practical performance. Indeed, while conducting the experiments reported in Section \ref{sec_exp}, we observed that setting $\delta$ to a suitably low but non-zero value greatly improves performance. However theoretical guarantees are not completely lost in this instance, because for the source policies that are eliminated, part 2 of the theorem says that there is at least one source policy which is, with high probability strictly better than the other policy.


We end this section with a brief discussion of the difference between EXP-3 and EXP-3-Transfer. The primary difference is that we eliminate policies/arms to improve practical performance of the algorithm. This is not possible in EXP-3 because, unlike in our case, it is not assumed that some of the arms have i.i.d. payoffs and so their means cannot be estimated from observations. This leads to a slightly different analysis and improved constant $\frac{|C_T|}{\sqrt{c+1}} + \sqrt{c+1}$ for EXP-3-Transfer and $2$ for EXP-3. The practical difference we observe was actually considerable, and without it we were not able to outperform PPR in our experiments (see below). 

\section{The Clustering Approach To Task Encoding}\label{sec_clustEnc}

\newcommand{\repm}{RM}

Recall from Section \ref{sec_intro} that in our problem, we assume that we have a repository of MDPs and their optimal policies and we wish to use the optimal policies as the source policies for EXP-3-Transfer. Now we have the following dilemma. As the number of source policies increase, we spend more time evaluating them and accruing sub-optimal reward in the process. On the other hand, if we choose a subset of policies from the repository, then we risk leaving out policies that may have been very useful in the new task. So, essentially we are faced with the problem trading off the size and diversity of the set of source policies. 

In this section, we concretely define this tradeoff problem as optimizing a cost function, and in Section \ref{sec_findclust} we describe an algorithm to optimize it. The cost function is defined over the set of {\em clusterings}/partitions of the repository, where each clustering is used to choose a particular subset of policies in the repository as the source policy set. We first show how to choose this source set given a clustering, and then define the cost function that measures how well the source set achieves the tradeoff mentioned above. This cost function then helps us choose the optimal clustering and hence the optimal source policies.


\subsection{Constructing Source Policies Given Clustering}

Given the $N$ MDPs $\mdp_1,\mdp_2\cdots,\mdp_N$ in our repository, let $\{A_1,A_2,\cdots,A_c\} \eqdef \bfA$ be a particular clustering -- that is $\cup_i A_i=$ $\{\mdp_1,\mdp_2\cdots,\mdp_N\}$ and $A_i \cap A_j = \emptyset$. The set of clusterings may vary both by elements of each $A_i$, and the number $c$ of {\em cells} $A_i$.  Given a particular clustering $\bfA$, we will obtain the $c$ source policies by choosing one policy per $A_i$. To choose this policy, we define a distance function that measures how similar two MDPs are in terms of their optimal policies.

Let $\mdp_i$ and $\mdp_j$ be any two MDPs in our repository (the distance definition generalizes to any two MDPs defined on the same state and action space, but with different transition and reward functions). Recall that $V_i^\pi$ and $V_j^\pi$ denotes the value of policy $\pi$ when executed, respectively, in $\mdp_i$ and $\mdp_j$ at the initial state (this can be generalized to different initial states and/or distribution over initial states very easily -- but we consider the same initial state setting to keep the presentation simple). Letting the optimal policies for the two MDPs be $\pi_i^*$, $\pi_j^*$, we define the {\em optimal policy similarity} between two MDPs as follows.
\begin{equation}\label{eq_mdpDDef}
d_V(\mdp_i,\mdp_j) \eqdef \max \{  V_i^* - V_i^{\pi_j^*}, V_j^* - V_j^{\pi^*_i} \}
\end{equation}
So this distance upper bounds how much we lose if we use the optimal policy of one MDP in the other -- in particular we have the following lemma by construction.
\begin{lemma}\label{lemma_dvOpt}
If $d_V(\mdp_i,\mdp_j) \leq \eps$, then the optimal policy of $\mdp_i$ is at least $\eps$-optimal in $\mdp_j$ and vice versa.
\end{lemma}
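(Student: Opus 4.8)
The plan is to derive the statement directly from the definition of $d_V$, since the lemma is essentially a restatement of that definition together with the definition of $\eps$-optimality given in Section~\ref{sec_prelim}. First I would recall that, by~(\ref{eq_mdpDDef}),
\begin{equation*}
d_V(\mdp_1,\mdp_2) = \max \{  V_1^* - V_1^{\pi^*_2},\ V_2^* - V_2^{\pi^*_1} \},
\end{equation*}
where $V_i^\pi$ denotes the value of $\pi$ in $\mdp_i$ evaluated at the common initial state $s^\circ$. Since a maximum of finitely many quantities is at most $\eps$ if and only if every one of those quantities is at most $\eps$, the hypothesis $d_V(\mdp_1,\mdp_2)\le\eps$ yields the two scalar inequalities $V_1^* - V_1^{\pi^*_2}\le\eps$ and $V_2^* - V_2^{\pi^*_1}\le\eps$ simultaneously.

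Next I would translate each of these inequalities back into the language of regret and $\eps$-optimality. Recall from Section~\ref{sec_prelim} that the regret of a policy $\pi$ in an MDP is $V^*(s^\circ)-V^\pi(s^\circ)$, and that $\pi$ is called $\eps$-optimal in that MDP precisely when this regret is at most $\eps$. Applying this with the MDP taken to be $\mdp_2$ and the policy taken to be $\pi_1^*$ (the optimal policy of $\mdp_1$), the inequality $V_2^* - V_2^{\pi^*_1}\le\eps$ says exactly that $\pi_1^*$ has regret at most $\eps$ in $\mdp_2$, i.e.\ $\pi_1^*$ is $\eps$-optimal in $\mdp_2$. Symmetrically, the inequality $V_1^* - V_1^{\pi^*_2}\le\eps$ says that $\pi_2^*$ is $\eps$-optimal in $\mdp_1$, which is the ``vice versa'' half of the claim.

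There is essentially no obstacle here: the result holds ``by construction,'' as the surrounding text already notes, and the only points requiring a word of care are bookkeeping ones — namely that $V_i^\pi$ is the value at the designated initial state $s^\circ$ (so that it matches the regret convention of Section~\ref{sec_prelim}), and, if one wanted the statement for an initial-state distribution rather than a single state, that the same argument goes through verbatim after replacing $V_i^\pi(s^\circ)$ by the corresponding expectation over initial states. I would therefore present the proof in two lines: extract the two coordinatewise bounds from the $\max$, then read off $\eps$-optimality in each direction from the definition of regret.
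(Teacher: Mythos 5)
Your proof is correct and matches the paper's own treatment: the paper states the lemma holds ``by construction'' from the definition of $d_V$, and your two-step argument (extract the two coordinatewise bounds from the $\max$, then read off $\eps$-optimality from the regret definition) is exactly that construction made explicit.
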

The definition of $d_V$ is motivated by the fact that the  goal of policy reuse is to use the optimal policy of one MDP in another. We now define the source policies given a clustering $\bfA$:
\begin{definition}\label{def_srcPol}
Given a clustering $\bfA = \{A_1,A_2,\cdots,A_c\}$, define for each $A_i$ the MDP $\mdp^i$ as follows:
\begin{equation}\label{eq_landmark_dV}
\mdp^i \eqdef \argmin_{\mdp \in A_i} \max_{\mdp' \in A_i} d_V(\mdp,\mdp')
\end{equation}
(ties broken in terms of order in the sequence $\mdp_1,\mdp_2,\cdots,\mdp_N$). 
Then the source policies corresponding to $\bfA$ are $\{\rho_1,\rho_2,\cdots,\rho_c\}$ where $\rho_i$ corresponds to $A_i$ and is the optimal policy of $\mdp^i$.
\end{definition}
The definition is illustrated in Figure \ref{fig_clustIll} and it means the following.  $\mdp^i$ is the element of $A_i$ that minimizes the maximum $d_V$ distance to the other elements of the cluster, and hence, by Lemma \ref{lemma_dvOpt}, is in a worst case sense the best representative of the MDPs in cluster $A_i$. By choosing the optimal policy of $\mdp^i$ as a source policy $\rho_i$, we ensure have the best worst-case representation of the MDPs in each cluster $A_i$. We make this final statement exact in the next section, in particular in Lemma \ref{lemma_avgQuant} when we bound the regret of EXP-3-Transfer with respect to optimal policy of {\em any} MDP in the repository (rather than just the source policies as done in Section \ref{sec_polReuse}).

\begin{figure}[h]
\begin{center}
\includegraphics[scale=0.5]{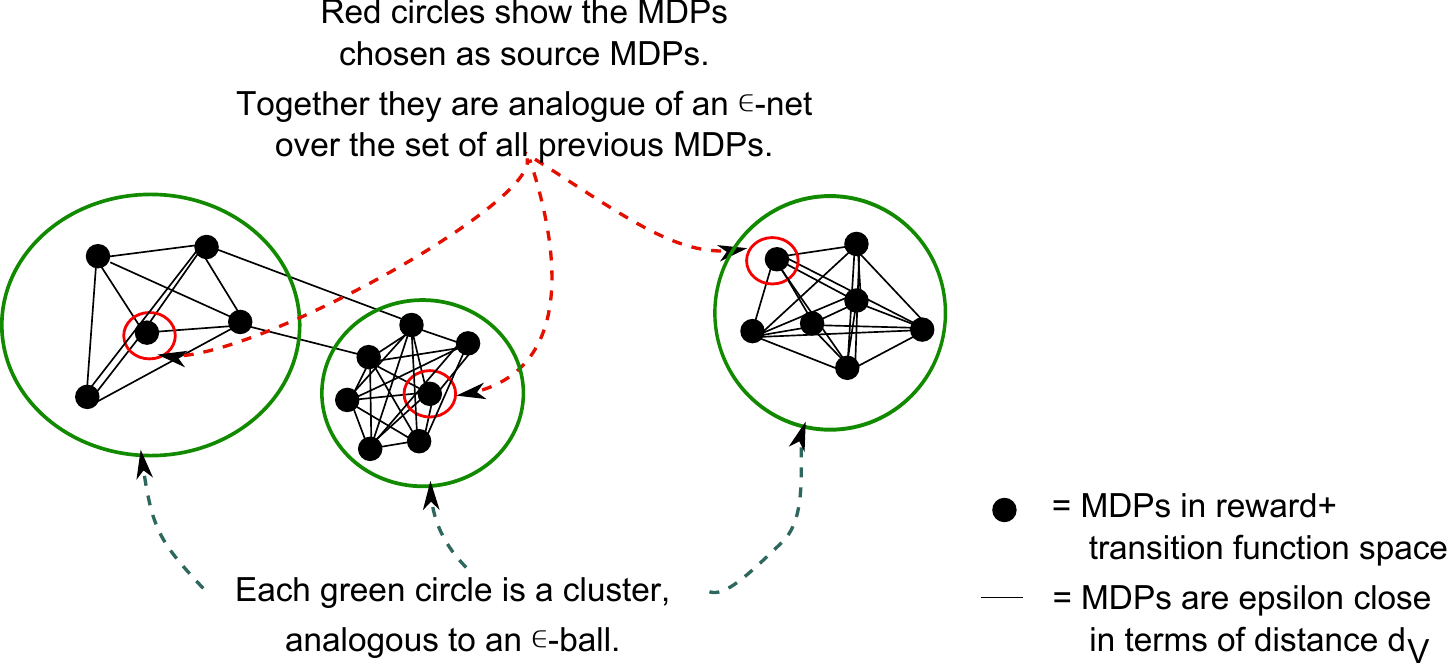}
\end{center}
\caption{This figure sketches our basic approach to deriving the source policies. The black circles represent our repository of $N$ MDPs. The goal is to put them into $c$ clusters and then derive $c$ source policies from the $c$ source tasks. The figure illustrates the idea for $c = 3$. Each cluster is an analogue of an  $\eps$-ball in a metric space according to $d_V$ (\ref{eq_mdpDDef}). The source MDPs form an analogue  of an $\eps$-net  over the set of previous MDPs with respect to $d_V$. The function $d_V$ measures how well the policy of one MDP performs in the other -- and hence the source policies being an $\eps$-net implies that, given any MDP in the repository, there is at least one source policy which has performance that is `$\eps$-close' to the performance of the optimal policy of the previous MDP. }\label{fig_clustIll}
\end{figure}

\subsection{Cost Function of a Clustering}\label{sec_mdpN1prev}

Let $\bfA = \{A_1,A_2,\cdots,A_c\}$ be a clustering with the $c$ source policies $\rho_1,\rho_2,\cdots,\rho_c$ chosen as defined in the previous subsection. Our goal in this section will be to quantify the regret of EXP-3-Transfer with respect to any $\pi^*_k$ when executed on the new task $\mdp_{N+1}$ for $T$ episodes, where $\pi^*_k$ is the optimal policy of $\mdp_k$ in the repository $\mdp_1,\mdp_2,\cdots,\mdp_N$. We will consider the case where EXP-3-Transfer is run with source policies $\rho_1,\rho_2,\cdots,\rho_c$, and hence this regret will quantify how good the clustering $\bfA$ is for transfer -- in particular, the lower the regret, the more preferable the clustering. As a result, the regret will serve as the cost function for choosing a clustering to derive the source policies. Before proceeding, note that in Theorem \ref{thm_EXPBound}, we computed the regret between EXP-3-Transfer and $\rho_i$ -- in this section we extend those results to derive the regret with respect to $\pi^*_k$.


To begin, let the diameter of a cluster $A_i$, and the mean diameter of the clustering $\bfA$  be:
\begin{equation}\label{eq_bareps}
\eps_i \eqdef  \max_{\mdp \in A_i} d_V(\mdp^i, \mdp), \;\;\; \bar{\eps} \eqdef \frac{1}{N} \sum_{i=1}^c |A_i|\eps_i
\end{equation}
So $\bar{\eps}$ is the average diameter of the clusters, weighted by the size of the clusters. Therefore, $\bar{\eps}$ gives the average distance from a cluster center $\mdp^i$ to an element of the cluster. As such, $\bar{\eps}$ measures how much of the diversity in the repository is captured by the chosen clusters.  The smaller $\bar{\eps}$, the smaller the average $d_V$ distance between the cluster centers $\rho_i$ and the MDPs in $A_i$, and hence more of the diversity of policies in the repository is captured by the $\rho_i$. 

\footnotetext[2]{A worst case quantification is also possible. However, the assumptions underlying the worst case seems too weak, and the cost function correspondingly not sufficiently discriminating -- i.e. it identifies MDPs that are intuitively dissimilar as being similar. In particular in our experiments, we found this cost function to not give us the intuitive clusters. We discuss this function further in appendix \ref{app_worstCase}.}


Using these, we can give an average case quantification\footnotemark of the performance of a clustering $\bfA$ when used to generate the source policies and used in EXP-3-Transfer. Let the new MDP $\mdp_{N+1}$ have transition and reward functions $R_{N+1}$ and $P_{N+1}$. Define $K_k = \max_{s,a}$ $|R_k(s,a) - R_{N+1}(s,a)|$ and $K'_k = \max_{s,a} |P_k(.|s,a) - P_{N+1}(.|s,a)|$ where $R_k$ and $P_k$ are the reward and transition functions for MDP $\mdp_k$. Additionally let
\begin{equation}\label{eq_Kidef}
K(k) \eqdef \frac{K_k + \gamma K'_k R_{max}}{(1-\gamma)^2} 
\end{equation}
We have the following result which derives from Theorem \ref{thm_EXPBound}.
\begin{lemma}\label{lemma_avgQuant}
If EXP-3-Transfer is run with source policies derived from $\bfA$ using definition \ref{def_srcPol} with $\beta$ set as in Theorem \ref{thm_EXPBound},
then for $\pi^*_k \in A_i$, such that EXP-3-Transfer did not eliminate $\rho_i$ by step $T$, the following is true: 
\begin{equation}\label{eq_avgQuant1}
V^{\pi^*_k}_{N+1} - \expc_{R,P}\left [\expc_{E3T}\left [ \sum_{t=1}^T \bar{x}_{i_t}(t) \bigg |\rho_i \in C_T \right] \right ] \leq g(c) + \eps_i + 2K(k)
\end{equation}
Additionally, with probability $1-\delta$, with respect to randomization due to the target MDP $\mdp_{N+1}$, for each $\pi^*_k \in A_i$ such that $\rho_i$ was eliminated, there exist $\rho_{i'}$ such that 
\begin{equation}\nonumber
V_{N+1}^{\pi^*_k} \leq V_{N+1}^{\rho_{i'}} + \eps_i + 2K(k)
\end{equation}
\end{lemma}
The proof is in Appendix \ref{app_proofs}.

We now use the lemma to derive the cost function. First, in the limiting case of $\delta = 1$, none of the policies $\rho_i$ are eliminated and in this case the bound (\ref{eq_avgQuant1}) applies to all the MDPs in the repository. In the lemma we assume that each MDP $\mdp_k$ in the repository is equally likely to be the one with the minimum $K(k)$. Hence, taking the average of (\ref{eq_avgQuant1}) over all the MDPs, we get the upper bound of the average regret with respect to all the optimal policies $\pi^*_k$ of MDPs in the repository:
\begin{equation}\label{eq_avgQuant}
g(c) + \bar{\eps}  + 2 \frac{1}{N}\sum_i K(k)
\end{equation}
$K(k)$ depends on the repository of $\mdp_k$s which we do not control or make any assumptions about. Hence, we associate with each cluster $\bfA$ the parameters $(c,\bar{\eps})$, defined at the beginning of this section, and use (\ref{eq_avgQuant}) to define the following cost:
\begin{definition}\label{def_cost2}
The cost of a clustering $\bfA$ with parameters $(c,\bar{\eps})$ is defined to be:
\begin{equation}
cost(\bfA) \eqdef g(c) + \bar{\eps}
\end{equation}
\qed
\end{definition}



\subsection{Hardness of finding the Optimal Clustering}\label{sec_clustHard}

In this section we introduce the problem of finding the clustering that minimizes $cost()$. We argue that optimizing the cost function is hard, which sets the stage for developing our discrete optimization algorithm in the next section. Specifically we will show that it is hard to optimize an upper bound $cost_{m}()$ of $cost()$ where $cost()$ was defined in Definition \ref{def_cost2}. To that end, define the average max-diameter of a clustering $\bfA$ to be:
\begin{equation}\label{eq_avgMaxdiam}
\bar{\eps}_m = \frac{1}{N}\sum_i \sum_{\mdp \in A_i} \max_{\mdp' \in A_i} d_V(\mdp,\mdp')
\end{equation}
Now define, 
\begin{definition}
Define $cost_{m}(\bfA) \eqdef g(c)+ \bar{\eps}_m$.
\end{definition}
We have the following relationships.
\begin{lemma}
The parameter $\bar{\eps}_m$ of $\bfA$ is an upper bound on the parameter $\bar{\eps}$ of $\bfA$ defined in (\ref{eq_bareps}). Furthermore, $cost_{m}(\bfA)$ $\geq$ $cost(\bfA)$ for all clusterings $\bfA$. 
\end{lemma}
\begin{proof}
This follows directly from the definition of $\bar{\eps}$ -- in particular, 
$\bar{\eps}_m$ upper bounds $\bar{\eps}$  because for each $\mdp_j \in A_i$, $\max_{\mdp' \in A_i}  d_V(\mdp_j,\mdp')$ $\geq \min_\mdp \max_{\mdp'} d_V(\mdp,\mdp')$ $= d_V(\mdp^i,\mdp)$. The second part of the lemma now follows by the definitions of  $cost()$ and $cost_m()$.
\end{proof}

We reduce the minimum clique-cover problem \citep{karp:1972} to the problem of finding the clustering that minimizes  $cost_{m}$ and hence establish that it is NP-complete. We start by describing the clique cover problem. Let $G  = (V,E)$ be a graph where $V$ is the set of vertices and $E$ is the set of edges. A subset $V' \subset V$ is a clique if for any $v,v' \in V$, there is an edge $(v,v') \in E$. The minimum clique cover problem is finding a partition $V_1,V_2,\cdots,V_n$ of $V$ such that each $V_i$ is a clique and $n$ is minimum -- that is there exists no other partition with $V'_1,V'_2,\cdots,V'_m$ of $V$ such that each $V'_i$ is a clique and $m < n$. We have the following theorem for $cost_m$.
\begin{theorem}\label{thm_npcomplete}
Given a graph $G = (V,E)$, in time polynomial in $|V|$ and $|E|$, we can reduce the minimum clique cover problem for $G$ to finding the clustering $\bfA^*$ of some set of MDPs $\mdp_1,\mdp_2$, $\cdots$, $\mdp_{|V|}$, with all $\mdp_i$ defined on the same state and action spaces, where $\bfA^* \eqdef \argmin_{\bfA \in\clstngs}$ $cost_{m}(\bfA)$.
\end{theorem}
The proof is given in Appendix \ref{app_proofs}. Since the clique cover problem is NP-complete, we immediately have the following corollary. 
\begin{corollary}
Finding the clustering minimizing the upper bound $cost_{m}()$ of $cost()$ is NP-complete. 
\end{corollary}
Unfortunately, we do not yet have a proof that minimizing $cost()$ is hard -- but the fact that minimizing the upper bound $cost_m()$ is hard, leads us to conjecture that minimizing $cost()$ is hard as well. For this reason, to optimize the $cost()$ function we need a discrete optimization function, which we will develop in the next section.

We end this section by contrasting our approach to previous approaches to clustering MDPs in TLRL (for instance \citep{wilson_fern_ray_tadepalli:2007}). In prior work, MDPs are typically characterized by a finite number of real valued parameters, and the distance between parameters determine how similar the MDPs are. These MDPs can then be clustered by, for instance, putting the non-parametric Dirichlet process prior over the parameters of all MDPs, and then using Monte Carlo inference methods to find the clustering that maximizes the posterior probability. 

Since our notion of similarity between MDPs is based on policies, to apply this approach to our case, we need (1) a relatively compact parametric representation of optimal policies, and (2) a metric that relates the policy-parameters to values of policies and (3) that optimal policies uniquely characterize MDPs. (1) seems difficult for interesting policies, and (2) seems reasonable only for `linear' domains -- i.e. domains where a small change in policy results in a corresponding small change in its value. (3) runs completely counter to one of our main motivations for using policy based clustering, which is that different MDPs might have identical or  near-identical optimal policies. So given all these, we were motivated to construct a clustering algorithm adapted to our policy based clustering of MDPS. We also compare our method and the method of \cite{wilson_fern_ray_tadepalli:2007} in Section \ref{sec_colGrid}, where in a simplified version of their domain, our approach recovers better clusterings.

\section{Finding the Optimal Clustering}\label{sec_findclust}

In Section \ref{sec_clustEnc} we defined a cost function that measures how well a particular clustering of the repository of MDPs trades off size and diversity of the set of source policies that are obtained from the clustering. In Section \ref{sec_clustHard} we argued that optimizing $cost()$ is hard. This implies that we need a discrete global optimization algorithm for finding the optimal clustering. So in Section \ref{sec_optProb}, we introduce the problem of discrete optimization and our approach to solving it. Then in Sections \ref{sec_discOpt} to \ref {sec_optparam} we derive and analyze our general optimization algorithm. Finally, in Section \ref{sec_srchClust}, we apply the algorithm to the problem of finding the optimal clustering.

\subsection{Global Optimization With Stochastic Search}\label{sec_optProb}

Our goal is to solve the discrete global optimization problem of computing $\argmin_{A \in \clstngs} cost(A)$, where $\clstngs$ is the set of all possible clusterings of $\mdp_i$s.  Algorithms for discrete optimization problems tend to fall into specific classes, appropriate for the problem at hand. We present a {\em stochastic search} algorithm for our problem (see Figure \ref{fig_stochOpt}).

\begin{figure}[h]
\begin{center}
\includegraphics[scale=0.3]{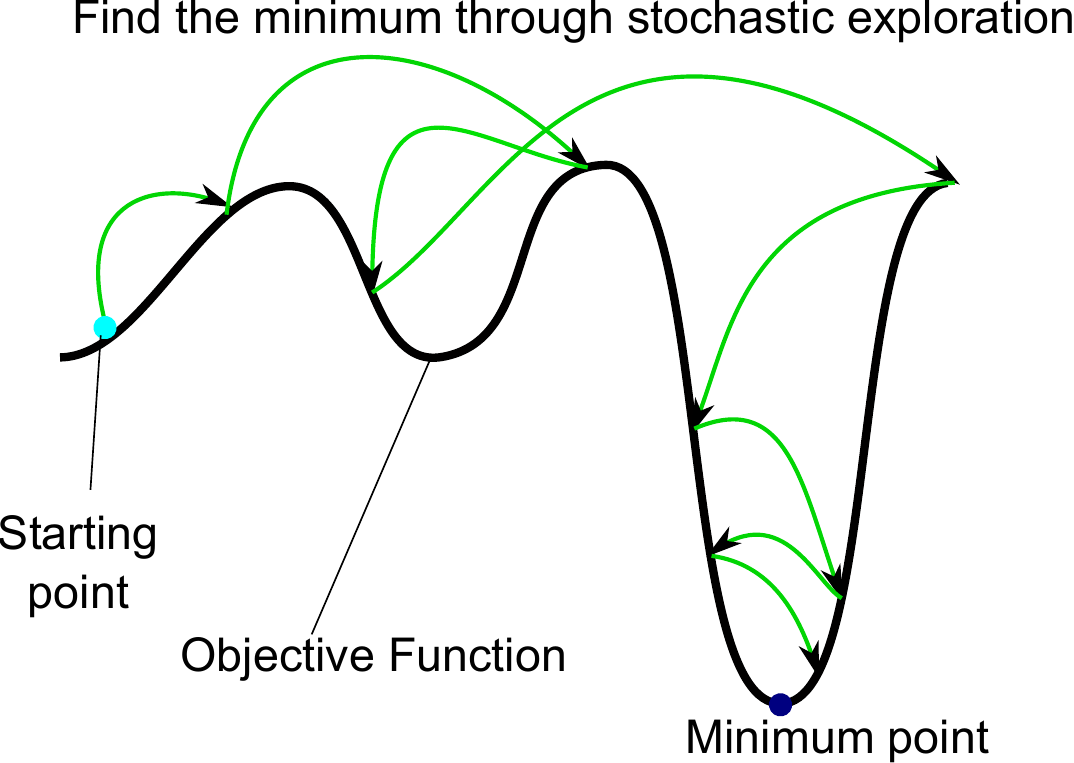}
\end{center}
\caption{This figure illustrates how stochastic search may be used solve function optimization (minimization in this case). The thick black curve is the {\em objective function} to be optimized/minimized and the goal is the find the point $x$ at which the curve the attains the minimum value (the blue circle). A stochastic search algorithm starts at a particular point (the cyan circle in this figure) and each time step the it jumps (shown by thin green arrows) to a new candidate point, chosen according to some stochastic strategy. The arrows shown are one possible run of the stochastic search algorithm, with different runs likely going through different sets of points. The candidate may move towards and away from the minimum point. This kind of optimization is necessary when the objective function does not have nice properties (like convexity) and standard algorithms (like gradient descent) are not applicable.}\label{fig_stochOpt}
\end{figure}

Our basic strategy is to construct a distribution over $\clstngs$, that concentrates around the optimum and around clusterings with low cost. The concentration property implies that if we repeatedly sample from this distribution, we will find the optimum or a good/low cost clustering with high probability. However, in general, exact sampling from such distributions is difficult, and so our algorithm samples approximately from this distribution using a Markov chain Monte Carlo approach -- see  \citep{robert_casella:2005} for a comprehensive introduction to MCMC and Metropolis Hastings Markov chains (MH chain in short) that we use. The use of MCMC turns our algorithm into a stochastic search algorithm.

\begin{figure}[h]
\begin{center}
\includegraphics[scale=0.3]{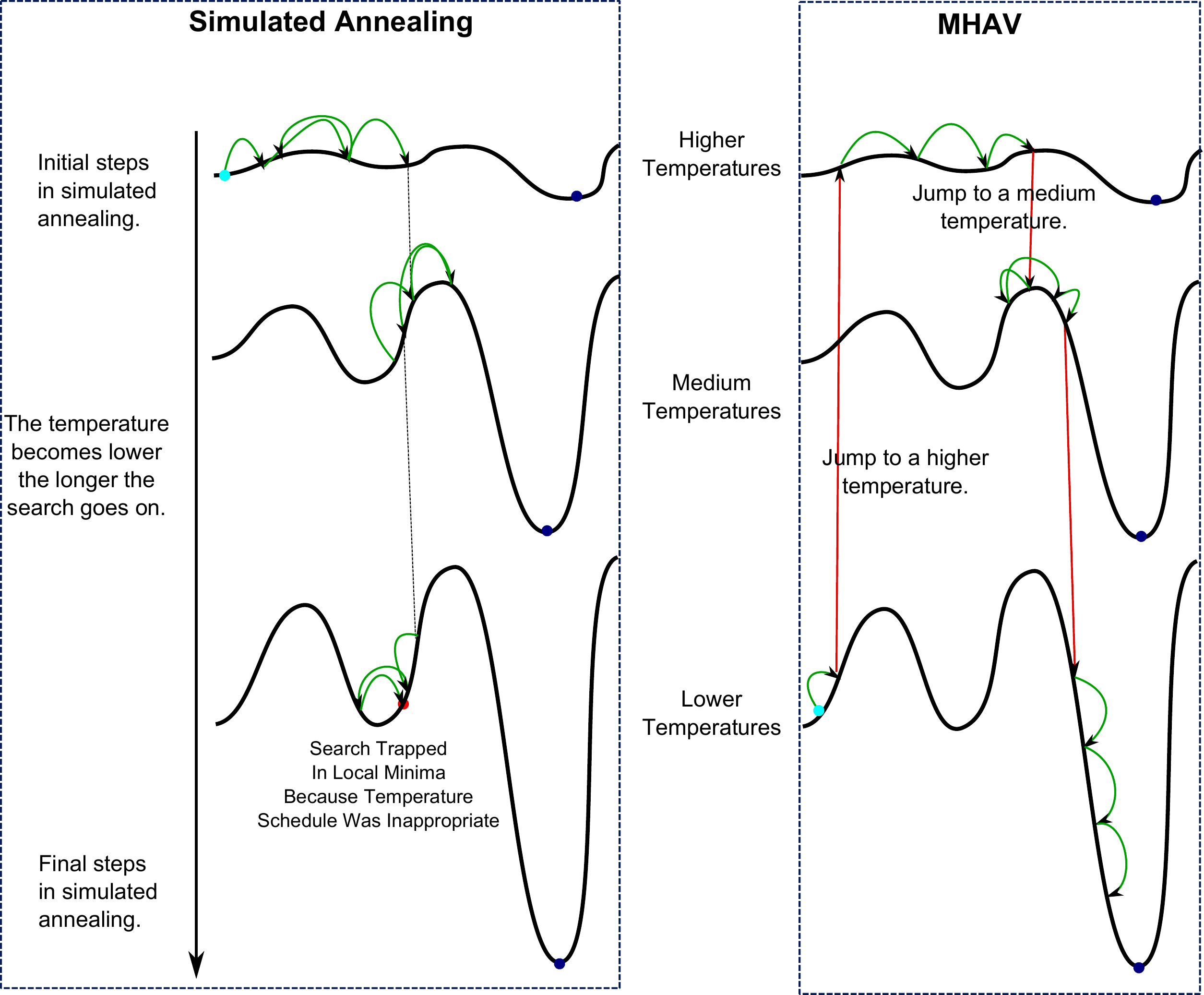}
\end{center}
\caption{This figure compares simulated annealing and our MHAV  algorithm for finding minimum of an objective function $f$. The green arrows show the search steps taken by the algorithms. Simulated annealing decreases a {\em temperature} parameter over time, which effectively results in the function surface becoming effectively less and less flat. The figure illustrates a search that has failed due to incorrect temperature decrease schedule. Unlike simulated annealing, MHAV jumps between temperature values stochastically. The green arrows are search steps for a fixed temperature, while the red arrows are jumps between temperatures. See beginning of Section \ref{sec_optProb} for a full description.}\label{fig_sim_mhav}
\end{figure}


The resulting algorithm can be understood as simulated annealing \citep{kirkpatrick_gelatt_vecchi:1983} but with {\em stochastic temperature changes}. Figure \ref{fig_sim_mhav} describes simulated annealing qualitatively and also contrasts it with our algorithm, MHAV. Simulated annealing is a stochastic search algorithm for optimizing an objective function $f$, where the search rule at step $t$ jumps from point $y$ to $y'$ with probability $P_t(y,y') = q_{y,y'} \exp[-\frac{1}{z(t)}\max(0,f(y')-f(y)]$. $q_{y,y'}$ is a candidate distribution (probability with which $y'$ is proposed as the next point given current point $x$) and the remaining term depends on the improvement $f(y')-f(y)$ and the {\em temperature} $z(t)$ (see below). Because of the $\max$ in the improvement term, jumps to lower $f$-valued (i.e. better) points $y'$ succeed with probability 1, while jumps to higher $f$-valued (i.e. worse) points $y'$ succeed with probability depending on how much higher (worse) the point is. 

The temperature $z(t)$ is a chief feature of simulated annealing, it is user defined and determines practical success of the algorithm. It is a sequence decreasing in $t$ and in essence changes how easy it is to explore the objective function surface -- with higher temperature allowing the search to travel over longer distances and in effect making the objective function flatter, and lower temperature restricting the search to points local to the current point, in effect making the objective function steeper  (Figure \ref{fig_sim_mhav} illustrates three qualitatively different temperatures). The temperature sequence $z(t)$ needs to be set very carefully so that during the higher temperature phase the search travels longer distances to the places where the minimum point is, and then during the lower temperature phases, the search moves in a `local neighborhood' of the minimum point and and finds the point itself or a point close to it. Clearly, this is a fairly difficult problem and requires some understanding of the objective function. 

Figure \ref{fig_sim_mhav} illustrates our algorithm, MHAV. The algorithm is similar to simulated annealing in that the search rule is such that jumps to better points succeed with probability 1, while jumps to worse points succeed with probability depending on how much worse the point is. It also uses temperatures to modify the flatness of the function (in the algorithm description in Section \ref{sec_discOpt} the temperature parameter is represented by $\lambda$, with high temperature corresponding to low $\lambda$). However, unlike simulated annealing, MHAV does not use a fixed temperature schedule but moves between different temperatures stochastically (Figure \ref{fig_sim_mhav} illustrates the algorithm jumping between three qualitatively different temperatures). In essence MHAV searches through the augmented space $(\lambda,y)$ and the search rule has the form $P_t[(\lambda,y),(\lambda',y')]] \approx q_{(\lambda,y),(\lambda',y')} \exp\{\max[0, -(f(y')\ln\lambda' - f(y)\ln\lambda]\}$. That is, MHAV searches over both the temperature and the solution space simultaneously and so avoids the very difficult problem of needing to set the temperature schedule. Convergence is still guaranteed by convergence of the Metropolis-Hastings algorithm (see Section \ref{sec_MHC}). Our proof of the convergence, and the speed of convergence of the algorithm is simple compared to simulated annealing (see, for instance, \citep{locatelli:2000} for contrast).

We present our algorithm in steps. We first cast the discrete optimization problem as the problem of sampling from a specific distribution derived from the objective  function (Section \ref{sec_discOpt}), then we present the general Metropolis-Hastings scheme for approximate sampling from a distribution (Section \ref{sec_MHC}), and after that we present and analyze our adaptation of Metropolis-Hastings for discrete optimization (Sections \ref{sec_mhav}, \ref{sec_MHAVconv} and \ref{sec_optparam}), which we call MHAV (Metropolis-Hastings with a auxiliary variable). The MHAV algorithm is a general optimization algorithm, which we then adapt to the problem of searching for the optimal clustering (Section \ref{sec_srchClust}). 

\subsection{Optimization as Sampling}\label{sec_discOpt}

In this section, we show how to convert the problem of global optimization to the problem of sampling from a distribution. The method we discuss was inspired by the basic idea behind simulated annealing \citep{kirkpatrick_gelatt_vecchi:1983}. Assume that our goal is to minimize a cost function $f$ defined over some finite set $Y$. In particular assume that there is a subset $\hat{Y} \subset Y$ for which $y\in \hat{Y}$ has acceptable cost $f(y)$. Let $\Lambda = \{\lambda_1,\lambda_2,\cdots,\lambda_n\}$, $\lambda_i < \lambda_{i+1}$ such that $\exists \hat{\Lambda} \subset \Lambda$ which satisfies 
\begin{equation}
\sum_{\lambda \in \hat{\Lambda}, y \in \hat{Y}} \lambda^{-f(y)} \geq \theta > 0 
\end{equation}
We now define the distribution 
\begin{equation}\label{eq_lambdaPi}
\bar{\Pi}(\lambda,y) \eqdef \lambda^{-f(y)} Z^{-1}
\end{equation}	
where $Z \eqdef \sum_{y,\lambda} \lambda^{-f(y)}$ is the normalization term. Given the existence of $\hat{\Lambda}$, if we draw repeatedly from $\bar{\Pi}$, then after $t$ draws, with probability at least
\begin{equation}\label{eq_sampleDist}
1- (1-\theta)^{t}
\end{equation}
we will draw an element $(\lambda,y)$ where $y \in \hat{Y}$. Since $1-\theta$ $< 1$, the probability that we draw an element from $y \in \hat{Y}$ goes to $1$ at rate $(1-\theta)^t$ (hence, the closer $\theta$ is to $1$, the faster the convergence rate will be). So this solves the solves the discrete global optimization problem of optimizing the function $f$. 

The reason we introduce the parameter $\lambda$ rather than just sampling from  $Pr(y) \eqdef \frac{f(y)}{\sum_{y'}f(y')}$ is because this distribution is typically intractable to sample from. Indeed, for our MDP clustering problem, the function $f$ is $cost()$ and the set $Y$ is the set of clusterings $\clstngs$. By the the hardness result for $cost()$ in Appendix \ref{app_proofs}, we conjecture that sampling from $Pr(y)$ is in fact intractable. Hence, we need to use approximate sampling methods. 

Use of approximate sampling turns sampling from $Pr(y)$ into a stochastic search over the objective function $f()$ to find the global minima. As detailed in the previous subsection, one powerful way to augment such search methods is to modify the objective function by introducing temperatures to change the flatness of the function, and the $\lambda$ parameter serves precisely this purpose. In particular, when $\lambda$ is high (corresponding to `low temperature'), the modified objective function $\lambda^{-f(y)}$ is `steep', and when $\lambda$ is low (corresponding to 'high temperature') $\lambda^{-f(y)}$ is `flat'. 

Given this motivation, in Section \ref{sec_MHC}, we present the Metropolis-Hastings (MH) algorithm that may be used to approximately sample from arbitrary distributions, and then in Section \ref{sec_mhav} we adapt the MH algorithm to sample from $\bar{\Pi}$.

\subsection{Sampling Using Metropolis-Hastings Chains}\label{sec_MHC}

\newcommand{\msttspc}{\mathcal{X}}
\newcommand{\bfX}{\mathbf{X}}

In this subsection we describe a standard method to approximately sample from a distribution $\Pi$ over a large finite space $\msttspc$. In the next section we will use this method to sample from $\bar{\Pi}$ and complete our global optimization method. We use upper-case Roman letters for random variables and lower-case letters to refer to their realized values. In the following, we use the theory of Markov chains as found in (for instance) \citep{levin_peres_wilmer:2009}. A {\em Markov chain} over a (finite) state-space $\msttspc$ is stochastic process $X_n$ taking values in $\msttspc$ such that $Pr(X_n=x| x_0,x_1,x_2,\cdots,x_{n-1})$ $=P_n(x|x_{n-1})$. The distribution $P_n(\cdot|\cdot)$ is called the {\em transition kernel} for the chain, and can be represented by a $|\msttspc|\times |\msttspc|$ matrix, also denoted by $P_n$, such that the entry $(x,y)$ is $P_n(y|x)$ (here we have identified each element of $\msttspc$ with an integer in $\{1,2,\cdots,|\msttspc|\}$ in some order). A Markov chain is said to be time-homogeneous if $P_n(x'| x) = P(x'|x)$, i.e. $P_n$ is independent of time $n$. We will only consider time-homogeneous chains. A distribution $\Pi$ over $\msttspc$ is said to be {\em stationary} for the chain with kernel $P$ if it satisfies:
\begin{equation}\label{eq_station}
\Pi(x') = \sum_{x} \Pi(x) P(x'|x) 
\end{equation}
That is if $P$ is stationary for $\Pi$, then, if we draw $x$ according to $\Pi(x)$, then choose $x'$ according to $P(x'|x)$, then the distribution over $x'$ will also be $\Pi(x')$.

Let $P^n(x'|x)$ be the probability that $X_n = x'$ given that $X_0 = x$, that is 
\begin{equation}\nonumber
P^n(x'|x) = \sum_{x_1,x_2,\cdots x_{n-1}} \prod_{i=0}^{n-1} P(x_{i+1}|x_{i}), \qquad \mbox{ where } x_0 = x, x_n = x'
\end{equation}
Then the chain $X_n$ (equivalently, the kernel $P(\cdot|\cdot)$) is said to be {\em irreducible} if for each $x,x' \in \msttspc$, $\exists n$ with  $P^n(x'|x) > 0$. It is called {\em aperiodic} if the set $\{n: P^n(x|x)>0\}$ has greatest common divisor $1$ -- that is there is no period to the set of time steps at which the chain returns to some state $x$, starting from $x$ itself. 
\begin{theorem}\label{thm_stationConv}
The following are true for any aperiodic and irreducible Markov chain with kernel $P$:
\begin{enumerate}
\item $P$ has a stationary distribution $\Pi$ and for any $y \in \msttspc$,
\begin{equation}
\lim_{n\rta \infty} \ltvnorm{P^n(\cdot|y) - \Pi(\cdot)} = 0
\end{equation}
where $\ltvnorm{P-P'} = \sup_{A\subset \msttspc}|P(A)-P'(A)|$ is the total variation distance between any two distributions $P,P'$ over $\msttspc$. 
\item If  $\Pi$ is stationary for $P$ and $\ltvnorm{P^n(\cdot|y) - \Pi(\cdot)} \leq k$ then $\ltvnorm{P^{n'}(\cdot|y) - \Pi(\cdot)} \leq k$ for all $n' > n$.
\end{enumerate}
\end{theorem}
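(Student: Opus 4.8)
The plan is to prove the two parts in order, since part (2) is essentially a corollary of the monotonicity of total variation distance under application of the transition kernel, which is also the engine behind the convergence in part (1). First I would establish the contraction property: for any two distributions $\mu,\nu$ over $\msttspc$, if $\mu P$ and $\nu P$ denote the distributions obtained after one step, then $\ltvnorm{\mu P - \nu P} \leq \ltvnorm{\mu - \nu}$. This follows because total variation distance can be written as $\ltvnorm{\mu-\nu} = \tfrac12 \lonorm{\mu - \nu}$ (identifying distributions with row vectors), and right-multiplication by the stochastic matrix $P$ is an $\ell_1$-contraction since each column sum of $P^\top$'s action is controlled by $\sum_{x'} P(x'|x) = 1$; a short triangle-inequality computation gives $\lonorm{(\mu-\nu)P} \leq \lonorm{\mu-\nu}$. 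With this in hand, part (2) is immediate: if $\Pi$ is stationary then $\Pi P = \Pi$, so taking $\mu = P_y(X_n = \cdot)$ and $\nu = \Pi$ gives $\ltvnorm{P_y(X_{n+1}=\cdot) - \Pi} = \ltvnorm{\mu P - \Pi P} \leq \ltvnorm{\mu - \Pi} \leq k$, and induction on $n'$ finishes it.

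For part (1), the main work is existence of a stationary distribution and the convergence claim. I would obtain existence of $\Pi$ either by invoking the Perron–Frobenius theorem for the irreducible nonnegative matrix $P$ (the Perron eigenvector, normalized, is a stationary distribution), or more self-containedly by a fixed-point argument: the map $\mu \mapsto \mu P$ is a continuous self-map of the compact convex simplex of distributions over the finite set $\msttspc$, so by Brouwer's fixed-point theorem it has a fixed point, which by definition satisfies \eqref{eq_station}. For the convergence statement, the standard route for an a-periodic irreducible chain on a finite state space is to show that some power $P^m$ has all strictly positive entries (this is exactly where a-periodicity plus irreducibility are both needed — irreducibility gives reachability, a-periodicity lets us make the reaching time uniform), and then $P^m$ satisfies a Doeblin-type minorization: there is $\theta > 0$ with $P^m(x'|x) \geq \theta\, \Pi(x')$ for all $x,x'$. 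This minorization upgrades the non-strict contraction above to a strict one: $\ltvnorm{\mu P^m - \Pi} \leq (1-\theta)\ltvnorm{\mu - \Pi}$. Iterating gives geometric decay $\ltvnorm{P_y(X_{km} = \cdot) - \Pi} \leq (1-\theta)^k \to 0$, and combining with the monotonicity from part (2) (which controls the intermediate steps $n$ not divisible by $m$) yields $\ltvnorm{P_y(X_n=\cdot) - \Pi} \to 0$ for the full sequence.

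I expect the main obstacle to be the step showing that a-periodicity and irreducibility together force $P^m > 0$ entrywise for some $m$, and extracting the uniform minorization constant $\theta$ from it. The reachability part is easy from irreducibility, but making the number of steps simultaneously valid for every pair $(x,x')$ requires a number-theoretic argument: for each state $x$ the return-time set $\{n : P^n(x|x) > 0\}$ is closed under addition and has gcd $1$, so by the Chicken McNugget / Schur argument it contains all sufficiently large integers; one then pads the various pairwise reaching times by self-loops of this form to get a common $m$. Everything else — the $\ell_1$-contraction, the Brouwer or Perron–Frobenius existence argument, and the geometric-decay iteration — is routine, so I would state those steps crisply and spend the detail on the $P^m > 0$ claim. (Since the paper is content to cite \cite{levin_peres_wilmer:2009} for Markov chain theory, an alternative acceptable plan is simply to note that this theorem is the standard Convergence Theorem for finite Markov chains together with the monotonicity of TV distance under the kernel, and give only the short contraction argument for part (2) in full.)
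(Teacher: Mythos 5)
Your proposal is correct: the paper itself proves this theorem simply by citing Theorem 4.9 and Lemma 4.12 of \citep{levin_peres_wilmer:2009}, and your sketch reconstructs exactly the standard argument behind those results (one-step TV contraction for monotonicity, and the aperiodicity-plus-irreducibility $\Rightarrow P^m>0$ Doeblin minorization with geometric decay for convergence), which is also the same decomposition $\psmh = (1-\theta)\bbfPi + \theta Q$ the paper reuses in its proof of Theorem \ref{thm_SCconv}. So you take essentially the same route, just spelled out rather than cited.
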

\begin{proof}
For the first part and second part, see (for instance), respectively, Theorem 4.9 and Lemma 4.12 in  \citep{levin_peres_wilmer:2009}). \qed
\end{proof}
This result is important because it can be used to approximately sample from a distribution $\Pi$ that is hard to sample from directly. The idea is to construct a Markov chain $X_n$ with stationary distribution $\Pi$. Theorem \ref{thm_stationConv} implies that if we simulate $X_n$ long enough, then eventually we will start sampling from $\Pi$. To that end, the Metropolis-Hastings chain (MH chain in short) gives a standard way to define such a chain given $\Pi$ as input (see  \cite{robert_casella:2005} for an in-depth introduction).

A MH chain is defined via an irreducible kernel $\phi(x'|x)$ over $\msttspc$ and an acceptance probability $\accpt_x(x')$. $\phi$ is problem dependent while $\accpt$ is defined as follows:
\begin{equation}\label{eq_accpt}
\accpt_x(x') \eqdef \min\left \{ 1, \frac{\phi(x|x')\Pi(x')}{\phi(x'|x)\Pi(x)} \right \}
\end{equation}
Given this, the MH chain has transition
\begin{equation}\label{eq_PMHDef}
P_{MH}(x'|x) = 
\begin{cases}
\phi(x'|x)\accpt_x(x'), &\mbox{ if } x \neq x' \\
1-\sum_{x'\neq x}\phi(x'|x)\accpt_x(x'), &\mbox{ otherwise }, 
\end{cases}
\end{equation}
It can be easily checked that $P_{MH}$ satisfies the {\em detailed balance} equation $\Pi(x)P_{MH}(x'|x) = \Pi(x')P_{MH}(x|x')$ which in turn is equivalent to (\ref{eq_station}) (which can be seen by summing both sides over $x'$). So $P_{MH}$ is a chain which if simulated long enough will sample from the {\em target distribution} $\Pi$. We will now derive a version of this chain to sample approximately from $\bar{\Pi}$

\subsection{Optimization using Metropolis Hastings With Auxiliary Variables (MHAV)}\label{sec_mhav}

In this section, we show how we may use the MH algorithm to sample from the distribution $\bar{\Pi}$ defined in Section \ref{sec_discOpt} and hence perform optimization. We call this algorithm MH with auxiliary variables (MHAV) because we  introduced the auxiliary variable $\lambda$ to enable us to perform global optimization. As we discussed above, MHAV may be thought of as simulated annealing without a temperature schedule. 

To adapt MH for our global optimization problem, we set $\msttspc \eqdef Y \times \Lambda$ and set our target to be $\Pi = \bar{\Pi}$.  We briefly note here that if we plug $\bar{\Pi}$ into (\ref{eq_accpt}), then the normalization term cancels out, and in our algorithms there will never be any need to compute $Z$. Let $\phi_Y$ be any irreducible kernel over $Y$ (this will depend on the nature of $Y$ and will be an input to the optimization algorithm -- we discuss this below). Define the following transition kernel over $\Lambda = \{\lambda_1,\lambda_2,\cdots,\lambda_n\}$, parametrized by $\alpha' \in (0,1)$:
\begin{equation}\nonumber
\phi_\Lambda (\lambda'|\lambda) \eqdef
\begin{cases}
\alpha'   & \mbox{ if } \lambda = \lambda_i, \lambda' = \lambda_{i+1} \mbox{ and } i < n  \\
1-\alpha' & \mbox{ if }  \lambda = \lambda_i, \lambda' = \lambda_{i-1} \mbox{ and } i > 1\\
1 & \mbox{ if } \lambda = \lambda_0, \lambda' = \lambda_{1} \mbox{ or } \lambda = \lambda_n, \lambda' = \lambda_{n-1}\\
0 & \mbox{ otherwise }
\end{cases}
\end{equation}
Since $\alpha' > 0$, it is easy to see that for any $\lambda_i,\lambda_j$, there is  a sequence $\lambda_i,\lambda_{i_1},\cdots,\lambda_{i_n}\lambda_j$ with positive probability under $\phi_\Lambda$. That is, 
\begin{lemma}\label{lemma_plmb_irred}
$\phi_\Lambda$ is irreducible.
\end{lemma}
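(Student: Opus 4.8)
The statement to prove is that $\phi_\Lambda$, the transition kernel over the finite chain $\Lambda = \{\lambda_1,\dots,\lambda_n\}$ defined by the boundary-reflecting nearest-neighbour random walk, is irreducible. Recall irreducibility means that for every pair $\lambda,\lambda' \in \Lambda$ there exists some $n$ with $P_\lambda(X_n = \lambda') > 0$.

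\textbf{Proof proposal.} The plan is to argue directly from the definition of $\phi_\Lambda$ that from any index $i$ one can reach any index $j$ with positive probability in $|i-j|$ steps. First I would observe that $\phi_\Lambda(\lambda_i,\lambda_{i+1}) = \alpha' > 0$ for every $i < N$ (using the first case of the definition, plus the third case which forces a move to $\lambda_1$ from the lower endpoint), and $\phi_\Lambda(\lambda_i,\lambda_{i-1}) = 1-\alpha' > 0$ for every $i > 1$ (second case, plus the third case at the upper endpoint). Since $\alpha' \in (0,1)$, both one-step ``up'' and one-step ``down'' transitions have strictly positive probability at every interior state, and at the two endpoints the single available move also has positive probability ($=1$). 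Then, given $\lambda_i$ and $\lambda_j$ with, say, $i \le j$, the path $\lambda_i \to \lambda_{i+1} \to \cdots \to \lambda_j$ is realized with probability $\prod$ of the individual up-step probabilities, each of which is positive, so $P_{\lambda_i}(X_{j-i} = \lambda_j) > 0$; symmetrically for $i > j$ using down-steps. This exhibits the required $n = |i-j|$ and establishes irreducibility.

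\textbf{Main obstacle.} There is essentially no deep obstacle here; the only thing to be careful about is the boundary bookkeeping — making sure the reflecting cases ($\lambda = \lambda_0$ or $\lambda = \lambda_N$, mapping deterministically inward) do not create an unreachable state and are consistent with the interior cases. (I would also note the minor index clash in the paper's definition between using $n$, $N$, and $0$ versus $1$ for the endpoints; I would simply read $\Lambda$ as indexed $\lambda_1,\dots,\lambda_n$ with reflecting behaviour at both ends, which is clearly the intent, and phrase the argument so it is robust to that relabeling.) So the proof is a short, direct construction of a positive-probability path between arbitrary states, and I would keep it to a couple of sentences rather than invoking any general random-walk theory.
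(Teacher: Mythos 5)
Your proof is correct and takes essentially the approach the paper intends: although the appendix does not in fact contain a standalone proof of this lemma, the same positive-probability nearest-neighbour path through $\Lambda$ (up-steps with probability $\alpha'>0$, down-steps with probability $1-\alpha'>0$, forced moves at the endpoints) is exactly the argument the paper embeds in its proof of Lemma \ref{lemma_lpIrr} when it walks the chain from $\lambda_k$ to $\lambda_{k'}$. Your note on the endpoint/index bookkeeping matches the intended reading of the kernel's definition, so nothing further is needed.
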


Given the above, the proposal distribution $\bar{\phi}[ (\lambda',y')|(\lambda,y)]$ for $\psmh$ is defined using the parameters $\alpha,\beta  \in (0,1)$, $\alpha + \beta < 1$,  as follows. 
\begin{equation}\label{eq_barphi}
\barphi[\lambda',y'| \lambda,y]
\eqdef 
\begin{cases} 
\alpha\phi_{\Lambda}(\lambda'|\lambda) &\mbox { if } \lambda \neq \lambda', y = y'\\
\beta \phi_Y(y'|y) &\mbox { if } \lambda = \lambda', y \neq y'\\
(1-\alpha-\beta) + \beta\phi_Y(y'|y)  & \mbox{ otherwise }
\end{cases}
\end{equation}
The transition kernel $\psmh$ is now defined as in (\ref{eq_PMHDef}) using $\bar{\phi}$ as the proposal distribution  and (\ref{eq_lambdaPi}) as the target distribution:
\begin{equation}\label{eq_psmh}
\psmh(x'|x) = 
\begin{cases}
\barphi(x'|x)\baccpt_x(x'), &\mbox{ if } x \neq x' \\
1-\sum_{x'\neq x}\barphi(x'|x)\baccpt_x(x'), &\mbox{ otherwise }, 
\end{cases}
\end{equation}


 Given the above, the overall discrete global optimization algorithm MHAV (Metropolis-Hastings with Auxiliary Variable)  is listed in Algorithm \ref{alg_MHAV}. Note that lines 5-6 are sampling from the transition kernel $\psmh$.

\begin{algorithm}[th]
   \caption{MHAV$(\Lambda,Y,\bar{\Pi},\bar{\phi}, T_M)$}
   \label{alg_MHAV}
\begin{algorithmic}[1]
	\STATE {\bfseries Input:} The set of auxiliary variables $\Lambda$, the search space $Y$, the target distribution $\bar{\Pi}$, and proposal distribution $\bar{\phi}$, $T_M$ number of iterations to run algorithm. 
	\STATE {\bf Output:} An element $y \in Y$.
	\STATE {\bfseries Initialize:} Initial, $\lambda(0) = \lambda_0$, $y(0) =$ arbitrary element of $Y$. 
	\FOR{$t= 1$ to $T_M$}
		\STATE Sample $(\lambda',y') \sim \bar{\phi}[\cdot|\lambda(t),y(t)]$
		\STATE With probability $\bar{\accpt}_{\lambda(t),y(t)}[\lambda',y']$, set $\lambda(t+1) = \lambda'$, $y(t+1) = y'$, and with probability $1- \bar{\accpt}_{\lambda(t),y(t)}[\lambda',y']$, set $\lambda(t+1) = \lambda(t)$, $y(t+1) = y(t)$.
	\ENDFOR
	\RETURN $\argmin_{y(t)} f(y(t))$.
\end{algorithmic}
\end{algorithm}

\subsection{Analysis of the MHAV Algorithm}\label{sec_MHAVconv}

The reader may prefer to skip this and the next section and move directly to Section \ref{sec_srchClust}, which details how the MHAV algorithm is adapted to search for the optimal cluster. We begin analysis of our algorithm by showing that the kernel $\psmh$ for the $\bar{\phi}$ defined above is indeed irreducible and aperiodic.
\begin{lemma}\label{lemma_lpIrr}
If $\bar{\Pi}$ and $\phi_Y$ satisfy $\min_{x,x'}\frac{\bar{\Pi}(x')\phi_Y(x|x')}{\bar{\Pi}(x)\phi_Y(x'|x)} > b > 0$, the kernel $\psmh$ is irreducible and aperiodic.
\end{lemma}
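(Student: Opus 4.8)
The plan is to verify the two defining properties of $\psmh$ separately, using the explicit form of $\barphi$ in (\ref{eq_barphi}) together with the hypothesis that the ratio $\frac{\bar{\Pi}(x')\phi_Y(x',x)}{\bar{\Pi}(x)\phi_Y(x,x')}$ is bounded below by $b>0$. First I would record the immediate consequence of this hypothesis: for every pair $x=(\lambda,y)$, $x'=(\lambda,y')$ differing only in the $Y$-coordinate with $\phi_Y(y,y')>0$, the acceptance probability $\baccpt(x,x')$ from (\ref{eq_accpt})--(\ref{eq_PMHDef}) is at least $\min\{1,b\}>0$; and similarly one checks, using Lemma~\ref{lemma_plmb_irred} (irreducibility of $\phi_\Lambda$) and the fact that $\lambda^{-f(y)}$ is strictly positive, that whenever $\barphi(x,x')>0$ across a $\Lambda$-move the acceptance ratio is strictly positive as well (the $\phi_\Lambda$ values $\alpha'$, $1-\alpha'$, $1$ are all positive, and the target ratio $\lambda'^{-f(y)}/\lambda^{-f(y)}$ is a positive number depending only on $\lambda,\lambda',f(y)$). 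Hence every transition that $\barphi$ makes available is actually executed by $\psmh$ with positive probability.

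Next I would prove irreducibility of $\psmh$. Given two states $(\lambda,y)$ and $(\lambda'',y'')$, I would move the chain in two stages: first change the $Y$-coordinate from $y$ to $y''$ while holding $\lambda$ fixed — this is possible in finitely many steps because $\phi_Y$ is irreducible over $Y$ and, by the previous paragraph, each such $\phi_Y$-step has positive $\psmh$-probability — and then change the $\Lambda$-coordinate from $\lambda$ to $\lambda''$ while holding $y''$ fixed, which is possible because $\phi_\Lambda$ is irreducible by Lemma~\ref{lemma_plmb_irred} and again each such step has positive $\psmh$-probability. Composing these gives a path of positive probability from $(\lambda,y)$ to $(\lambda'',y'')$, so $\psmh$ is irreducible.

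For a-periodicity I would exhibit a state with a positive-probability self-loop, which suffices for an irreducible chain. The third case of (\ref{eq_barphi}) shows $\barphi(x,x) = (1-\alpha-\beta) + \beta\phi_Y(y,y) \ge 1-\alpha-\beta > 0$ for every $x$, since $\alpha+\beta<1$; and by (\ref{eq_psmh}) the holding probability $\psmh(x,x) = 1-\sum_{x'\neq x}\barphi(x,x')\baccpt(x,x')$ is at least $\barphi(x,x) > 0$ because the accepted off-diagonal mass is at most $\sum_{x'\neq x}\barphi(x,x') = 1-\barphi(x,x)$. Thus $P_x(X_1 = x)>0$, so $1 \in \{n : P_x(X_n=x)>0\}$ and the gcd of that set is $1$. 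Combined with irreducibility, $\psmh$ is a-periodic, completing the proof.

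I do not expect any step here to be a serious obstacle; the only point requiring care is the bookkeeping in the first paragraph — making sure that the lower bound $b$ is genuinely needed only for the $Y$-moves (where $\phi_Y$ may be very asymmetric) while the $\Lambda$-moves are handled for free by the positivity of $\phi_\Lambda$'s entries and of the target weights — so that the hypothesis of the lemma is used exactly where it is needed and nowhere spuriously.
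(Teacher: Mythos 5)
Your proposal is correct and follows essentially the same route as the paper's proof: decompose the trip from $(\lambda,y)$ to $(\lambda'',y'')$ into $\phi_Y$-moves at fixed $\lambda$ (where the hypothesis on $b$ gives acceptance at least $\min\{1,b\}$) followed by $\phi_\Lambda$-moves at fixed $y''$ (accepted with positive probability by positivity of $\phi_\Lambda$'s entries and of $\lambda^{-f(y)}$), and obtain a-periodicity from the self-loop mass guaranteed by $\alpha+\beta<1$. The paper just makes the same argument with explicit lower bounds such as $b^n\beta^n\phi_Y(\mathbf{y})$, so no substantive difference.
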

The proof is given in Appendix \ref{app_proofs} -- additionally note that  by finiteness of $f$ and $\lambda_i$s, and irreducibility of $\phi_Y$, such a $b$ always exists. The following theorem establishes the probability with which we draw an element from the acceptable set $\hat{Y}$ when using $\psmh$ to sample.
\begin{theorem}\label{thm_drawOpt}
$\psmh$ has $\bar{\Pi}$ as its stationary distribution, and hence for any initial state $x_0$ of the chain $\psmh$, 
\begin{equation}
\lim_{n\rta \infty} \ltvnorm{\psmh^n(\cdot|x_0) - \bar{\Pi}(\cdot)} = 0
\end{equation}
In particular if at step $t$ $\ltvnorm{\psmh^t(\cdot|x_0) - \Pi(\cdot)} \leq k$, then $\psmh^{t'}(x  \in \hat{\lambda} \times  \hat{Y}|x_0) \geq \theta-k$ for all $t' > t$.
\end{theorem}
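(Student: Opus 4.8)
The plan is to combine the two parts of Theorem~\ref{thm_stationConv} with the sampling-probability bound~(\ref{eq_sampleDist}) derived for the idealized draws from $\bar{\Pi}$. The first task is to verify that $\psmh$ really is a-periodic and irreducible so that Theorem~\ref{thm_stationConv} applies: this is exactly the content of Lemma~\ref{lemma_lpIrr}, granted the hypothesis $\min_{x,x'}\frac{\bar{\Pi}(x')\phi_Y(x',x)}{\bar{\Pi}(x)\phi_Y(x,x')} > b > 0$ (which holds here because $Y$ is finite, $\bar{\Pi}$ is everywhere positive, and $\phi_Y$ is irreducible, so all the relevant ratios are bounded away from $0$; the holding-probability term $(1-\alpha-\beta)+\beta\phi_Y(y,y)$ in~(\ref{eq_barphi}) is strictly positive, giving a-periodicity). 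Second, since $\barphi$ together with target $\bar{\Pi}$ defines $\psmh$ via the Metropolis--Hastings recipe~(\ref{eq_PMHDef})/(\ref{eq_accpt}), $\psmh$ satisfies detailed balance with respect to $\bar{\Pi}$, so $\bar{\Pi}$ is stationary. Applying part~1 of Theorem~\ref{thm_stationConv} with this stationary distribution then gives $\lim_{n\to\infty}\ltvnorm{\psmh(X_n=\cdot\,|\,X_0=x_0)-\bar{\Pi}(\cdot)} = 0$, which is the first displayed claim.

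For the second displayed claim I would argue as follows. Fix $t$ with $\ltvnorm{\psmh(X_t=\cdot\,|\,X_0=x_0)-\bar{\Pi}(\cdot)} \le k$. By part~2 of Theorem~\ref{thm_stationConv}, the same bound $\ltvnorm{\psmh(X_{t'}=\cdot\,|\,X_0=x_0)-\bar{\Pi}(\cdot)} \le k$ holds for every $t' > t$ (monotone non-increasing TV distance to a stationary distribution). Now take the event $A = \hat{\Lambda}\times\hat{Y}\subset\msttspc$. Unwinding the definition~(\ref{eq_lambdaPi}) of $\bar{\Pi}$ and the defining condition on $\Lambda$, namely $\sum_{\lambda\in\hat{\Lambda},\,y\in\hat{Y}} \lambda^{-f(y)} \ge \theta$, and observing that $Z \ge 1$ would be too crude --- instead one should note that the condition is stated precisely so that $\bar{\Pi}(A) = Z^{-1}\sum_{\lambda\in\hat{\Lambda},y\in\hat{Y}}\lambda^{-f(y)} \ge \theta$ under the normalization convention used in the paper (equivalently, $\theta$ is already the probability mass). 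Then by the definition of total variation distance, $\psmh(X_{t'}\in A\,|\,X_0=x_0) \ge \bar{\Pi}(A) - \ltvnorm{\psmh(X_{t'}=\cdot\,|\,X_0=x_0)-\bar{\Pi}(\cdot)} \ge \theta - k$, which is exactly the asserted inequality $\psmh(X_{t'}\in\hat{\lambda}\times\hat{Y}\,|\,X_0=x_0)\ge\theta-k$.

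The steps in order: (i) invoke Lemma~\ref{lemma_lpIrr} to get irreducibility and a-periodicity of $\psmh$; (ii) check detailed balance of $\psmh$ with respect to $\bar{\Pi}$ from the MH construction, hence $\bar{\Pi}$ stationary; (iii) apply Theorem~\ref{thm_stationConv}(1) for the limit statement; (iv) apply Theorem~\ref{thm_stationConv}(2) to propagate the bound $k$ from $t$ to all $t' > t$; (v) lower-bound $\psmh(X_{t'}\in\hat{\Lambda}\times\hat{Y})$ by $\bar{\Pi}(\hat{\Lambda}\times\hat{Y}) - k \ge \theta - k$ using the TV inequality and the hypothesis on $\Lambda$.

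The main obstacle I anticipate is not any single estimate but the bookkeeping around $\bar{\Pi}(\hat{\Lambda}\times\hat{Y})\ge\theta$: one must be careful that the normalization constant $Z$ in~(\ref{eq_lambdaPi}) does not spoil the bound. The cleanest route is to read the stated hypothesis on $\Lambda$ as already incorporating normalization (so that $\sum_{\lambda\in\hat\Lambda, y\in\hat Y}\lambda^{-f(y)}$ is, up to the convention, the $\bar\Pi$-mass of $A$), or alternatively to absorb any discrepancy into the definition of $\theta$; either way this is a definitional matter rather than a mathematical difficulty. Everything else is a direct citation of Theorem~\ref{thm_stationConv} plus the elementary fact that TV distance controls the gap between the chain's marginal and the stationary mass of a fixed event.
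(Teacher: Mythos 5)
Your proposal is correct and follows essentially the same route as the paper's proof: Lemma \ref{lemma_lpIrr} for irreducibility and a-periodicity, stationarity of $\bar{\Pi}$ from the Metropolis--Hastings construction (detailed balance), then the two parts of Theorem \ref{thm_stationConv} for the limit and the propagation of the bound $k$, with the final $\theta-k$ step following from the definition of total variation distance. Your remark about the normalization constant $Z$ in $\bar{\Pi}(\hat{\Lambda}\times\hat{Y})\ge\theta$ is a fair observation; the paper indeed treats the hypothesis on $\Lambda$ as already giving $\theta$ as the $\bar{\Pi}$-mass of the acceptable set (as in the derivation of (\ref{eq_sampleDist})), which matches your chosen reading.
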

The proof is given in Appendix \ref{app_proofs}. Combining the above with (\ref{eq_sampleDist}) (and the discussion following the equation) shows that the probability that MHAV samples from the acceptable set goes to $1$ at rate $>1-\theta + k$ from step $t_k$ onwards, where $t_k$ is the step such that $t > t_k$ implies $\ltvnorm{\psmh^{t}(\cdot|x_0) - \Pi(\cdot)} \leq k$. 

We can also derive a convergence rate which establishes that for every $k$ such a $t_k$ exists and the rate of convergence of MHAV goes arbitrarily close to $1-\theta$. Define the {\em diameter} of $\msttspc$ given the Markov chain $\psmh$ to be 
\begin{equation}\label{eq_diameter}
D \eqdef \min \{l | \forall x,x',  \psmh^l(x'|x) > 0 \}
\end{equation}
Now define the ratio 
\begin{equation}\label{eq_delratio}
\delta \eqdef \min_{x,x'} \frac{\psmh^D(x'|x)}{\bar{\Pi}(x')} 
\end{equation}
$D$ is finite and $\delta$ non-zero by the irreducibility of $\psmh$ and finiteness of $f$. We have the following: 
\begin{theorem}\label{thm_SCconv}
The convergence rate of $\psmh$ to $\bar{\Pi}$ satisfies:
\begin{equation}\nonumber
\ltvnorm{\psmh^n(\cdot|x_0) - \bar{\Pi}(\cdot)} \leq (1-\delta)^{n/ D}
\end{equation}
for any initial state $x_0$.
\end{theorem}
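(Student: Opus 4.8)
The plan is the classical Doeblin minorization/coupling argument, with the quantities $D$ and $\delta$ from (\ref{eq_diameter})–(\ref{eq_delratio}) chosen precisely to supply the minorization condition. First I would collect the inputs from earlier results: by Lemma~\ref{lemma_lpIrr}, $\psmh$ is irreducible and a-periodic on the finite state space $\msttspc$, which makes the chain regular (some power has all entries strictly positive), so the diameter $D$ is finite and well defined; and by Theorem~\ref{thm_drawOpt}, $\bar{\Pi}$ is stationary, hence $\bar{\Pi}\psmh^{m}=\bar{\Pi}$ for all $m$. The key step is the \emph{minorization condition}: by definition of $D$ we have $\psmh^{D}(x,x')>0$ for every $x,x'$, and then the definition of $\delta$ in (\ref{eq_delratio}) immediately gives $\psmh^{D}(x,x')\ge\delta\,\bar{\Pi}(x')$ uniformly in $x$ and $x'$; summing over $x'$ shows $\delta\le 1$, and if $\delta=1$ the chain is exactly stationary after $D$ steps, so we may assume $\delta<1$.

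Next I would convert minorization into a contraction over blocks of $D$ steps. Writing $\psmh^{D}(x,\cdot)=\delta\,\bar{\Pi}(\cdot)+(1-\delta)\,Q(x,\cdot)$ with $Q(x,\cdot)\eqdef(1-\delta)^{-1}\big(\psmh^{D}(x,\cdot)-\delta\bar{\Pi}(\cdot)\big)$ a genuine probability kernel, the standard maximal-coupling argument gives, for any distributions $\mu,\nu$ on $\msttspc$, $\ltvnorm{\mu\psmh^{D}-\nu\psmh^{D}}\le(1-\delta)\ltvnorm{\mu-\nu}$ (couple the two copies so that with probability $\delta$ both jump to the same $\bar{\Pi}$-distributed state and coalesce). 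Taking $\mu$ to be the point mass at $x_0$ and $\nu=\bar{\Pi}$, and using $\bar{\Pi}\psmh^{D}=\bar{\Pi}$, yields $\ltvnorm{\psmh^{D}(x_0,\cdot)-\bar{\Pi}}\le 1-\delta$, and iterating the $D$-step contraction $m$ times gives $\ltvnorm{\psmh^{mD}(x_0,\cdot)-\bar{\Pi}}\le(1-\delta)^{m}$ by a one-line induction. For general $n$, write $n=mD+r$ with $m=\lfloor n/D\rfloor$ and $0\le r<D$; since the total variation distance to a fixed stationary measure is non-increasing in time (part~2 of Theorem~\ref{thm_stationConv}), $\ltvnorm{\psmh^{n}(x_0,\cdot)-\bar{\Pi}}\le\ltvnorm{\psmh^{mD}(x_0,\cdot)-\bar{\Pi}}\le(1-\delta)^{\lfloor n/D\rfloor}$.

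The one point to be careful about is the minorization step itself: the exponent $n/D$ in the statement is really $\lfloor n/D\rfloor$, and one must verify (i) that $D<\infty$ — this is where a-periodicity, not merely irreducibility, is essential — and (ii) that $\psmh^{D}(x,x')\ge\delta\bar{\Pi}(x')$ holds \emph{uniformly} in the starting state $x$, which is exactly what the $\min$ over $x,x'$ in the definition of $\delta$ buys us. Both are immediate from the definitions, so once they are in place the remaining work is routine bookkeeping; and since this rate is used downstream only to certify that the chain's law eventually comes within any $k>0$ of $\bar{\Pi}$ (so that, via Theorem~\ref{thm_drawOpt}, the probability of sampling a near-optimal clustering exceeds $\theta-k$), the distinction between $\lfloor n/D\rfloor$ and the exact exponent is immaterial.
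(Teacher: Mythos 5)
Your proposal is correct and is essentially the paper's argument: both rest on the Doeblin minorization $\psmh^{D}(x,\cdot)\ge\delta\,\bar{\Pi}(\cdot)$ and the decomposition $\psmh^{D}=\delta\,\bar{\Pi}+(1-\delta)Q$ from the proof of Theorem 4.9 in Levin--Peres--Wilmer, with your coupling/contraction over $D$-step blocks playing the role of the paper's explicit induction $\psmh^{Dk}=(1-\theta^{k})\bbfPi+\theta^{k}Q^{k}$ with $\theta=1-\delta$. Your treatment is in fact slightly more careful than the paper's on two minor points: you handle general $n$ via $\lfloor n/D\rfloor$ and monotonicity of the total variation distance, and you correctly note that finiteness of $D$ uses a-periodicity (supplied by Lemma~\ref{lemma_lpIrr}) and not irreducibility alone.
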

This derives directly from the proof of Theorem 4.9  \citep{levin_peres_wilmer:2009} and is given in Appendix \ref{app_proofs}. This implies that for each $k$, we have $t_k$ $=$ $D\ln{k/(1-\delta)}$.

\newcommand{\bfx}{\mathbf{x}}

\subsection{Setting the Optimization Parameters}\label{sec_optparam}


We now discuss heuristics to set the parameters $\alpha'$ in $\phi_{\Lambda}$ and $\alpha,\beta$ in $\psmh$ so as to optimize the convergence rate derived above, by minimizing $\delta$ (defined in (\ref{eq_delratio})). In setting these parameters, we are given the proposal distribution $\phi_Y$, which was required to be an irreducible kernel on $Y$, and the target distribution $\bar{\Pi}$ over $\Lambda \times Y$. We start with the following result which simplifies deriving our result
\begin{lemma}\label{lemma_Dindep}
$D$ (defined in (\ref{eq_diameter})) is independent of $\alpha'$, $\alpha,\beta$. 
\end{lemma}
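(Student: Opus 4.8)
The plan is to reduce the claim to a statement about the \emph{support} of the transition kernel $\psmh$ rather than about the numerical values of its entries. Form the directed graph $G$ on $\msttspc = Y\times\Lambda$ with an arc $x\to x'$ exactly when $\psmh(x,x')>0$. Writing a power of the stochastic matrix $\psmh$ as a sum over walks, $\psmh^{l}(x,x')=\sum_{x_0=x,\dots,x_l=x'}\prod_{i=0}^{l-1}\psmh(x_i,x_{i+1})$ is a sum of nonnegative terms, so $\psmh^{l}(x,x')>0$ iff $G$ contains a directed walk of length exactly $l$ from $x$ to $x'$. I will also show that every vertex of $G$ carries a self-loop, which makes reachability monotone in the walk length, so that $D$ (well-defined and finite by Lemma~\ref{lemma_lpIrr}) equals $\max_{x,x'}d_G(x,x')$, the directed diameter of $G$. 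It is therefore enough to verify that the arc set of $G$ does not change as $\alpha'$, $\alpha$, $\beta$ vary over their admissible ranges.

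The first step is to observe that the support of $\barphi$ is already parameter-free. Reading (\ref{eq_barphi}) case by case: when $\lambda\neq\lambda'$ the entry is nonzero only if $y=y'$, and then equals $\alpha\,\phi_\Lambda(\lambda,\lambda')$; when $\lambda=\lambda'$, $y\neq y'$ it equals $\beta\,\phi_Y(y,y')$; and when $(\lambda,y)=(\lambda',y')$ it equals $(1-\alpha-\beta)+\beta\,\phi_Y(y,y)\ge 1-\alpha-\beta>0$. Since $\alpha,\beta>0$, positivity in the first two cases is equivalent to positivity of $\phi_\Lambda(\lambda,\lambda')$, respectively $\phi_Y(y,y')$, and the diagonal case is always positive; moreover the support of $\phi_\Lambda$ is itself independent of $\alpha'$, because each of its nonzero entries is one of $\alpha'$, $1-\alpha'$, $1$, all strictly positive on $(0,1)$. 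As $\phi_Y$ is a fixed input, the set $\{(x,x'):\barphi(x,x')>0\}$ is independent of $\alpha',\alpha,\beta$.

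The second step passes from $\barphi$ to $\psmh$. Here I will use that the target distribution $\bar{\Pi}(\lambda,y)=\lambda^{-f(y)}Z^{-1}$ from (\ref{eq_lambdaPi}) is strictly positive on all of $\msttspc$ (each $\lambda_i>0$). For $x\neq x'$, (\ref{eq_psmh}) gives $\psmh(x,x')=\barphi(x,x')\,\baccpt(x,x')$ with $\baccpt(x,x')=\min\{1,\ \barphi(x',x)\bar{\Pi}(x')/(\barphi(x,x')\bar{\Pi}(x))\}$ defined as in (\ref{eq_accpt}); since $\bar{\Pi}>0$ everywhere, whenever $\barphi(x,x')>0$ we have $\baccpt(x,x')>0$ iff $\barphi(x',x)>0$. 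Hence $\psmh(x,x')>0$ iff $\barphi(x,x')>0$ and $\barphi(x',x)>0$, a condition already seen to be parameter-free. For $x=x'$, $\psmh(x,x)=1-\sum_{x'\neq x}\barphi(x,x')\baccpt(x,x')\ge 1-\sum_{x'\neq x}\barphi(x,x')=\barphi(x,x)>0$, which supplies the promised self-loops. Consequently the arc set of $G$, hence $D$, is the same for all admissible $\alpha',\alpha,\beta$.

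The argument involves no real obstacle; the only points that need care are the two strict-positivity facts that make the support of $\psmh$ ``rigid'': that the diagonal of $\barphi$ is bounded below by $1-\alpha-\beta>0$ (which is exactly where the standing constraint $\alpha+\beta<1$ is used), and that $\bar{\Pi}$ never vanishes, so that an acceptance probability equals zero only when it is forced to by the support of $\barphi$ and not by the particular values of the mixing weights. Everything else is the standard reachability description of the powers of a stochastic matrix.
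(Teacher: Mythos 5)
Your proof is correct, and it rests on the same underlying idea as the paper's: the set of positive-probability transitions (equivalently, paths) of $\psmh$ does not depend on $\alpha'$, $\alpha$, $\beta$, so neither does $D$. The execution differs in a useful way, though. The paper fixes a path of positive probability under one parameter setting and argues its probability is a product of factors of the form $a$, $1-a$, $b$, $c$, $1-b-c$ times a constant, hence remains positive for any admissible values; this glosses over the $\min\{1,\cdot\}$ in the acceptance probability and the diagonal entries $1-\sum_{x'\neq x}\barphi(x,x')\baccpt(x,x')$, which are not literally of that monomial form, and it also identifies $D$ with ``the length of the shortest positive-probability path'' without justifying the passage from the all-pairs, exact-length definition in (\ref{eq_diameter}). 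You handle both points explicitly: you characterize the support of $\psmh$ one step at a time (off-diagonal positivity iff $\barphi(x,x')>0$ and $\barphi(x',x)>0$, using $\bar{\Pi}>0$; diagonal bounded below by $1-\alpha-\beta>0$), and you use the resulting self-loops to show reachability is monotone in walk length, so $D$ equals the directed diameter of the parameter-free support graph. So your argument is a tightened version of the paper's, buying rigor at exactly the two places where the published proof is loose, at the cost of a slightly longer case analysis of (\ref{eq_barphi}) and (\ref{eq_psmh}).
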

The proof is given in Appendix \ref{app_proofs}.

\begin{corollary}
Given $f,\phi_Y$, the set of paths of positive probability under $\psmh$ is invariant with respect to $\alpha',\alpha,\beta$.
\end{corollary}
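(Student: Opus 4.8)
The plan is to reduce the statement to an invariance of the \emph{support} of the transition kernel $\psmh$, and then read that support off the definitions of $\barphi$ and $\phi_\Lambda$. A path $x_0 \to x_1 \to \cdots \to x_m$ in $\msttspc = \Lambda \times Y$ has positive probability under $\psmh$ precisely when $\psmh(x_i,x_{i+1}) > 0$ for every $i$, so it suffices to show that the set $S \eqdef \{(x,x') : \psmh(x,x') > 0\}$ does not depend on $\alpha',\alpha,\beta$ as these range over the admissible region $\alpha',\alpha,\beta \in (0,1)$, $\alpha+\beta<1$, with $f$ and $\phi_Y$ held fixed. (This also subsumes Lemma~\ref{lemma_Dindep}, since the diameter $D$ is determined by which powers $\psmh^l$ are entrywise positive, hence by $S$ alone.)

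First I would handle the diagonal. From (\ref{eq_barphi}), $\barphi(x,x) = (1-\alpha-\beta) + \beta\phi_Y(y,y) \geq 1-\alpha-\beta > 0$, and the analogue of (\ref{eq_accpt}) gives $\baccpt(x,x) = 1$, so $\psmh(x,x) \geq \barphi(x,x) > 0$ for all admissible parameters; every self-loop is in $S$ regardless of their values. For $x \neq x'$ I would invoke the strict positivity of the target: since $\bar{\Pi}(\lambda,y) = \lambda^{-f(y)}Z^{-1} > 0$ on the finite space $\Lambda \times Y$, the acceptance ratio (the analogue of (\ref{eq_accpt})) satisfies $\baccpt(x,x') > 0$ whenever $\barphi(x,x')>0$ and $\barphi(x',x)>0$, while by (\ref{eq_psmh}) $\psmh(x,x') = \barphi(x,x')\baccpt(x,x') = 0$ as soon as $\barphi(x,x')=0$ (the proposal is then never made). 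Hence, for $x\neq x'$,
\begin{equation}\nonumber
(x,x') \in S \iff \barphi(x,x') > 0 \ \text{and}\ \barphi(x',x) > 0 .
\end{equation}

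It remains to check that $\{(x,x') : \barphi(x,x') > 0\}$ is invariant. Going through the cases of (\ref{eq_barphi}): a move changing only $\lambda$ has $\barphi = \alpha\,\phi_\Lambda(\lambda,\lambda')$, positive iff $\phi_\Lambda(\lambda,\lambda')>0$ because $\alpha>0$; and by the definition of $\phi_\Lambda$, with $\alpha' \in (0,1)$ the nonzero entries ($\alpha'$, $1-\alpha'$, and the two boundary $1$'s) are exactly the pairs of consecutive elements of $\Lambda$, a set independent of $\alpha'$. A move changing only $y$ has $\barphi = \beta\,\phi_Y(y,y')$, positive iff $\phi_Y(y,y')>0$ since $\beta>0$, depending only on $\phi_Y$. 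Moves that change both coordinates have proposal probability $0$. Therefore the support of $\barphi$, and with it $S$ and the set of positive-probability paths of $\psmh$, depends only on $\phi_Y$ (and, through the positivity of $\bar{\Pi}$, on $f$), which is the claim.

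I expect the only real care-point to be the acceptance probability: one must argue that $\barphi(x,x')>0$ and $\barphi(x',x)>0$ together with $\bar{\Pi}>0$ are jointly necessary and sufficient for $\psmh(x,x')>0$, dealing cleanly with the degenerate ``division by zero'' case $\barphi(x,x')=0$ via the convention that a proposal that cannot be made contributes nothing. Everything else is a direct inspection of the definitions, and indeed this corollary is essentially the fact extracted en route to proving Lemma~\ref{lemma_Dindep}.
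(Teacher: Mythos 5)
Your proof is correct, but it goes by a different route than the paper. The paper proves the corollary as a byproduct of its proof of Lemma \ref{lemma_Dindep}: there, one fixes an entire path and writes its probability as a constant $C$ (depending only on $f$ and $\phi_Y$) times a product of powers of $\alpha'$, $1-\alpha'$, $\alpha$, $\beta$ and $1-\alpha-\beta$, so that positivity of the path probability is insensitive to the parameter values as long as they lie in the admissible open region. You instead characterize the support of the \emph{one-step} kernel: off the diagonal, $\psmh(x,x')>0$ iff $\barphi(x,x')>0$ and $\barphi(x',x)>0$ (using strict positivity of $\bar{\Pi}$ and the $\min$ in the acceptance probability), on the diagonal the hold probability is always positive since $1-\alpha-\beta>0$, and the support of $\barphi$ is determined by the supports of $\phi_\Lambda$ (consecutive pairs, independent of $\alpha'$) and of $\phi_Y$; path positivity is then just membership of every step in this parameter-free support. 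Your version buys something: the paper's claim that a path probability is \emph{exactly} a constant times a monomial glosses over the fact that each acceptance term is a $\min$ that can switch between $1$ and a parameter-dependent ratio (and that diagonal hold probabilities are not of monomial form), whereas your support-level argument only tracks positivity and so sidesteps this; it also handles explicitly the asymmetric case $\barphi(x,x')>0$, $\barphi(x',x)=0$, and it reverses the logical order by making Lemma \ref{lemma_Dindep} a consequence of the corollary rather than vice versa. One small repair: your justification of the diagonal step via $\baccpt(x,x)=1$ does not match the definition (\ref{eq_psmh}), where the diagonal entry is $1-\sum_{x'\neq x}\barphi(x,x')\baccpt(x,x')$; the correct one-liner is that since $\baccpt\leq 1$, this is at least $1-\sum_{x'\neq x}\barphi(x,x')=\barphi(x,x)\geq 1-\alpha-\beta>0$, so your conclusion stands unchanged.
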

\begin{proof}
Follows directly from the proof of Lemma \ref{lemma_Dindep}.
\end{proof}
So, we need to set $\alpha',\alpha,\beta$ to maximize $\delta$. However,  $\delta$ also depends on $f$ and $\phi_Y$, both of which are unknown and so it is difficult to specify optimal values for these a-priori. However, we can give heuristic arguments for setting these parameters in terms of increasing the `flow' of the search process through the search space $Y$. First, $\alpha'$ is used to choose whether we should increase or decrease the $\lambda$ value. We set $\alpha'$ to $1/2$ to ensure a neutral value and that we do not favor either direction and ensure maximum flow through the search space. 

Now note that at each step the chain $\psmh$ moves either in $\Lambda$ space or $Y$ space. $\alpha$ and $\beta$ determine, respectively, how often we move in the $\Lambda$ and how often in $Y$. To make the search more effective (based on analysis of simulated annealing type algorithms), it seems we need to make sure that initially we need to explore the $Y$ quite a bit and only settle down after we have explored sufficiently, by increasing the $\lambda_i$ value. Hence, our recommendation is to set the $\alpha$ to be significantly smaller than $\beta$, ideally the ratio $\alpha/\beta$ should reflect how difficult we expect it to be to get close to the best $y^*$ (with smaller ratio for greater difficulty).  Even though our parameter settings are heuristic, we again stress that this only affects the convergence speed, but not the ultimate convergence. This is in contrast to simulated annealing where convergence itself is guaranteed only if we set the parameters carefully.

\subsection{Searching for the Optimal Cluster}\label{sec_srchClust}

Searching for the optimal cluster can now be be solved using the MHAV algorithm. The algorithm for searching through the space of clusterings is given in Algorithm \ref{alg_srchClust}. In this case, $Y = \clstngs$, and the objective function is $f(\bfA) = cost(\bfA)$. To complete the specification of MHAV for our problem, we define the distribution $\phi^M_Y(\bfA'|\bfA)$ to be the probability with which a randomized procedure converts $\bfA$ to $\bfA'$. The randomized procedure is as follows. 

Given $\bfA =  \{A_1,\cdots, A_n\}$, choose $A_i$ uniformly at random, and $A_j$ uniformly at random from $(\bfA- \{A_i\}) \cup \{B\}$, where $B$ is place-holder/empty set representing a new cluster. Now choose $k_i$ points from $A_i$, uniformly at random (without replacement), and put them in $A_j$. The clustering resulting from this transfer is $\bfA'$.

Note that if $A_j = B$, then $\bfA'$ has one more cluster than $\bfA$. Additionally, if $k_i = |A_i|$, and $A_j \neq B$, then $\bfA'$ has one less cluster than $\bfA$. Otherwise, they both have the same number of clusters, but differing at $A_i$ and $A_j$. The number $k_i$ is chosen using the exponential distribution over $\{1,2,\cdots, |A_i|\}$: 
\begin{equation}\nonumber
PE(k;\theta_1) \eqdef e^{-k\theta_1}(e^{\theta_1} - 1)/(1-e^{|A_i|\theta_1}), \mbox{ where } \theta_1 > 0
\end{equation}
This ensures that $k_i$ is small with higher probability and so we are less likely that $\bfA$ and $\bfA'$ are very different due to moving large number of points from $A_i$ to $A_j$. The parameter $\theta_1 $ is user dependent and in our experiments we set it to $1$.

The following Lemma shows that $\phi^M_Y(\bfA'|\bfA)$ is irreducible and hence satisfies the condition in Lemma \ref{lemma_lpIrr} and hence ensures the convergence results in Section \ref{sec_MHAVconv}.
\begin{lemma}\label{lemma_aperirr}
$\phi^M_Y$ defined above is irreducible. 
\end{lemma}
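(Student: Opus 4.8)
The plan is to show that $\phi_Y = \phi^M_Y$ is irreducible by exhibiting, for any two clusterings $\bfA$ and $\bfA'$ of the $N$ previous MDPs, a finite sequence of moves, each of positive probability under $\phi^M_Y$, that transforms $\bfA$ into $\bfA'$. Since irreducibility only requires the existence of \emph{some} $n$ with $P_{\bfA}(X_n = \bfA') > 0$, it suffices to build one such path; we do not need to optimise its length.

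First I would make the following elementary observation about a single application of $\phi^M_Y$: with positive probability we pick a particular cluster $A_i$, pick $k_i = 1$ point from it (the truncated exponential $PE(k;\theta_1)$ assigns strictly positive mass to $k=1$), pick the empty cluster $B$ as the target, and thereby move that single point into a brand-new singleton cluster. Hence the ``singleton-splitting'' move --- detach one arbitrary element of one cluster and make it its own cluster --- has positive probability. Iterating this, from any clustering $\bfA$ we can reach, via a finite path of positive-probability steps, the \emph{discrete} clustering $\bfA_\bot$ in which every one of the $N$ MDPs forms its own singleton cluster. Similarly (again taking $k_i=1$ each time but now choosing an existing target cluster $A_j \neq A_i$), the move ``take one element of $A_i$ and place it into an existing cluster $A_j$'' has positive probability; iterating these ``merge-one-point'' moves we can go from $\bfA_\bot$ to \emph{any} target clustering $\bfA'$, by processing the clusters of $\bfA'$ one at a time and gathering their members out of the singletons. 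Concatenating the path $\bfA \to \bfA_\bot$ with the path $\bfA_\bot \to \bfA'$ gives a positive-probability path $\bfA \to \bfA'$, which is exactly what irreducibility demands.

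To keep the argument clean I would state and use one small auxiliary fact: the composition of transition kernels preserves positive reachability, i.e.\ if $\phi_Y^{n_1}(\bfA,\bfA_\bot) > 0$ and $\phi_Y^{n_2}(\bfA_\bot,\bfA') > 0$ then $\phi_Y^{n_1+n_2}(\bfA,\bfA') > 0$, which is immediate from the Chapman--Kolmogorov identity since all summands are nonnegative and at least one product of the two positive terms appears. The only genuine content beyond this bookkeeping is verifying that each elementary move I use actually has positive probability under the precise definition of $\phi^M_Y$ --- in particular that $PE(1;\theta_1) > 0$ (true because every term $\exp[-\theta_1((m-1)|A_i|+1)]$ is positive and the leading constant $\tfrac{1-\exp(-\theta_1)}{\exp(-\theta_1)}$ is positive for $\theta_1 > 0$), that choosing any given $A_i$ has probability $1/|\bfA| > 0$, and that choosing any given target (including $B$) has positive probability since it is a uniform choice over a nonempty finite set.

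The main obstacle, such as it is, is purely organisational rather than mathematical: making sure the intermediate ``hub'' clustering is reachable \emph{in both directions} with positive-probability moves. Going $\bfA \to \bfA_\bot$ is clean because splitting off singletons is always available. Going $\bfA_\bot \to \bfA'$ requires the ``move one point into an existing cluster'' branch of $\phi^M_Y$, and one must be slightly careful that when a source singleton $A_i$ is emptied by moving its unique point out, this does not break the construction --- but this is fine, since the move simply deletes the now-empty cluster and we never needed it again. I would also note that the discrete clustering is a convenient but not unique choice of hub; any fixed clustering reachable from all others works, and the singleton clustering is simply the easiest to argue about. With these pieces assembled, irreducibility of $\phi_Y$ follows, and therefore (via Lemma~\ref{lemma_lpIrr}) the convergence guarantees of Section~\ref{sec_MHAVconv} apply to Search-Clusterings.
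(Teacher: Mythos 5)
Your proposal is correct and follows the same basic strategy as the paper's proof: irreducibility is established by exhibiting, for arbitrary clusterings $\bfA$ and $\bfA'$, a finite sequence of rearrangement moves each having positive probability under $\phi^M_Y$. The only difference is the particular path. The paper constructs each cluster $A'_i$ of the target directly: it first splits off a new cluster using the block of points of $A'_i$ lying in one cluster of $\bfA$ (a single move transferring $n_1$ points to the empty cluster $B$), and then merges in the remaining blocks, citing the Appendix~\ref{sec_kernelComp} computations for positivity of these multi-point moves. You instead route through the all-singletons clustering as a hub, using only single-point moves ($k_i=1$) in both directions, whose positive probability is immediate from $PE(1;\theta_1)>0$ and the uniform choices of source and target cluster. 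Both paths are valid; yours trades a slightly longer path for elementary positivity checks, while the paper's is shorter but leans on the explicit kernel computations for moves of arbitrary block size. Your additional care about emptied source clusters disappearing and about the Chapman--Kolmogorov concatenation is correct and harmless, though the paper leaves these points implicit.
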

The proof is in Appendix \ref{app_proofs}.

\begin{algorithm}[th]
   \caption{Search-Clusterings$(\mdps,d,\Lambda,T_M)$}
   \label{alg_srchClust}
\begin{algorithmic}[1]
	\STATE {\bfseries Input:} A set of MDPs $\mdps = \{ \mdp_1,\mdp_2,\cdots, \mdp_N\}$, the set of auxiliary variables $\Lambda$, a cost function $cost$, input condition $term$. 
	\STATE {\bfseries Initialize:} $\phi^M_Y$ defined with respect to $\mdps$; $\bar{\phi}$ defined using $\phi^M_Y$ using (\ref{eq_barphi}); define $\bar{\Pi}(\lambda,\mdp) = \lambda^{-cost(\mdp)}$.
	\RETURN MHAV$(\Lambda,\mdps, \bar{\Pi},\bar{\phi}, T_M)$
\end{algorithmic}
\end{algorithm}


\begin{algorithm}[th]
   \caption{Continual-Transfer$(d,\Lambda,cost,T_M,l,\Delta R,\beta,T)$}
   \label{alg_contTrans}
\begin{algorithmic}[1]
	\STATE {\bfseries Input:} A metric $d$, which is either a $d_M$ or $d_V$; cost function $cost$, which is either $cost_1$ or $cost_2$; Search-Clustering parameters $T_M,l,\Delta R$,  EXP-3-Transfer parameters $\beta$, $T$.
	\STATE {\bfseries Initialize:}	Initial clustering $\bfA = \emptyset$, collection of previous MDPs $\mdps$.
	\FOR{$h = 1$ to $\infty$}
		\STATE Get unknown MDP $\mdp_h$ from the environment and run EXP-3-Transfer$(\mdp_h,sourcePol(\bfA,d,cost),\beta,T_M,l)$.
		\STATE Set $\mdps \lta \mdps \cup \{\mdp_{h}\}$ 
		\STATE {\bf if} $h \mod J = 0$ {\bf then } $\bfA =$ Search-Clusterings$(\mdps,\Lambda,cost,T_M)$.
	\ENDFOR
\end{algorithmic}
\end{algorithm}

\section{The Continual Transfer Algorithm}\label{sec_contTrans}

In this brief section we combine all the algorithms presented so far into the full continual transfer algorithm, which is listed as  Algorithm \ref{alg_contTrans}. The algorithm runs in phases and in each phase it solves a MDP using the EXP-3-Transfer algorithm and the current set of source policies as input. In line 4, the function $sourcePol(\bfA,d,cost)$ generates the $c$ source policies $\rho_1,\rho_2,\cdots,\rho_c$ from clustering $\bfA$ such that $\rho_j$ is the optimal policy for  $\mdp^j$ where $\mdp^j$ is chosen from $A_j$ according to (\ref{eq_landmark_dV}). If the current phase $h$ satisfies $h \mod J = 0$, then it runs the Search-Clustering algorithm to find a new set of source tasks from the $h$ tasks solved so far.

\section{Experiments}\label{sec_exp}

We performed three sets of experiments to illustrate various aspects and the efficacy of our algorithm\footnotemark. The baseline algorithm for comparison in the first domain was a multi-task hierarchical Bayesian reinforcement learning algorithm proposed by \cite{wilson_fern_ray_tadepalli:2007}, and hence compares our approach to an alternative approach to clustering tasks. In the latter two domains we compare against Probabilistic Policy Reuse (PPR) as introduced by \cite{fernandez_veloso:2006} which, as mentioned in Section \ref{sec_relWork}, is the main prior work on policy reuse algorithms.

In the larger second and third experiments, we report results for our proposed EXP-3-Transfer algorithm, PPR, and standard Q-learning. Additionally, we experiment using different forms of clustering, including no clustering, our Search-Clustering algorithm, clusters chosen manually by hand, and a greedy clustering procedure. The greedy clustering algorithm is to choose a threshold for the $d_V$ distance, then construct clusters by selecting an MDP arbitrarily to seed a cluster, and finally adding all the MDPs with distance less than that threshold to that cluster. In our graphs, we present the results for the best/lowest cost clustering found by using various threshold values.

A summary of the algorithm combinations used are shown in Table \ref{table_expStruct}.


\footnotetext{The code used in both of the experiments, as well as all of the generated data, can be found here: {\tt http://wcms.inf.ed.ac.uk/ipab/autonomy/code/MDP\_Clustering\_code.zip}}

\begin{table}[!h]
\begin{center}
\begin{tabular}{ | l | c | c | c | c |}
\hline
{\bf Algorithm} & {\sc Full} & {\sc Sans } & {\sc Hand-picked} & {\sc Greedy} \\
\hline
{\sc Exp-3-Transfer } & $\checkmark$ & $\checkmark$ & $\checkmark$ & $\checkmark$ \\
{\sc Probabilistic Policy Reuse } & $\checkmark$ & $\checkmark$ & $\checkmark$ & $\checkmark$ \\
{\sc Q-learning  } & N/A & N/A & N/A & N/A \\
\hline
\end{tabular}
\caption{{\sc Experiment setup matrix}. This table shows the combinations of the algorithms and clustering methods used in the experiments. `Full' refers to the use of Search-Clustering, `sans' means without any kind of clustering, `hand-picked'  means using a set of source policies that we selected believing to be optimal, and `greedy' refers to the heuristic threshold-based method (see text for details).}\label{table_expStruct}
\end{center}
\end{table}

For each graph presented in experiments 2 and 3, the results are averaged over 10 different target tasks with 10 trials per task. The various parameters used for the clustering and transfer algorithms are given in Table \ref{table_clustParam}.

\begin{table}[!h]
\begin{center}
\begin{tabular}{ | c | c | c | c | c | c | c | c |}
\hline
\multicolumn{2}{|c|}{ {\sc Exp-3-Transfer}} & \multicolumn{5}{|c|}{\sc Search-Clusterings} & {\sc RL}  \\
\hline
 $T$ & $\delta$ &  $\alpha$ & $\beta$ & $\alpha'$  & $\theta$ & $T_M$                                    & $\gamma$ \\
\hline
 $1000,5000$ and $10^4$ & $0.1$  & $0.1$      & $0.8$   & $0.5$        & $1$        & $10^5$  (for each of $20$ random restarts) & $0.9$ \\
\hline
\end{tabular}
\caption{{\sc Values of algorithm parameters}. This table gives the values for the different parameters used in our various algorithms. RL refers to common parameters for reinforcement learning algorithms. The $T$ parameter was chosen to illustrate effect of this parameter on algorithm performance. The $\delta$ parameter was chosen to allow for a high degree of confidence in the performance of EXP-3-Transfer. The $\alpha,\alpha'$ and $\beta$ parameters where chosen according to Section \ref{sec_optparam}, and $\theta$ was chosen heuristically. The value of $T_M$ was selected because we found that restarting gave better results. $\gamma$ was chosen as appropriate for these domains.} \label{table_clustParam}
\end{center}
\end{table}

We present results from experiments run on three different domains. In Section \ref{sec_colGrid} we use the coloured grid world domain, commonly used in Bayesian reinforcement learning, to compare the clusters found by our proposed algorithm with those of a hierarchical Bayesian method, in order to motivate our clustering approach. In Section \ref{sec_windCorr} we present results from a simple windy corridor domain to demonstrate the clusters produced by our Search-Clustering algorithm. The results show that the clusters found are intuitive in nature. In Section \ref{sec_survDom}, we present results on the more complex surveillance domain (described briefly in the Section \ref{sec_intro}) which is a variant of the kinds of problems that are considered, for instance, by \cite{an_kempe_kiekintveld_shieh_singh_tambe_vorobeychik:2012}. In this experiment we show the performance for the algorithm combinations given in Table \ref{table_expStruct} under various numbers of previous tasks and task lengths.


\subsection{Coloured Grid World Domain}\label{sec_colGrid}

\cite{wilson_fern_ray_tadepalli:2007} introduced a method for modelling the unknown distribution of tasks in a multi-task reinforcement learning problem using a hierarchical infinite mixture model. Their two-layer model is capable of representing a previously unknown number of classes of MDPs, as well as finding the latent distribution of parameters in each class. For any new task, the model acts as a prior over the parameter space, which enables the generation of a faster initial solution, to be subsequently optimised with additional task-specific learning.

This model describes latent structure in the parameter space of the observed tasks. Instead, our proposed clustering algorithm seeks to identify similarities in policies across different tasks.

To compare this infinite mixture model approach with ours, we consider a simplified version of the coloured maze domain of~\cite{wilson_fern_ray_tadepalli:2007}, where each cell of a grid world is coloured with one of two possible colours. Each colour is assigned a weight $w\in[0,1]$, the values of which differ between tasks. The agent navigates the grid world from one corner to the diagonally opposite corner. This is done using four actions, each of which moves the agent to an adjacent cell in the corresponding cardinal direction, deterministically. After every movement, the agent receives a reward equal to the sum of the colour weights of the current cell and the adjacent four cells. The goal of the agent is to maximise the received rewards. Each task is completely described by two parameters (the colour weights) in the unit square. This is intended to simplify visualisation of the resulting clusterings.

We sample 50 tasks uniformly at random from the unit square and cluster them using both the hierarchical infinite mixture model and our framework. Figure~\ref{fig:col_wilson_clus} shows the clustering results of~\cite{wilson_fern_ray_tadepalli:2007}, and Figure~\ref{fig:col_our_clus} shows the clusters obtained by our method. Note that all the tasks that lie on the same line that goes through the origin are equivalent as they have the same colour weight ratio, and thus have the same optimal policy with scaled values. As a result, our method is able to model this fact and summarise the complete set of MDPs into a reduced set of landmark MDPs with equivalent policies. On the other hand, modelling the similarities between tasks based on task parameters will fail to realise this equivalence, and subsequently result in many clusters with similar policies because they share a local neighbourhood in parameter space.

\begin{figure}[h!]
 \centering 
\subfloat[Clustering of~\cite{wilson_fern_ray_tadepalli:2007}]{
    \includegraphics[width=3in]{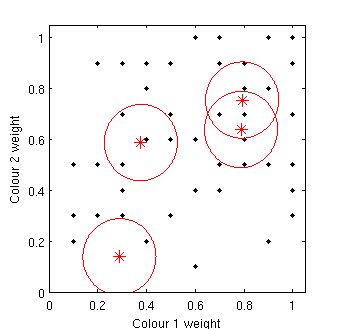}
    \label{fig:col_wilson_clus}
}
\subfloat[Clustering of our proposed framework]{
      \includegraphics[width=3in]{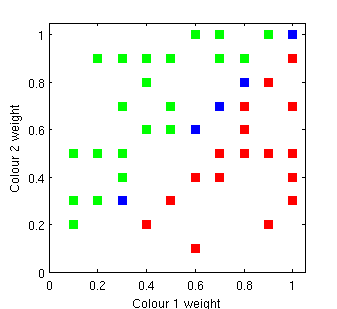}
\label{fig:col_our_clus}
}

\caption{The tasks in parameter space (the weights of the two colours) of the coloured maze domain. Each dot corresponds to a randomly sampled task. (a) Clusters obtained using a hierarchical infinite mixture model. The asterisks are the means of recovered clusters and the circles show three standard deviations. (b) Clusters obtained using our clustering algorithm. Each colour represents a different class. Note how our method captures the similarity in policy much more closely than methods that cluster directly in parameter space.}

\end{figure}

\subsection{Windy Corridor Domain}\label{sec_windCorr}

The windy corridor domain is illustrated in Figure \ref{fig_windDom}. The domain consists of a row of $10$ parallel corridors with a `wind' blowing from the South to the North along two columns of cells near the entrance to the corridors.

\begin{figure}[!h]
\begin{center}
\includegraphics[width = 4.2in]{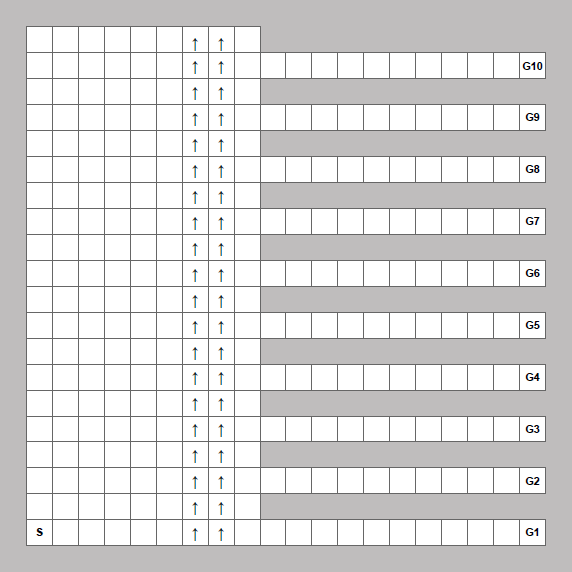}
\end{center}
\caption{{\sc The windy corridor domain.} This shows the $10$ corridors, the location of the goal states (G1-G10) and the direction of the wind (the small arrows). The start state is marked by {\bf s}. }\label{fig_windDom}
\end{figure}

The agent has one action for each possible cardinal direction which moves it in that direction deterministically. In a windy cell, the motion of the agent becomes probabilistic with the probability of moving North being $p$, and moving in the desired direction being $1-p$. $p$ depends on the probability of wind in that cell, which is a task parameter ranging from $0$ to $0.9$.

The MDPs in the domain are distinguished by two values: the location of the goal state and the probability of wind. There are $10$ possible wind probabilities, which together with the $10$ possible goal locations, results in a total of $100$ possible MDPs.

For this domain, we learned the optimal value function for each of the MDPs, and from that computed the distance between every pair of MDPs. This was then used to cluster the MDPs using the Search-Clusterings algorithm. Figure \ref{fig_windClust} presents the final clusters we found for this domain. This figure shows that the best clustering found by the algorithm placed tasks with the same goal state in the same cluster. This follows intuition because, despite the wind probability, the policies required for MDPs with identical goal states will be identical, yet be different for tasks with different goal states. This demonstrates that the Search-Clusterings algorithm is capable of recovering the expected clusters, thereby providing a sanity check for our algorithm, although we note that the results are stochastic and vary between runs.

\begin{figure}[!h]
\begin{center}
\includegraphics[width=5.5in]{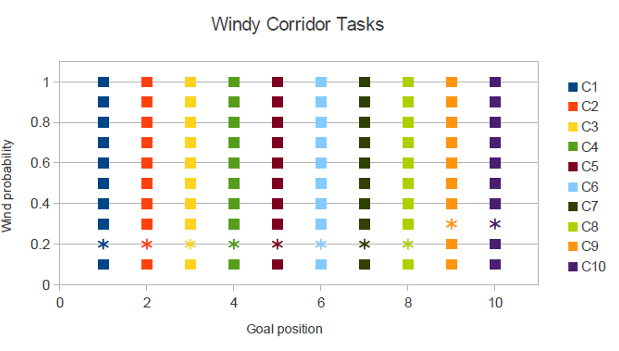}
\end{center}
\caption{{\sc Clustering for the windy corridor domain}. The clustering was obtained by running Search-Clustering on the full set of $100$ MDPs for this domain. Each point in the 2D grid is a MDP with the goal location and wind probability given by the $x$ and $y$ axis respectively. The colours indicate the cluster to which the MDP was found to belong, and shows that all the MDPs with the same goal state are assigned to the same cluster. Additionally, the cluster centres $\mdp^i$ are each marked with a $*$.}\label{fig_windClust}
\end{figure}

In order to illustrate the effects of the clustering, we incrementally built up the full set of MDPs, by presenting them in a random order to the Search-Clustering algorithm, and having it cluster them after the addition of each new MDP. These results are shown in Figure \ref{fig_incrementalclustering}. Note that the allocation of MDPs to clusters remains largely consistent across the presentation of $100$ MDPs. In this case, the algorithm recovers $12$ main clusters.

\begin{figure}[!h]
\begin{center}
\includegraphics[width=6in]{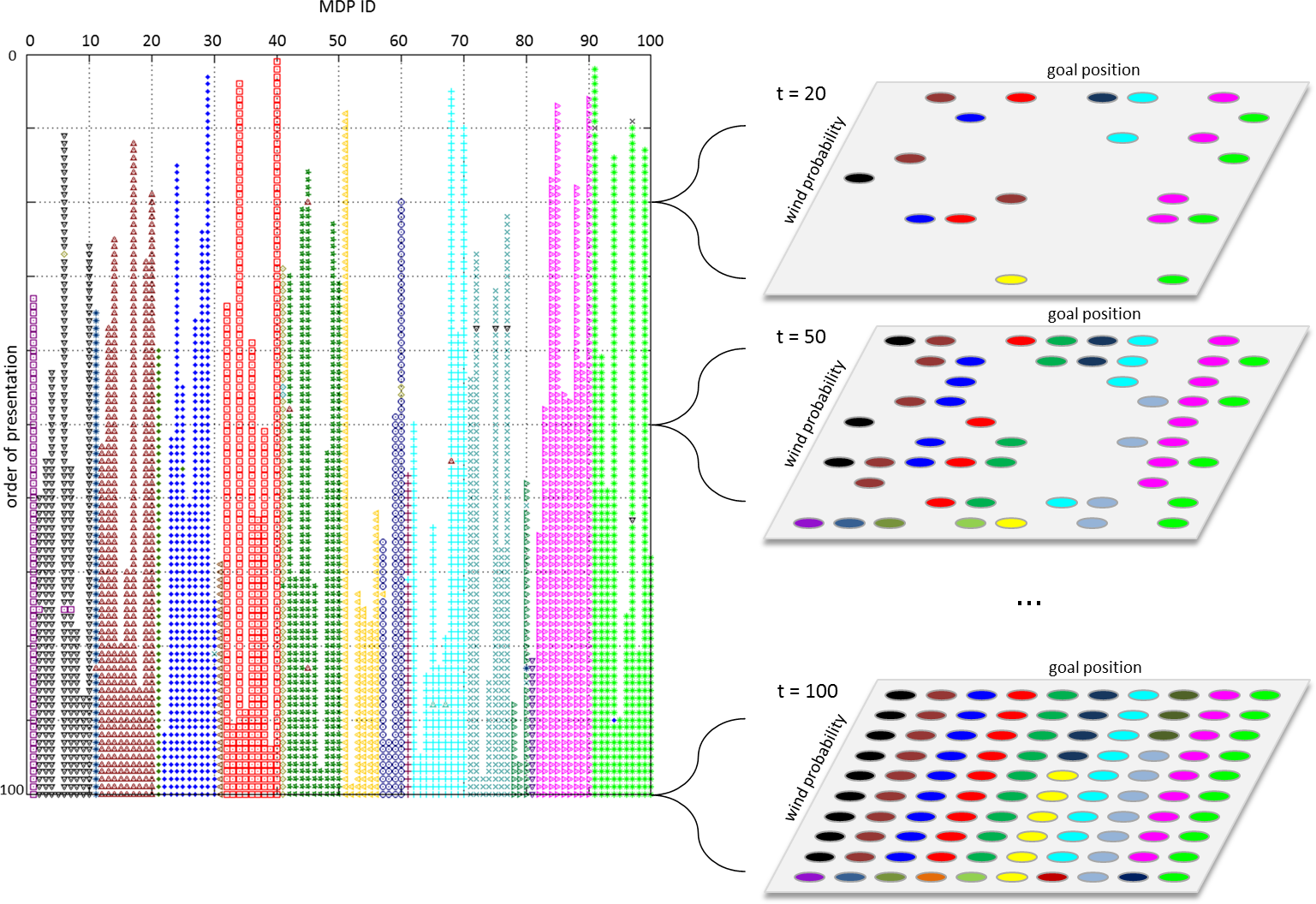}
\end{center}
\caption{{\sc Incremental clustering for the windy corridor domain}. The incremental clustering on the left was obtained by running Search-Clustering on an increasing number of MDPs. The $100$ MDPs (represented by a value on the $x$ axis) in this domain are presented in a random order to the algorithm, with the $y$ axis showing time of first presentation to the algorithm. All points of the same colour have been assigned to the same cluster. On the right are three zoomed-in time slices from the incremental clustering (at $t=20$, $t=50$ and $t=100$ respectively). Each time slice shows the intuitive interpretation of MDPs in the representation of Figure \ref{fig_windClust}.} \label{fig_incrementalclustering}
\end{figure}

\subsection{Surveillance Domain}\label{sec_survDom}

The surveillance domain is illustrated in Figure \ref{fig_survDom}. In this domain, the goal of the agent is to catch infiltrators who wish to break into a target region. There are $L$ different vulnerable locations (abbreviated {\em v-locations}) in the domain, and the infiltrators only choose a subset of those v-locations to infiltrate -- we call these {\em target v-locations}. The type of the infiltrators is defined by the sequence in which they visit the target v-locations and the goal of the agent is to find out where the target locations are and surveil them in the right sequence to find the infiltrators. 

\begin{figure}[!h]
\begin{center}
\includegraphics[width=3in]{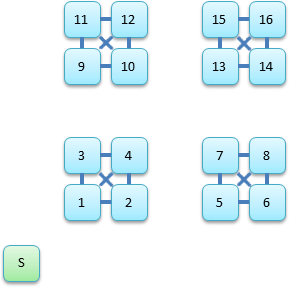}
\end{center}
\caption{{\sc The surveillance domain.} This caricature of the surveillance domain shows $16$ surveillance locations (v-locations) marked in blue. The start location is marked $S$. The full domain used in our experiments is a $48\times 48$ gridworld with $64$ v-locations. Each MDP in the domain requires the agent to surveil $i$ different locations, $i \in \{1,2,3,4\}$ in a particular sequence to receive a positive reward of $200$ for each location surveilled. Surveilling at a wrong location results a negative reward of $-10$ (the infiltrators have escaped). Each action taken gives a reward of $-1$. The target v-locations are clustered spatially into groups of $4$ (as shown by the connections in the figure), such that surveilling one location in the cluster instead of the other results in a reward of $190$ (a penalty of $200 - 190$ = 10) but does not end the episode (see Figure \ref{fig_survDom_traj} for further details).}\label{fig_survDom}
\end{figure}

The actions available to the agent are the motion actions in the cardinal directions as well as a surveillance action, each of which is deterministic in its outcome. Every action taken results in a reward of $-1$, an unsuccessful surveillance action (i.e. inspecting the target v-location in the wrong order) results in a reward of $-10$,  while a successful surveillance action (i.e. inspecting the correct v-location at the right time) results in a reward of $200$. If instead of surveilling the correct v-location, the agent surveils a location {\em adjacent} to it, then it receives a reward of $190$. Each v-location has 3 other v-locations that are adjacent (see Figure \ref{fig_survDom}). Hence two MDPs $\mdp$, $\mdp'$, each corresponding to different v-location sequences $(v_1, v_2, v_3)$ and $(v'_1, v'_2, v'_3)$, are similar in terms of their optimal policy/$d_V$ distance, and belong to the same cluster, if each pair of v-locations $v_i$ and $v'_i$ are adjacent -- because in this case, the optimal policy of $\mdp$ will yield near-optimal sequence of rewards of $190$s when applied in $\mdp'$ and vice versa.

\begin{figure}[!h]
\begin{center}
\includegraphics[scale=0.7]{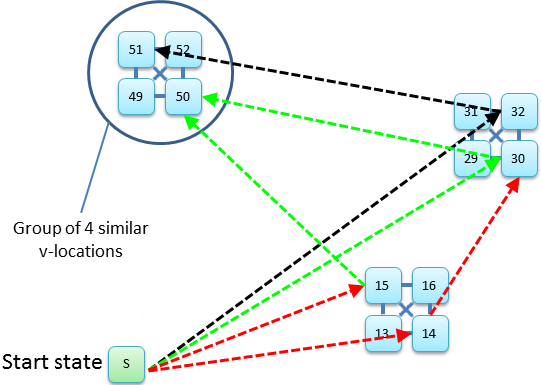}
\end{center}
\caption{{\sc Trajectory Examples.} Example of four different types of trajectories for the surveillance domain, showing three groups of four v-locations. The task MDP requires visiting two v-locations, identified as numbers $30$ and $50$ in sequence. A green line represents moving to the correct v-location in sequence, yielding a reward of $200$. A black line represents moving to an incorrect v-location, which is in the same group as the correct one, for a reward of $190$. A red line depicts movement to the incorrect group, for a reward of $-10$ when the surveillance action is taken. 
}\label{fig_survDom_traj}
\end{figure}

We present the following results for experiments run with a combination of different numbers of previous MDPs (referring to the surveillance tasks which have been encountered before, and possibly subsequently clustered) and numbers of target locations. The results show that the more complex the transfer task is, the better EXP-3-Transfer with clustering performs compared to Probabilistic Policy Reuse, where complexity is measured in terms of the number of previous MDPs and the difficulty of the target task.

\begin{itemize}

\item We compare the performance of EXP-3-Transfer, Probabilistic Policy Reuse and Q-learning as the complexity of the transfer problem increases. Here, the complexity of the transfer problem is both the number of previous tasks, and the complexity of the MDP itself (i.e. the number of target v-locations). These results are referred to as clustering gains.

\item We compare the effect of different types of clusterings (in Table \ref{table_expStruct}) for EXP-3-Transfer with $T = 10,000$. These results are called clustering comparisons.

\item We compare the effect of having different $T \in \{ 1000,5000,10000\}$ for EXP-3-Transfer with clustering for various number of previous tasks. These results are the time comparisons.

\end{itemize}

\subsubsection{Clustering Gains}\label{sec_expClustGain}

We first study the effects of clustering, by comparing the performance of both EXP-3-Transfer and Probabilistic Policy Reuse with and without clustering. The results presented in this section summarise those of the complete experiment set, the results of which are provided in Appendix \ref{app_ful_alg_res}.

Figure \ref{fig_complexity_all} demonstrates the performance of the two algorithms on two different task variants in the surveillance domain: having either a sequence of two or three v-locations. The figure measures the \emph{clustering gain}, being the difference in performance with and without clustering. This figure does not show a comparison with Q-learning. In the full results given in Appendix \ref{app_ful_alg_res}, we show that EXP-3 consistently outperforms Q-learning by a large margin, indicating that our algorithm escapes negative transfer.

\begin{figure}[!h]
      \subfloat[{\sc Cumulative clustering gains.}] { \label{fig_complexity_cum}
    \includegraphics[width=0.5\textwidth ]{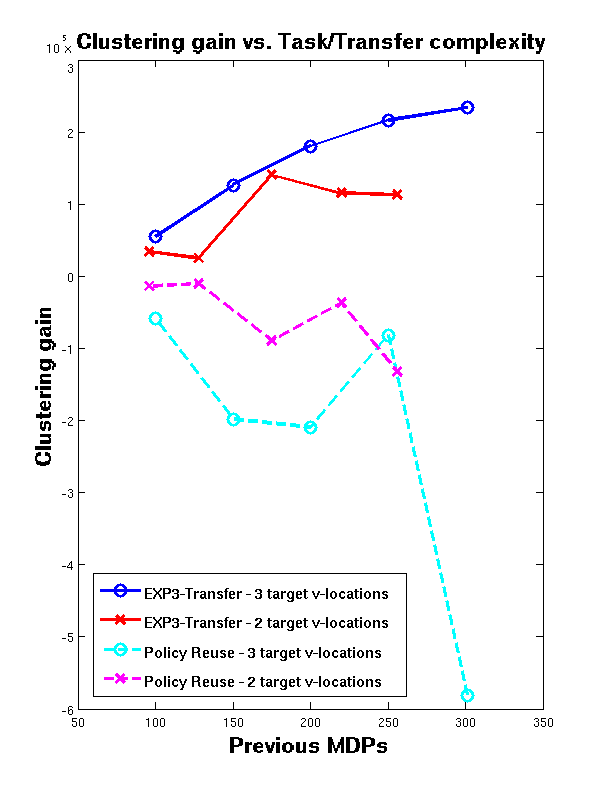} }
          \subfloat[{\sc Final clustering gains.} ] { \label{fig_complexity}
    \includegraphics[width=0.5\textwidth]{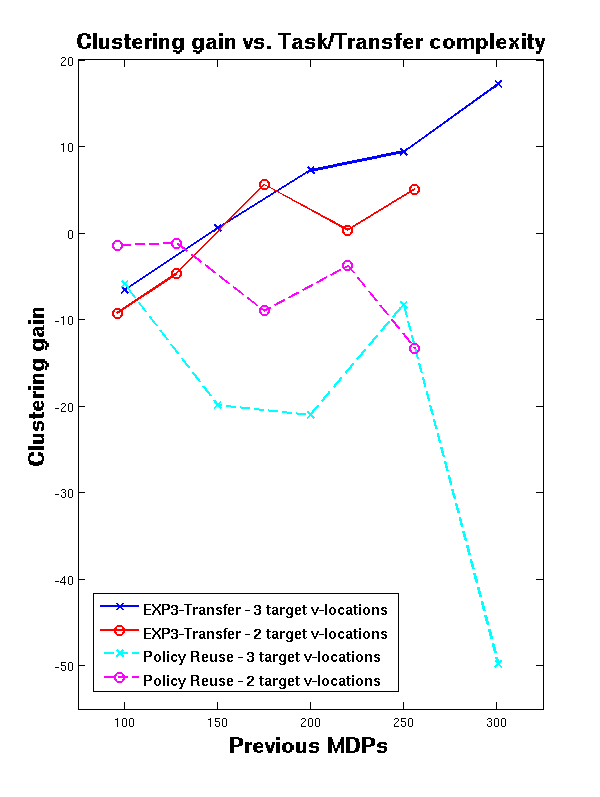}}
\caption{{\sc Clustering Gains.} The above figures show the {\bf clustering gain} for EXP-3-Transfer and Probabilistic Policy Reuse. For each $(x,y)$ data-point in each curve, the $y$-value is the difference in performance with and without clustering when there are $x$ previous MDPs. The performance measured in cumulative clustering gains (Figure \ref{fig_complexity_cum}) is the total cumulative discounted reward over $10,000$ episodes. The performance measure in final clustering gains (Figure \ref{fig_complexity}) is the discounted reward in the final episode. We again note that each $(x,y)$ point is averaged over $10$ different target tasks with $10$ trials per task.}
 \label{fig_complexity_all}
\end{figure}

As can be seen, EXP-3-Transfer always benefits from using clustering. Furthermore, the more complex the task, the better the performance. This is observed in the general upwards trend in the curves with an increasing number of previous MDPs, and the fact that the curve for the 3 v-locations lies above the curve for the 2 v-locations. This result is in complete agreement with our expectations, that in a bandit-like algorithm lowering the number of arms will result in lower regret. In addition, this result indicates that our clustering algorithm retains the correct arms so that with the removal of arms, the performance of EXP-3-Transfer is not affected adversely. 


Interestingly, for Probabilistic Policy Reuse the trend is reversed. It appears that clustering does not help this algorithm, and the clustering additionally becomes more detrimental as the task complexity increases. Our conjecture regarding the reason for this is that Probabilistic Policy Reuse uses the source policies not as potential optimal policies, but rather as exploration devices. By clustering the MDPs, we remove arms and hence reduce the number of exploration policies, which lowers the scope for exploration. This in turn results in negative performance gain for Probabilistic Policy Reuse.

It is also interesting to note that in Figure \ref{fig_complexity}, which shows the gain in terms of the final reward obtained, the initial gain for EXP-3-Transfer and PPR are both negative, and the gain for PPR is higher. However, as the transfer complexity increases (both in terms of previous MDPs and task complexity) the cumulative reward gain becomes positive for EXP-3-Transfer, while for PPR it continues to decrease. 

Given that the above figures show that PPR does not benefit from clustering, we compare the cumulative reward obtained by EXP-3-Transfer with clustering and PPR without clustering for the complex $3$-target-v-locations problem in Figure \ref{fig_complexity_raw}. This result shows that EXP-3-Transfer completely dominates PPR, with the difference becoming particularly stark when the number of previous tasks increases to $300$. 

\begin{figure}[!h]
\begin{center}
\includegraphics[width=0.9\textwidth]{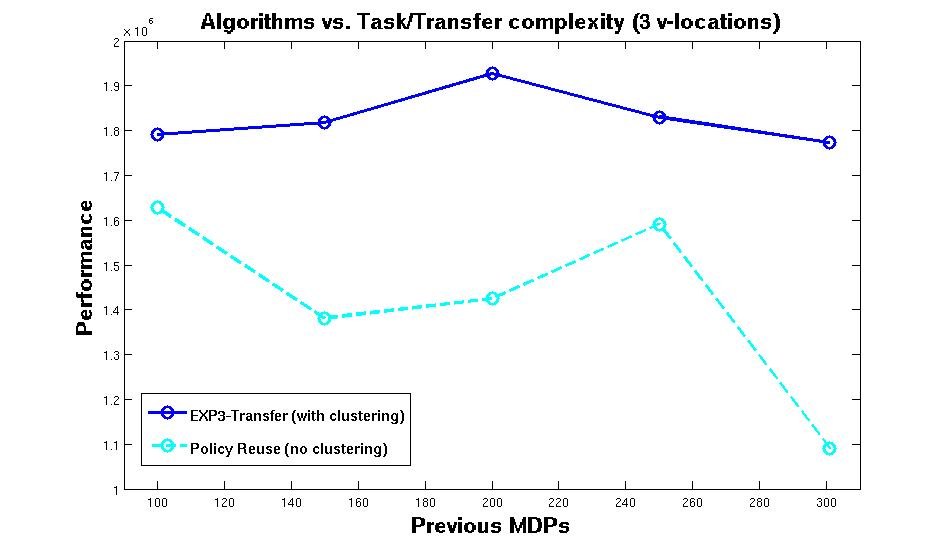}
\end{center}
\caption{{\sc Cumulative reward summary.} This figure shows the final cumulative rewards after $10,000$ episodes for EXP-3-Transfer with clustering and Probabilistic Policy Reuse without clustering for the surveillance domain with $3$-target-v-locations. The $x$-axis shows the number of previous MDPs.}\label{fig_complexity_raw}
\end{figure}

\subsubsection{Clustering Comparison}\label{sec_expClustComp}

We now compare the performance of EXP-3-Transfer when using the different types of clustering methods reported in Table \ref{table_expStruct}. As in the previous section, we examine the change in performance with increasing complexity of the transfer tasks. The summary of these results is given in Figure \ref{fig_clustComp_nostar}, and again, the full results are provided in Appendix \ref{app_ful_alg_res}.

\begin{figure}[!h]
\begin{center}
\includegraphics[width=0.93\textwidth]{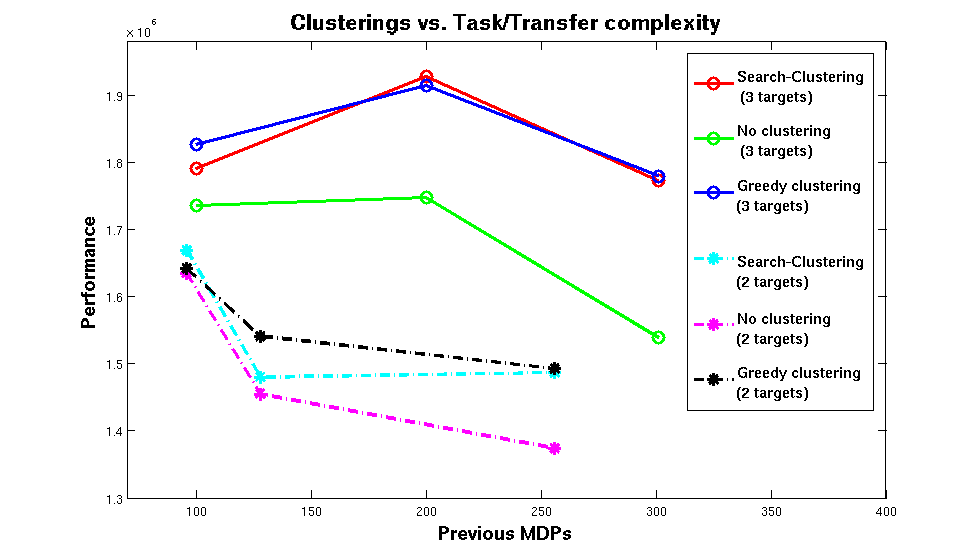}
\end{center}
\caption{{\sc Effect of clustering methods.} This figure compares the performance of EXP-3-Transfer run with no clustering, the greedy clustering procedure and Search-Clustering. The performance is measured in terms of the total cumulative discounted rewards over $10,000$ episodes. }\label{fig_clustComp_nostar}
\end{figure}

As can be seen, EXP-3-Transfer using Search-Clustering to obtain the source policies outperforms the case when we do not cluster the previous tasks. This confirms the result reported in the previous section. However, in addition we also observe that transfer after using the greedy clustering scheme performs about as well as using Search-Clustering for the 100 and 260 previous MDPs, but for the 125 previous MDPs, Search-Clusterings is significantly better. Search-Clustering and Greedy Clustering are comparable for those two numbers of previous MDPs largely due to the structure of the domain, where every element of each group of tasks is similar to every other task in the same group. That is, when the agent surveils any v-location in the same group as the true v-location required by the task it receives near-optimal rewards (see Figure \ref{fig_survDom_traj}). 

However, greedy clustering would fail on a more complex variant of the surveillance domain. To understand this issue, recall from the beginning of Section \label{sec_survDom} that two MDPs $\mdp$, $\mdp'$, corresponding to v-location sequences $(v_1, v_2, v_3)$ and $(v'_1, v'_2, v'_3)$, are similar in terms of their optimal policy, and hence belong to the same cluster, if each pair of v-locations $v_i$ and $v'_i$ are adjacent. In this case surveiling $v_i$ instead of $v'_i$ yeilds a reward of $190$ in $\mdp'_i$ and {\em vice versa}. Hence within each correct cluster of MDPs we have a {\em symmetry}: the $d_V$ distance between each pair of MDPs is very similar. At the same time the $d_V$ distance to MDPs not in the same cluster is quite different. So if in greedy clustering if we start with any MDP in a given cluster, we will find the other MDPs in the cluster. Greedy clustering will no longer work if we break this symmetry by explicitly modifying the domain definition. 

One simple way to do this is as follows. In the complex domain, for each correct cluster we have a single MDP (the centroid $\mdp^i$) that has low $d_V$ distance to all other elements of the cluster, while every other MDP of the cluster has a high $d_V$ distance to the other non-centroid elements. To get this effect, we have 5 v-locations per group of adjacent locations. We then change the reward function so that in each group of adjacent locations $\{v^1, v^2, v^3, v^4, v^5\}$ (see figure \ref{fig_graphDom}), the reward is asymmetric - we designate a single v-location, say $v^1$ which yields a reward of $190$ if it is surveilled instead of $v^i$ and vice-versa, but for all the other v-locations, surveilling $v^j$ instead of $v^i$, $i, j \neq 1$, yields a reward of $-10$. Hence the $\mdp^i$ would be the MDP with v-location sequence $(v^1_1, v^1_2, v^1_3)$. In this case, greedy clustering will fail to learn every cluster for which it does not start with the centroid MDP $\mdp^i$ because the for the non-centroid MDPs, the $d_V$ distance to the other MDPs would be too large. The result for the complex domain is given in table  \ref{table_clustComp_star}. We see that, transfer using the clusters produced by Search-Clustering outperforms transfer using clusters discovered by greedy clustering approach by a factor of 2 as we increase the number of previous MDPs. Recall that each point is obtained from averaging over 10 different target tasks. 

\begin{table}[!h]
\begin{center}
\begin{tabular}{ | c | c | c |}
\hline
{\bf No. of Previous MDPs} & {\bf Search Clustering} & {\bf Greedy Clustering} \\
\hline
100 & $8.7102 \times 10^6$ & $8.3667 \times 10^6$ \\
\hline
200 & $8.3667 \times 10^6$ & $3.809 \times 10^6$ \\
\hline
\end{tabular}
\end{center}
\caption{This table gives the performance of EXP-3-Transfer in the complex surveillance domain when using greedy clustering vs Search-Clusterings for tasks with 2 target v-locations. The performance is measured in terms of the total cumulative discounted rewards over $10,000$ episodes and averaging over 10 different target tasks. The table shows that the performance of Search-Clusterings remains steady while the performance of greedy clustering falters when we have more previous tasks to draw from.}\label{table_clustComp_star}
\end{table}


 \begin{figure}[!h]
 \begin{center}
 \includegraphics[scale=0.7]{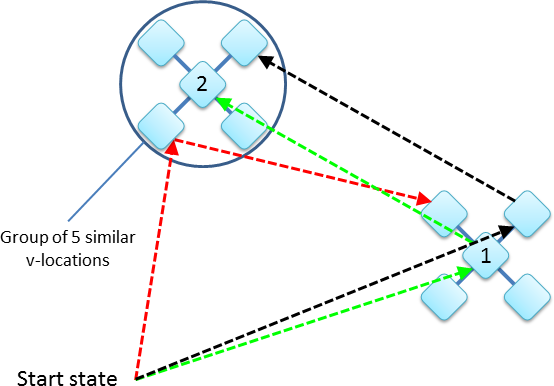}
 \end{center}
 \caption{{\sc The complex graph domain.} The complex graph domain consists of groups of 5 adjacent v-locations. In our experiments we used $16$ groups but for simplicity we only show two groups here. Surveilling the central v-location $v^1$ instead of any of the others has reward $190$ and vice-versa, while survelling $v^i$ instead of $v^j$, $i, j \neq 1$, has reward $-10$. As before, each MDP consists of surveilling the correct sequence of target v-locations, depicted here as `1' followed by `2'. As in Figure \ref{fig_survDom_traj}, a green line shows correct movement (reward $200$), a black line shows movement to the wrong v-location in the right group (reward $190$), and a red line corresponds to moving to the wrong group before surveilling (reward $-10$).
 }\label{fig_graphDom}
 \end{figure}

\subsubsection{Time Comparisons}\label{sec_expTimeComp}

Finally we examine the effect of the $T$ parameter on performance on the original surveillance domain. Recall that the $T$ parameter affects both the clustering algorithm Search-Clustering and EXP-3-Transfer, and is the time duration in terms of the number of episodes over which the transfer procedure is run. We performed experiments with $7$ different combinations of numbers of previous MDPs and MDP complexity. The results of these are all qualitatively similar, and so we present only two graphs in Figure \ref{fig_time_all} for the most complex and the least complex transfer problem we have considered. We relegate the remaining graphs for the rest of the experiments to Appendix \ref{app_timeComp}. 

\begin{figure}[!h]
      \subfloat[{\sc Low complexity transfer problem.}] { \label{fig_time_low}
      \includegraphics[width=0.5\textwidth]{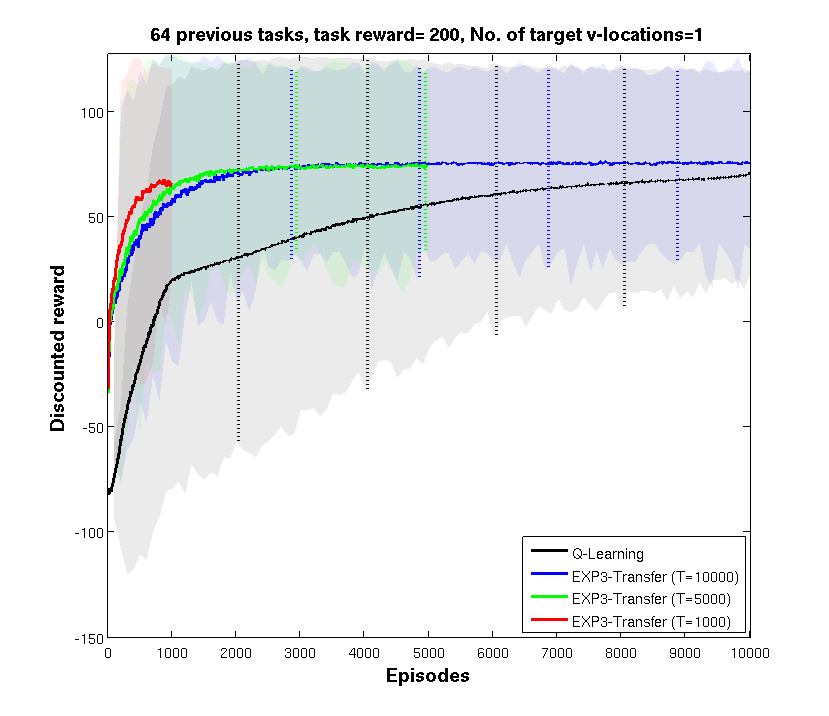} }
          \subfloat[{\sc High complexity transfer problem.} ] { \label{fig_time_high}
        \includegraphics[width=0.5\textwidth]{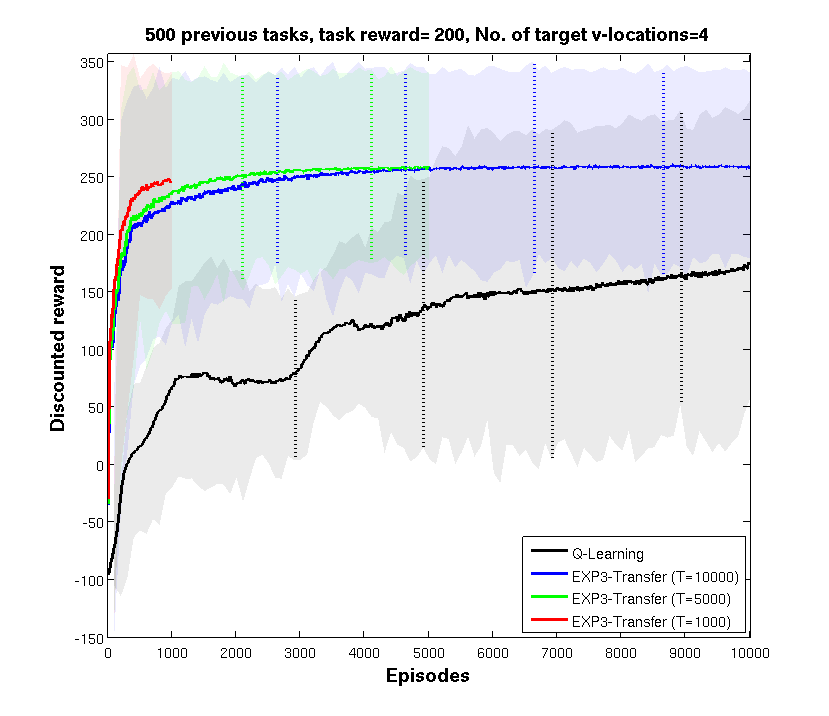}}
\caption{{\sc Effect of the $T$ parameter}. These figures show the learning curve of EXP-3-Transfer when run for different numbers of time steps (parameter $T$). This affects both the clustering and the arms chosen by EXP-3-Transfer. The parameters for each experiment are given in the title of the respective figure. As shown, for shorter $T$, the EXP-3-Transfer run with the lowest $T = 1,000$ is optimal. For the intermediate duration, $T=5,000$ is optimal, and for the remaining time $T=10,000$ is optimal.  Figures \ref{fig_time_low} and \ref{fig_time_high} respectively give the curves for the lowest and highest complexity task that was run. The shaded areas indicate one standard deviation.}
 \label{fig_time_all}
\end{figure}

As these results show, the performance curve for EXP-3-Transfer with parameter $T$ lies above the curves of EXP-3-Transfer with parameter $T' > T$, for any $t \leq T$. This illustrates the effect of optimising transfer performance for a set time duration.

\section{Conclusion}\label{sec_conc}

In this paper we developed a framework to concisely represent a large number of previous MDPs by a smaller number of source MDPs for transfer learning. We presented a principled online transfer learning algorithm, a principled way to evaluate source sets for use in this algorithm and way to find the source set. The key idea was to cluster the previous MDPs and then use the representative element of each cluster as the source tasks. We also presented extensive experiments to show the efficacy of our method. We now discuss several interesting  directions for future work.

In this paper we only considered discrete domains. However, it is possible to translate the overall approach to the continuous setting. In particular, to apply our approach to continuous space problem, all we will need is a pure RL algorithm (as an arm in EXP-3-Transfer) and a way to evaluate policies (to compute the $d_V$ distances). All our definitions, algorithms and results will then hold true in this setting. This is because our algorithms EXP-3-Transfer, MHAV and Search-Clustering and distance function $d_V$ treats the underlying MDPs and policies as black boxes with certain properties. The discreteness of the MDP is never exploited or required in either the algorithms or their analysis. 


Finally, we end by pointing out that the idea of clustering a set of tasks to obtain a representative set is much more general. For instance, any other cost function derived under different assumptions can be applied with the clustering approach. As another example, the clustering approach may also be used in multi-agent systems to group together opponents according to whether the same policy of ours is equally effective against opponents in the same group. It will also be interesting to implement these methods and algorithms on scaled up, real version of the types of problems considered in this paper. We plan to pursue these and other extensions to the above in future work.

\clearpage

\clearpage 
\appendix

\section{Proofs}\label{app_proofs}

\begin{proof}[Proof of Theorem \ref{thm_EXPBound}]
A direct application of Corollary 3.2 in \citep{auer_cesa-bianchi_freund_schapire:2002} is not possible because our algorithm diverges from EXP-3 because the number of arms possibly decreases across time steps. The proof of the first part, while structurally similar tp the proof of Theorem 3.1 of \cite{auer_cesa-bianchi_freund_schapire:2002}, is different in some crucial detail due to the removal of arms/policies.  The second part, where we deal with arms that were removed, is novel.


Recall that $C_t$ is the set of policies not yet eliminated at the beginning of step $t$ EXP-3-Transfer, and let $c_t = |C_t|$. Let $W_t \eqdef \sum_{i \in C_t} w_i(t)$, and $\tldW_{t+1} \eqdef \sum_{i \in C_t} w_i(t+1)$ (note the $C_t$, rather than $C_{t+1}$ in the summation in $\tldW_{t+1}$). Then for all sequences of policies $i_1,i_2,\cdots, i_T$,  drawn by EXP-3-Transfer, 
\begin{align}\label{eq_Wt_Wtm1_rat}
\nonumber
& \frac{W_{t+1}}{W_t} \ovst{(1)}{\leq} \frac{\tldW_{t+1}}{W_t} 
\ovst{(2)}{=} \sum_{i \in C_t} \frac{w_{i}(t+1)}{W_t} 
\ovst{(3)}{=} \sum_{i \in C_t} \frac{w_{i}(t)}{\sum_{i\in C_{t}} w_i(t)}\exp[\beta \hat{x}_i(t)/(c+1)]\\
\nonumber 
& \ovst{(4)}{=} \sum_{i \in C_t} \frac{p_i(t) - \beta/c_t}{1-\beta}   
 \exp[\beta\hat{x}_i(t)/(c+1)]
\ovst{(5)}{=} \sum_{i \in C_t} \frac{p_i(t) - \beta/c_t}{1-\beta}   
 \exp[\beta\hat{x}_i(t)/c_t]
\\
&\ovst{(6)}{\leq} 1 + \frac{\beta/c_t}{1-\beta}x_{i_t}(t) + \frac{(e-2)(\beta/c_t)^2}{1-\beta}\sum_{i \in C_t} \hat{x}_i(t)
\end{align}
In the above, $^{(1)}$ follows because $\tldW_{t+1} \geq W_{t+1}$ as $|C_t| \geq |C_{t+1}|$, and because all the weights $w_{t+1}$ are positive. $^{(2)}$ follows by the update equation for $w_i(t+1)$ in line \ref{stp_updatewt} of EXP-3-Transfer. $^{(3)}$ follows by definition of $W_t$. $^{(4)}$ follows by definition of $p_i(t)$ in line \ref{stp_defpit} of EXP-3-Transfer. $^{(5)}$ holds because $c_t \leq c+1$ and because the term in the exponential is positive. Finally, $^{(6)}$ follows by the identical reasoning used to derive (8) in the proof of Theorem 3.1 \citep{auer_cesa-bianchi_fischer:2002}. 

Now, we can proceed along similar lines as in Theorem 3.1. Using the fact that $1+x \leq \exp(x)$ gives us 
\begin{equation}\nonumber
\ln\frac{W_{t+1}}{W_t} \leq \frac{\beta/c_t}{1-\beta}x_{i_t}(t) + \frac{(e-2)(\beta/c_t)^2}{1-\beta}\sum_{i \in C_t} \hat{x}_i(t)
\end{equation}
Now since $c_t$ is non-increasing, $\beta/c_t \leq \beta/c_{T}$, and so
\begin{equation}\nonumber
\ln\frac{W_{t+1}}{W_t} \leq \frac{\beta/c_T}{1-\beta}x_{i_t}(t) + \frac{(e-2)(\beta/c_T)^2}{1-\beta}\sum_{i \in C_t} \hat{x}_i(t)
\end{equation}
Summing over $t$ telescopes and gives us 
\begin{equation}\nonumber
\ln\frac{W_{T}}{W_1} \leq \frac{\beta/c_T}{1-\beta}\sum_{t=1}^T x_{i_t}(t) + \frac{(e-2)(\beta/c_T)^2}{1-\beta}\sum_{t=1}^T\sum_{i \in C_t} \hat{x}_i(t)
\end{equation}
Now, going in the opposite direction, for each $\rho_j \in C_T$ we have
\begin{equation}\nonumber
\ln\frac{W_{T}}{W_1} \geq \ln \frac{w_j(T)}{c+1} = \frac{\beta}{c_T}\sum_{t = 1}^T\hat{x}_j(t) - \ln(c+1)
\end{equation}
Putting these together, we have for each $\rho_j \in C_T$,
\begin{equation}
\sum_{t=1}^T x_{i_t}(t) \geq (1-\beta)\sum_{t = 1}^T\hat{x}_j(t) - \frac{c_T\ln(c+1)}{\beta} - \frac{(e-2)\beta}{c_T}\sum_{t=1}^T  \sum_{i \in C_t} \hat{x}_i(t)
\end{equation}
Rearranging, we get 
\begin{equation}
\sum_{t = 1}^T\hat{x}_j(t) - \sum_{t=1}^T x_{i_t}(t) \leq \beta\sum_{t = 1}^T\hat{x}_j(t) + \frac{c_T\ln(c+1)}{\beta} + \frac{(e-2)\beta}{c_T}\sum_{t=1}^T  \sum_{i \in C_t} \hat{x}_i(t)
\end{equation}
Taking expectation in terms of the randomization in the algorithm in both sides above, conditional of $\rho_j \in C_T$,  we have  
\begin{equation}
\sum_{t = 1}^T x_j(t) - \expc_{E3T}[\sum_{t=1}^T x_{i_t}(t)| \rho_j \in C_T ] \leq \beta\sum_{t = 1}^T x_j(t) + \frac{c_T\ln(c+1)}{\beta} + \frac{(e-2)\beta}{c_T}\sum_{t=1}^T  \sum_{i \in C_t} x_i(t)
\end{equation}

For the final term on the right hand side, we used the fact  $\expc[\hat{x}_i(t)|i_1,i_2,\cdots,i_{t-1}, \rho_j \in C_T] \leq x_i(t)$ for any $i$. To see this, note that there are two possibilities -- either $\rho_i \in C_t$, or $\rho_i \not\in C_t$. If $\rho_i \in C_t$ then $\expc[\hat{x}_i(t)|i_1,i_2,\cdots,i_{t-1}, \rho_j \in C_T] = x_i(t) \geq 0$, and otherwise, $\expc[\hat{x}_i(t)|i_1,i_2,\cdots,i_{t-1}, \rho_j \in C_T] = 0$ as $p_i(t) = 0$. 

Now, using the fact that $\sum_{t=1}^T \sum_{i \in C_T} x_i(t) \leq c_T T$ (as $x_i(t) \in [0,1]$) and then rearranging, we get for each $\rho_j\in C_T$, 
\begin{multline}\nonumber
\sum_{t = 1}^T x_j(t) - \expc_{E3T}[\sum_{t=1}^T x_{i_t}(t)| \rho_j \in C_T ] \leq \beta\sum_{t = 1}^T x_j(t) + c_T\ln(c+1) + (e-2)\beta T \\= \frac{c_T\ln(c+1)}{\beta} + (e-1)\beta T
\end{multline}
Plugging in the value of $\beta$ in the theorem statement we get
\begin{equation}\label{eq_unrem}
\sum_{t = 1}^T x_j(t) - \expc_{E3T}[\sum_{t=1}^T x_{i_t}(t)| \rho_j \in C_T ] \leq  
\left( \frac{|C_T|}{\sqrt{c+1}} + \sqrt{c+1} \right ) 
\sqrt{(e-1)\ln(c+1)/T}
\end{equation}

Now note that $\Delta R(1-\gamma)^{-1} \expc_{P,R}[\sum_{t = 1}^T x_j(t)] = TV^{\rho_j}$, so taking expectation with respect to $P,R$, dividing by $T$ and multiplying by $\Delta R(1-\gamma)^{-1}$ we get from (\ref{eq_unrem})
\begin{equation}\nonumber
V^{\rho_j}  - \frac{1}{T}\expc_{E3T}[\sum_{t=1}^T \bar{x}_{i_t}(t)| \rho_j \in C_T ] \leq 
\frac{\Delta R}{1-\gamma} \left( \frac{|C_T|}{\sqrt{c+1}} + \sqrt{c+1} \right ) 
\sqrt{(e-1)\ln(c+1)/T}
\end{equation}
This completes the proof of the first part.

To prove the second part, we need the Hoeffding bound (see, for instance, \citep{dubhashi_panconesi:2009} for an exposition) which states that if $y_{1:n} \eqdef y_1,y_2,\cdots,y_n$ are i.i.d. draws of a random variable $Y$, with $Y_i \in [a,b]$, and $\bar{y}_n$ is the empirical mean of the $y_i$, then 
\begin{equation}\label{eq_HoeffBnd}
Pr[ |\bar{y}_n- \expc(Y)| > \eps] \leq \exp[-2n\eps^2/(b-a)^2]
\end{equation}
In the sequel, we will assume that $b = 1, a= 0$, and so the denominator in the exponent on the left hand side of the equation is just $1$. This bound then has the following  simple and well known consequence. Assume we have two i.i.d. samples $y_{1:n}$ and $y'_{1:m}$, drawn from two random variables $Y$ and $Y'$. Assume that $\bar{y}_n - \bar{y'}_m > \eps$, and $n$ and $m$ both satisfy $\exp[-2n\eps^2/4] \leq \delta'/2$ and $\exp[-2m\eps^2/4] \leq \delta'/2$. Then, by (\ref{eq_HoeffBnd}) 
\begin{equation}
Pr[ |\bar{y}_n- \expc(Y)| > \eps/2] \leq \delta'/2, \qquad Pr[ |\bar{y'}_n- \expc(Y')| > \eps/2] \leq \delta'/2 
\end{equation}
Then, by the triangle inequality and the union bound, with probability at least $1-\delta'$, $\expc(Y) > \expc(Y')$. 

Now in line \ref{stp_deltaRemove} of EXP-3-Transfer, we remove a source policy arm if  $\eps = z_j/n_j - z_k/n_k$, we have  $\eps/2 > \sqrt{-\ln (\delta/2c) (2n_j)^{-1}}$ and $\eps/2  > \sqrt{-\ln (\delta/2c)(2n_k)^{-1}}$. This implies, that with probability $> 1-\delta/c$,  $V^{\rho_j} > V^{\rho_k}$. Since there are $c$ arms, this implies that if there is an arm that was removed, by the union bound with probability at least $> 1-\delta$, $V^{\rho_j} > V^{\rho_k}$ for some arm $j$ for every arm $k$ that is eventually removed. 

\end{proof}

%

For the next two proofs, we need to restate Lemma 1 in \citep{strehl_littman:2008}. 
\begin{lemma}\label{lemma_strehl}[Strehl, Li and Littman]
Let the new $\mdp_{N+1}$ have transition and reward functions $R_{N+1}$ and $T_{N+1}$. Let $|R_i(s,a) - R_{N+1}(s,a)| \leq K_i$ and $|T_i(.|s,a) - T_{N+1}(.|s,a)|  \leq K'_i$ where $R_i$ and $T_i$ are the reward and transition functions for MDP $\mdp_i$. Then, for any policy $\pi$ and state $s$,
\begin{equation}\label{eq_strehl}
|V_{N+1}^\pi(s) - V_i^\pi(s)| \leq K(i)
\end{equation}
where $K(i) \eqdef \frac{K_i + \gamma K'_i R_{max}}{(1-\gamma)^2}$ (first defined in (\ref{eq_Kidef})).
$\square$
\end{lemma}
Now we can state our proof.

\begin{proof}[Proof of Lemma \ref{lemma_avgQuant}]
Fix any previous $\mdp_k$ and let $\mdp^{i}$ be the centroid of the cluster $A_{i}$ of $\bfA$ such that $\mdp_k \in A_{i}$. By definition, the optimal policy of $\mdp^{i}$, used as an arm in EXP-3-Transfer, is $\rho_{i}$ and $V^{\pi^*_k}_k - V^{\rho_{i}}_k \leq \eps_{i}$. By Lemma \ref{lemma_strehl}, $|V^{\pi^*_k}_{N+1} - V^{\pi^*_k}_k| \leq K(k)$ and $|V^{\rho_{i}}_{N+1} - V^{\rho_{i}}_k| \leq K(k)$. Putting the three inequalities together, we have $|V^{\pi^*_k}_{N+1} - V^{\rho_{i}}_{N+1}| \leq \eps_{i} + 2K(k)$. By Theorem \ref{thm_EXPBound}, if $\rho_i$ was not eliminated by EXP-3-Transfer, $V^{\rho_{i}}_{N+1} - \expc[G_{E3T}]/T \leq g(c)$, and therefore $V^{\pi^*_k}_{N+1} - \expc[G_{E3T}]/T \leq g(c) + \eps_{i} + 2K(k)$.

Now consider the case where the policy $\rho_i$ was eliminated at some point. Then, by the second part of Theorem \ref{thm_EXPBound}, there exists a policy $\rho_{i'}$ for which with probability at least $1-\delta$, $V^{\rho_i} \leq V^{\rho_{i'}}_{N+1}$. From the relationship between $\rho_{i}$ and $\pi^*_k$ established above, we get that with probability $1-\delta$, $V_{N+1}^{\pi^*_k} \leq V_{N+1}^{\rho_{i'}} + \eps_i + 2K(k)$.

\qed
\end{proof}

\begin{proof}[Proof of Theorem \ref{thm_npcomplete}]
First, let $|V| = M$, and given any ordering of the elements of $V$, identify each vertex $v \in V$ with its position in the ordering -- so we can take $V = \{1,2,\cdots,M\}$. Let $\mdp_1,\mdp_2,\cdots,\mdp_{M}$ be a set of MDPs defined on a state space  $\stts = \{s\}$, and action space $\acts = \{1,2,\cdots,M\}$. The transition function for the MDPs in this case is trivial (all actions transition with probability $1$ from $s$ to $s$). The reward function for MDP $i$ defined as follows. $R_i(s,a_i) = 0$; if $(i,j) \in E$ then $R_i(s,a_{j}) = 0$, otherwise  $R_i(s,a_{j}) =- h M g(M)$ where $h > 1$ and $g$ is the function used in Definition \ref{def_cost2} to define the cost function for clusters. In the following we will identify MDP $\mdp_i$ with vertex $i\in V$ and this way show that the optimal clustering for this corresponds to a maximal clique under the mapping $i \rta v_i$ and $A \rta V_i$.

By construction, $\pi^*_i(s) = a_{i}$, $V^*_i = 0$, $V^{\pi^*_j}_i = V^{a_j}_i = 0$ iff $(i,j) \in E$, and $V^{a_j}_i = - h M g(M)$ otherwise. Hence, by the definition in (\ref{eq_mdpDDef}), 
\begin{equation}\label{eq_npDVCond}
d_V(\mdp_i,\mdp_j) = 
\begin{cases}
0 &\mbox{ iff } (i,j) \in E\\
h M g(M) &\mbox{ otherwise } \\
\end{cases}
\end{equation}
Now recall that $cost_{m} \eqdef g(c) + \bar{\eps}_m$. Let an optimal clustering be $\bfA^*$ and let $A(\mdp_i)$ denote the cluster in $\bfA^*$ that $\mdp_i$ belongs to. We now show that if $\mdp_{i_1},\mdp_{i_1},\cdots,\mdp_{i_l} \in A \in \bfA^*$, then $i_1,i_2,\cdots,i_l$ form a clique in $G$. In other words, we show that, if $A(\mdp_i) = A(\mdp_j)$ then $(i,j) \in E$, or equivalently $(i,j)\not\in E$ then $A(\mdp_i) \neq A(\mdp_j)$. By way of contradiction, assume that $(i,j) \not\in E$ but $A(\mdp_i) = A(\mdp_j)$. Since $i,j$ do not have an edge between them, by (\ref{eq_npDVCond}) the diameter of $\bfA^*$ is at least $d_V(\mdp_i,\mdp_j) = h M g(M) / M = h g(M)$. Which in turn implies that $cost_{m}(\bfA^*) = g(|\bfA^*|) + hg(M)$. Now consider the clustering $\bfA'$ obtained by putting each MDP $\mdp_i$ in its own cluster. This clustering has cost $g(M) + 0 < g(|\bfA^*|) + hg(M)$ -- contradicting the optimality of $\bfA^*$. Hence, the clusters of $\bfA^*$ has cost $g(|\bfA^*|)$  and corresponds to a collection of cliques that partition $V$ -- denote this collection of cliques by $J^*$.

Now note that each collection of cliques $V_1,V_2,\cdots V_j$ that partition $V$ correspond to a clustering $\bfA$ such that $\mdp_i,\mdp_j \in A$ iff $i,j \in V_l$ for some $l$; in this case $j = |\bfA|$.  Now assume that there is a collection of cliques $I$ such that $|I| < |J^*|$ and let the corresponding clustering be $\bfA_I$. Then we show that $cost_{m} (\bfA_I) < cost_{m}(\bfA^*)$, resulting in a contradiction.  To see this note that each $\mdp_i,\mdp_j \in A \in \bfA_I$ then  $d_V(\mdp_i,\mdp_j) = 0$ by (\ref{eq_npDVCond}). Hence the diameter of $\bfA_I$ = $0$. So the cost of $\bfA_I$ is $g(|\bfA_I|) + 0 < g(|\bfA^*|)$ since by definition of $\bfA_I$, $|\bfA_I| < |\bfA^*|$. 

Because of the contradiction, $J^*$ is indeed a minimum clique cover, showing that the problem of minimum clique cover can be reduced to the problem of finding the optimal clustering. To complete the proof, we need to show that this reduction takes polynomial time. The only cost in computing a $\mdp_i$ is setting the reward function, which takes time $C|V|$ for some constant $C$. 
\end{proof}



\begin{proof}[Proof of Lemma \ref{lemma_lpIrr}]
To show irreducibility we have to show that for any $(\lambda,y)$ and $(\lambda',y')$ there exists a $n$ such that $\psmh^n[\lambda',y'| \lambda,y ]> 0$. To see this, first note that $\phi_Y$ was assumed to be irreducible. So, there exists a $n_1$ such that with $\phi_Y^n( y' | y) > 0$. Now consider a particular path $\bfy \eqdef yy_1y_2\cdots y_{n-1}y'$ with probability $>0$ under $\phi_Y$. From the definition in (\ref{eq_PMHDef}), the probability under $\psmh$ of each transition $y_{i} \rta y_{i+1}$ is 
\begin{equation}\nonumber
\beta \bar{\phi}[\lambda,y_{i+1}| \lambda,y_i ]\bar{\accpt}_{ \lambda,y_i}[\lambda,y_{i+1}] 
> \beta b \phi_Y[\lambda,y_{i+1}|\lambda,y_i]
\end{equation}
where the inequality follows as $\bar{\accpt}_\cdot[\cdot] > b$ by assumption in the Lemma statement. Hence, the total probability of the path $yy_1y_2\cdots y_{n-1} y'$ under $\psmh$ is lower bounded by $b^n\beta^n \phi_Y(\bfy)$ (where $\phi(\bfy) = \prod_{i=0}^n\phi(y_{i+1}|y_i)$). Summing over all possible paths of length $n$ going from $y$ to $y'$ gives that the probability of each $(\lambda,y')$ from $(\lambda,y)$ is lower bounded by $b^n\beta^n \phi_Y^n( y' | y) > 0$.

Now assume that $\lambda = \lambda_k$ while $\lambda' = \lambda_{k'}$. If $k < k'$, we can bound the probability under $\psmh$ of going from $(\lambda_i,y')$ to $(\lambda_{i+1},y')$,  where $k \leq i < k'$, by $z_i \eqdef \alpha\alpha'\frac{(1-\alpha)}{\alpha} (\lambda_{i+1}/\lambda_i)^{-f(y)}$ (this follows from definition of $\psmh$ and $\bar{\accpt}$). Hence we reach $(\lambda',y')$ from $(\lambda,y')$ with probability $z \eqdef \prod_{i=k}^{k'-1}z_i$. By a symmetric argument, if $k' < k$, we reach $\lambda'$ from $\lambda$ with probability at least $z' \eqdef \prod_{i=k}^{k'-1}{z'}_i$, where $z'_i \eqdef \alpha(1-\alpha')\frac{\alpha}{(1-\alpha)}(\lambda_{i}/\lambda_{i+1})^{-f(y)}$. Both $z,z'$ are positive by the finiteness of $f(y)$ and $\lambda_i$s. Putting all the above together, we have that the probability of transitioning from $(\lambda,y)$ to $(\lambda',y')$ is lower bounded by
\begin{equation}\nonumber
b^n\beta^n \phi_Y^n(y' |  y) \min\{z,z'\} > 0
\end{equation}
which shows that $\psmh$ is irreducible.

To show that $\psmh$ is aperiodic, it is sufficient to note that $\alpha + \beta < 1$. Then, with probability $1-\alpha-\beta$, $\psmh$ returns to the same state in $1$ step, which ensures that the g.c.d. of the set of time steps where $\psmh$ returns to the same state is $1$. \qed
\end{proof}

\begin{proof}[Proof of Theorem \ref{thm_drawOpt}]
$\bar{\phi}$ is irreducible by Lemma \ref{lemma_lpIrr} and by construction of an MH chain, $\psmh$ has $\bar{\Pi}$ stationary distribution. Hence, by the first part of Theorem \ref{thm_stationConv} $\psmh$ converges to $\bar{\Pi}$ in total variation. By the second part of the same theorem,  if $\ltvnorm{\psmh^t(\cdot|x_0) - \bar{\Pi}(\cdot)}\leq k$, then for all $t' >t$, $\ltvnorm{\psmh^{t'}( \cdot | x_0) - \bar{\Pi}(\cdot)}\leq k$. \qed
\end{proof}

\begin{proof}[Proof of Theorem \ref{thm_SCconv}]
As we mentioned above, this proof follows very closely the proof of Theorem 4.9  in \citep{levin_peres_wilmer:2009}. To begin with, first we note that by irreducibility of $\psmh$, the diameter $D$ (defined in (\ref{eq_diameter})) is finite. Hence, by definition of $\delta$ in (\ref{eq_delratio}), for each $x,x'$ we have that $\psmh(x'|x) \geq \delta \bar{\Pi}(x')$. 

Let $\msmh$ denote the transition matrix for the kernel $\psmh$ so that $\msmh(x,x') = \psmh(x'|x)$ -- i.e. row $i$ contains the distribution $\psmh(\cdot|x_i)$. Let $\bbfPi$ denote the transition matrix where each row is $\bar{\Pi}$. Then, setting $\eta \eqdef (1-\delta)$, we can write
\begin{equation}\nonumber
\msmh = (1-\eta)\bbfPi + \eta Q
\end{equation}
where $Q$ is another transition matrix. To see that $Q$ is a valid transition matrix, note that row $i$ of $Q$ is given by $\eta^{-1} [\psmh(\cdot|x_i) - (1-\eta) \bar{\Pi}(\cdot)]$. Summing the elements of this row, we get $\sum_{x'} \psmh(x'|x_i) - (1-\eta) \bar{\Pi}(x')$ = $\eta$, whence each row of $Q$ sums to $1$. Furthermore, by the definition that  $(1-\eta) = \delta$, each entry is also positive, showing that $Q$ is indeed a valid transition matrix.

\newcommand{\tmtx}{\mathbf{M}}

Now note that for any transition matrix $\tmtx$, $\tmtx \bbfPi = \bbfPi$. Additionally, since $\bar{\Pi}$ is stationary for $\psmh$,  $\bbfPi \msmh = \bbfPi$. We will now use the above facts to show by induction on $k$ that 
\begin{equation}\label{eq_convHyp}
\msmh^{Dk} = (1-\eta^k) \bbfPi + \eta^k Q^k
\end{equation}
which will imply the convergence we seek.

Clearly (\ref{eq_convHyp}) is true for $k = 0$. Assume, as the inductive hypothesis, that it is true for $k\leq n$. Then, we have
\begin{align}
\nonumber
&\msmh^{D(n+1)} = \msmh^{Dn}\psmh^{D} \\
\nonumber
&= (1-\eta^n) \bbfPi + \eta^n Q^n \msmh^{D}   \\
\nonumber
&= (1-\eta^n) \bbfPi + \eta^n Q^n [ (1-\eta) \bbfPi + \eta Q] \\
\nonumber
&= (1-\eta^n) \bbfPi  -\eta^{n+1} \bbfPi + \eta^n\bbfPi +   \eta^{n+1}Q^{n+1}\\ 
\nonumber
&= (1- \eta^{n+1} )\bbfPi + \eta^{n+1}Q^{n+1} 
\end{align}
The first equality is just the definition of $k$-step transitions. The second equality is obtained by applying the inductive hypothesis and because $\bbfPi \msmh = \bbfPi$. The third and fourth equality follows from applying the inductive hypothesis on $\msmh^D$ and the two facts about $\bbfPi$ established above. The final equality is obtained by cancelling out the terms. 

Now $\eta^k \rta 0$ as $k \rta \infty$ , and so each row of $\msmh$ converges to $\bar{\Pi}$. In other words for each $x$, $\lim_{t\rta \infty} \sum_{x'}\psmh^t (x'|x) - \bar{\Pi}(x') = 0$. This implies  $\lim_{t\rta \infty} \lonorm{\psmh^t (\cdot|x) - \bar{\Pi}(\cdot)} = 0$. Now since $\ltvnorm{\psmh^t (\cdot|x) - \bar{\Pi}(\cdot)}$ $=$ $\frac{1}{2} \lonorm{\psmh^t (\cdot|x) - \bar{\Pi}(\cdot)}$, this completes the proof.
\qed
\end{proof}

\begin{proof}[Proof of Lemma \ref{lemma_Dindep}]
Fix any two $(\lambda,y)$ and $(\lambda',y')$ and let $\bfx \eqdef x_0x_1\cdots x_n$ be a path with $x_0 = (\lambda,y)$ and $x_n = (\lambda',y')$. Assume that this path has positive probability under $\psmh$ for certain value $a,b,c$, respectively of $\alpha'$, $\alpha,\beta$. Then, by definition (\ref{eq_psmh}) of $\psmh$, the probability of this path has the form $C a^k (1-a) ^{k_2} b^{k_2} c^{k_3} (1-b-c)^{k_4}$ where the $k_i$ are integers and $C$ is a constant. Then, under a difference set of values $a',b',c'$, the probability of this path has the form $C {a'}^k (1-a') ^{k_2} {b'}^{k_2} {c'}^{k_3} (1-b'-c')^{k_4}$. Since $\alpha',\alpha,\beta \in (0,1)$, this probability must also be non-zero. Hence the set of paths of positive probability are invariant with respect to the values of $\alpha',\alpha$ and $\beta$. Since $D$ is the length of the shortest path of positive probability, this proves the lemma.
\end{proof}

\begin{proof}[Proof of Lemma \ref{lemma_aperirr}]
We just need to show that, for any two clusterings $\bfA$ and $\bfA'$, only a finite number of re-arrangement steps is sufficient to obtain $\bfA'$ from $\bfA$. Let the clusters of $\bfA'$, in some order, be $A'_1,A'_2,\cdots, A'_n$. Assume that the points of $A'_i$ are spread across $A_{i_1},\cdots,A_{i_k}$ with $n_1,n_2,\cdots,n_k$ points respectively. Then, with non-zero probability $A'_i$ will be created with $n_1$ points from $A_{i_1}$ (see Appendix \ref{sec_kernelComp} for the explicit computation). And from then on, with non-zero probability (again, see the computations given) the points of $A'_i$ in $A_{i_j}$ will be added to $A'_i$. Hence with non-zero probability $A'_i$ will be created. This holds for each $A'_i$, and hence we have a non-zero probability of constructing $\bfA'$ from $\bfA$. 
\end{proof}

\section{Worst Case Quantification of the Cost Function}\label{app_worstCase}

This appendix continues from Section \ref{sec_mdpN1prev} where we derived the case cost function for a clustering by considering an average case scenario. We now derive a cost function in the worst case setting. To begin, we define the following worst case measure of homogenity of the clustering.
\begin{equation}
\eps = \max_i \eps_i
\end{equation}
The main result is as follows.
\begin{lemma}\label{lemma_worstQuant}
If EXP-3-Transfer is run with source policies derived from $\bfA$ using definition \ref{def_srcPol} with $\beta$ set as in Theorem \ref{thm_EXPBound}, we have for each $1 \leq i \leq N$,
\begin{equation}\nonumber
\expc[V^{\pi^*_i}_{N+1}] - \expc[G_{E3T}]/T \leq g(c) + \eps + \max_i 2K(i) 
\end{equation}
Here the expectation is taken over the randomization of the task drawing process, randomization in EXP-3-Transfer and $P_{N+1}$ and $R_{N+1}$ (same as in Theorem \ref{thm_EXPBound}).
\end{lemma}
\begin{proof}
By the exact same steps as in the proof of Lemma \ref{lemma_avgQuant}, we have $V^{\pi^*_k}_{N+1} - \expc[G_{E3T}]/T \leq g(c) + \eps_{k_i} + 2K(k)$. Taking the max over $k$ yields
\begin{equation}\nonumber
\expc[V^{\pi^*_k}_{N+1}] - \expc[G_{E3T}]/T \leq g(c) +  \eps + \max_i 2K(i)
\end{equation}
which completes the proof.
\qed
\end{proof}

We now discuss which of the two quantifications, worst or average case, is more appropriate. If we have reason to strongly believe that the next MDP $\mdp_{N+1}$ is chosen by nature adversarially with respect to our choice of cluster $\bfA$ -- that is, nature chooses $\mdp_{N+1}$ to maximize $\max_{A_j \in \bfA}  \max_{\mdp_i \in A_j} \eps_j+ K(i)$ --   then clearly, the worst case quantification is the correct way to evaluate a clustering. On the other hand, if we do not have any reason to believe this, then the average case might be more appropriate. For instance a consider a clustering of $1000$ MDPs that contains all the MDPs in a $5$ clusters,  such that $4$ of the clusters have diameter $<10$ while the $5^{th}$ one is sparse but wide (say $10$ elements with diameter $100$). For many domains, we would consider this a good clustering and for this situation, the average case quantification would be $\leq 0.999 \times 10 + 0.001 \times 100 \max_{i} K(i)$ $\leq 10 + \max_iK(i)$, whereas the worst case quantification would be $100 + \max_{i} K(i)$. Intuitively this seems a little too pessimistic and indeed, we also observed similar results for our experiments, in the sense that the worst case quantification failed to uncover clusters that we would intuitively consider good. Hence, for the rest of the paper we use the average case quantification to define our distance function.

\section{Computations}\label{sec_kernelComp}
Here we present the computation of the ratio $\frac{\phi[\lambda,\bfA|\lambda',\bfA']}{\phi[\lambda',\bfA'|\lambda,\bfA]}$ defined using (\ref{eq_barphi}) and constructed using $\bar{\phi}^M_Y$ defined in Section \ref{sec_srchClust}. For this section, we set $|\bfA| = N$. We have four cases to consider.\\

\noindent {\bf Case 1:}
With probability $\alpha \alpha'$, $\lambda'$ increased and $\bfA'=\bfA$. In this case, we have 
$\phi[\lambda',\bfA|\lambda,\bfA] = \alpha \alpha'$, $\phi[\lambda,\bfA|\lambda',\bfA] = \alpha (1-\alpha')$ and 
\begin{equation}\nonumber
\frac{\phi[\lambda,\bfA|\lambda',\bfA]}{\phi[\lambda',\bfA|\lambda,\bfA]} = \frac{1-\alpha'}{\alpha'}.
\end{equation}

\noindent {\bf Case 2:}
With probability $\alpha (1-\alpha')$, $\lambda'$ decreased and $\bfA'=\bfA$. In this case we have $\phi[\lambda',\bfA|\lambda,\bfA]=\alpha (1-\alpha')$, $\phi[\lambda,\bfA)|\lambda',\bfA]=\alpha \alpha'$, 
\begin{equation}\nonumber
\frac{\phi[\lambda,\bfA|\lambda',\bfA]}{\phi[\lambda',\bfA|\lambda,\bfA]} = \frac{\alpha'}{1-\alpha'}
\end{equation}

\noindent {\bf Case 3:}
With probability $1-\alpha-\beta$, $\lambda'=\lambda$ and $\bfA=\bfA'$. $\phi[\lambda,\bfA,\lambda',\bfA]=\phi[\lambda',\bfA,\lambda,\bfA]=1-\alpha-\beta$ 
\begin{equation}\nonumber
\frac{\phi[\lambda,\bfA|\lambda,\bfA]}{\phi[\lambda,\bfA|\lambda,\bfA]} = 1
\end{equation}

\noindent {\bf Case 4:} 
With probability $\beta \beta'$, $\lambda'=\lambda$ and $\bf$ is rearranged. Now the probability of moving $k_i$ points from $A_i$ to $A_j$ is, 
\begin{equation}\nonumber
P(A_i,A_j;k_i)= N^{-2} PE(k_i;|A_i|,\theta_1) { |A_i| \choose k_i }^{-1} 
\end{equation}
The reverse probability now depends on what actually has been moved. We have 4 subcases:\\

\noindent {\it Case 4.1:}
If $k_i$ points are moved between clusters $A_i$ and $A_j$ from clustering $\bf$, with $0<k_i<|A_i|$:
\begin{equation}\nonumber
 P(A_j,A_i; k_i)= N^{-2} PE(k_i;|A_j|+k_i,\theta_1) \binom{|A_j|+k_i}{k_i}^{-1}
\end{equation}
Now, we have that $\phi[\lambda,\bfA'|\lambda,\bfA]=\beta \beta' P(A_i,A_j;k_i)$ and $\phi[\lambda,\bfA|\lambda,\bfA']=\beta \beta' P(A_j, A_i;k_i)$, so that, we have:
\begin{equation}\nonumber
\frac{\phi[\lambda,\bfA|\lambda,\bfA']}{\phi[\lambda,\bfA'|\lambda,\bfA]}
 =\frac{PE(k_i;|A_j|+k_i,\theta_1){\binom{|A_i|}{k_i}}}{PE(k_i;|A_i|,\theta_1) {\binom{|A_j|+k_i}{k_i}}}
\end{equation}

\noindent {\it Case 4.2:}
If $k_i$ points are moved from cluster $A_i$ to a new cluster $A_{|A|+1}$, with $0<k_i<|A_i|$:

\begin{equation*}
 P(A_{N+1}, A_i; k_i)= (N+1)^{-2} PE(k_i;k_i,\theta_1)
\end{equation*}
note that $|A_{N+1}|=k_i$. Now, we have that $\phi[\lambda,\bfA'|\lambda,\bfA]=\beta \beta' P(A_i, A_{N+1}; k_i)$ and $\phi[\lambda,\bfA|\lambda,\bfA']=\beta \beta' P(A_{N+1},A_i; k_i)$. So the desired ratio is:
\begin{equation}\nonumber
\frac{\phi[\lambda,\bfA|\lambda,\bfA']}{\phi[\lambda,\bfA'|\lambda,\bfA]} = \frac{N^2 PE(k_i;k_i,\theta_1) {\binom{|A_i|}{k_i}}}{({N+1})^2 PE(k_i;|A_i|,\theta_1) }
\end{equation}

\noindent {\it Case 4.3:}
If $|A_i|$ points are moved from cluster $A_i$ to existing cluster $A_j$, now we have one less cluster so that, 
\begin{equation*}
P(A_j, A_i: |A_i|)= (N-1)^{-2} PE(|A_i|;|A_j|+|A_i|,\theta_1)  \binom{|A_i|+|A_j|}{|A_i|}^{-1}
\end{equation*}
The $\phi$ values are: $\phi[\lambda,\bfA'|\lambda,\bfA]=\beta \beta' P(A_i \xrightarrow{|A_i|} A_j)$ and $\phi[\lambda,\bfA|\lambda,\bfA']=\beta \beta' P(A_j \xrightarrow{|A_i|} A_i)$. Together, this gives us the ratio:
\begin{equation}\nonumber
\frac{\phi[\lambda,\bfA|\lambda,\bfA']}{\phi[\lambda,\bfA'|\lambda,\bfA]} = \frac
{N^2 PE(|A_i|;|A_j|+|A_i|,\theta_1)  }
{(N-1)^2 PE(|A_i|;|A_i|,\theta_1) {\binom{|A_i|+|A_j|}{|A_i|}} }
\end{equation}

\noindent {\it Case 4.4:}
If $|A_i|$ points are moved from cluster $A_i$ to a new cluster $A_{|A|+1}$:
\begin{equation}\nonumber
P(A_{N+1}, A_i: |A_i|)= N^{-2} PE(|A_i|;|A_i|,\theta_1) 
\end{equation}
The clustering $\bfA$ does not change in this case and the $\phi$ values are:
$\phi[\lambda,\bfA'|\lambda,\bfA]=\beta \beta' P(A_i, A_{N+1};|A_i|)$, 
$\phi[\lambda,\bfA|\lambda,\bfA']=\beta \beta' P(A_{N+1},A_i; |A_i|)$, which gives us
\begin{equation}
\frac{\phi[\lambda,\bfA|\lambda,\bfA']}{\phi[\lambda,\bfA'|\lambda,\bfA]} = 1
\end{equation}


\section{Surveillance Domain Experiments: Algorithm Comparisons}\label{app_ful_alg_res}

In this section we give detailed cumulative reward curves for the $4$ algorithms: E3T with clustering, Policy-Reuse with clustering and Policy-Reuse with clustering. The results are given in Figures \ref{fig_alg_res_1} to \ref{fig_alg_res_3}. The results more or less show what the summary graphs showed. In particular, when the number of previous tasks and the complexity of task is low, Policy-Reuse is better than our algorithm. However, as the complexity keeps increasing, our algorithm begins to dominate both versions of Policy-Reuse, showing that clustering is beneficial.


\begin{figure}[h]
    \centering
      \subfloat[]{\label{fig:sf1}
     \includegraphics[width=0.47\textwidth]{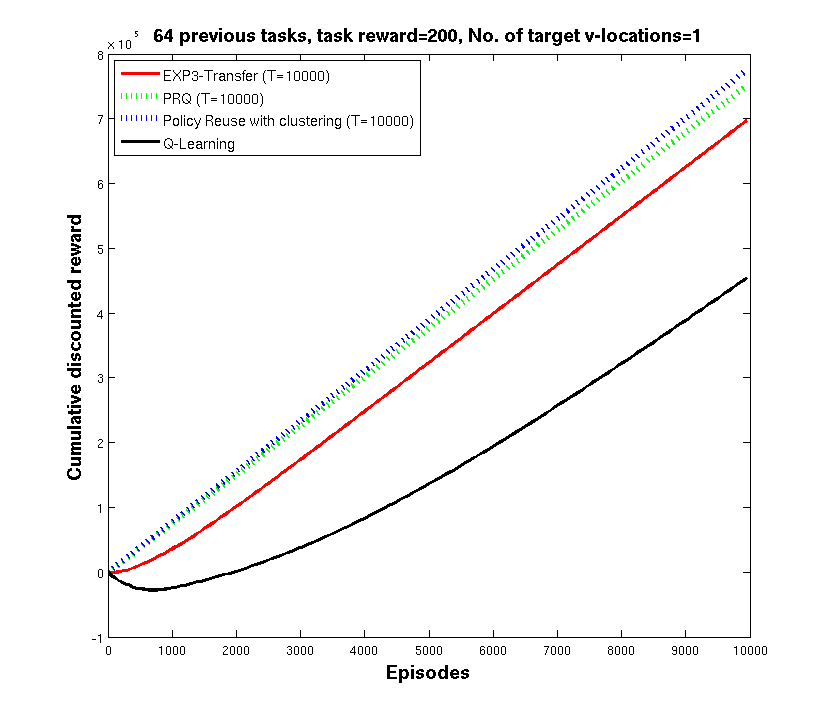}} \quad
 \subfloat[]{\label{fig:sf2}
       \includegraphics[width=0.47\textwidth]{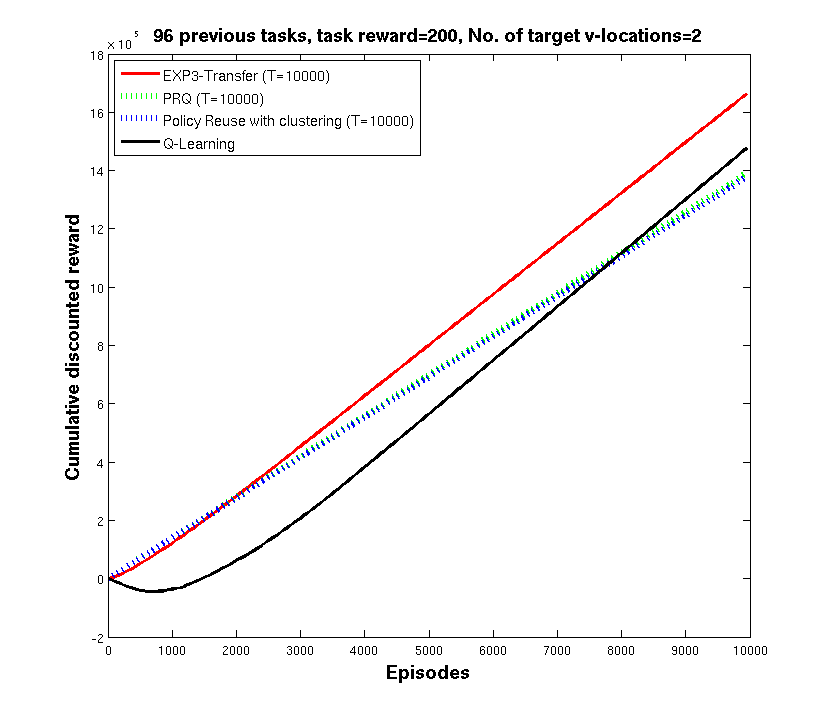}} 
\caption{{\sc Algorithm Comparisons.} These figures compares the performance of EXP-3-Transfer with clustering, Policy-Reuse with and without clustering, and Q-learning for various settings of the task (see the figure title). These are the detailed plots of the summary results presented in Section \ref{sec_expClustGain}. }
  \label{fig_alg_res_1}
\end{figure}

\begin{figure}[h]
    \centering
      \subfloat[ ]{\label{fig:sf1}
        \includegraphics[width=0.47\textwidth]{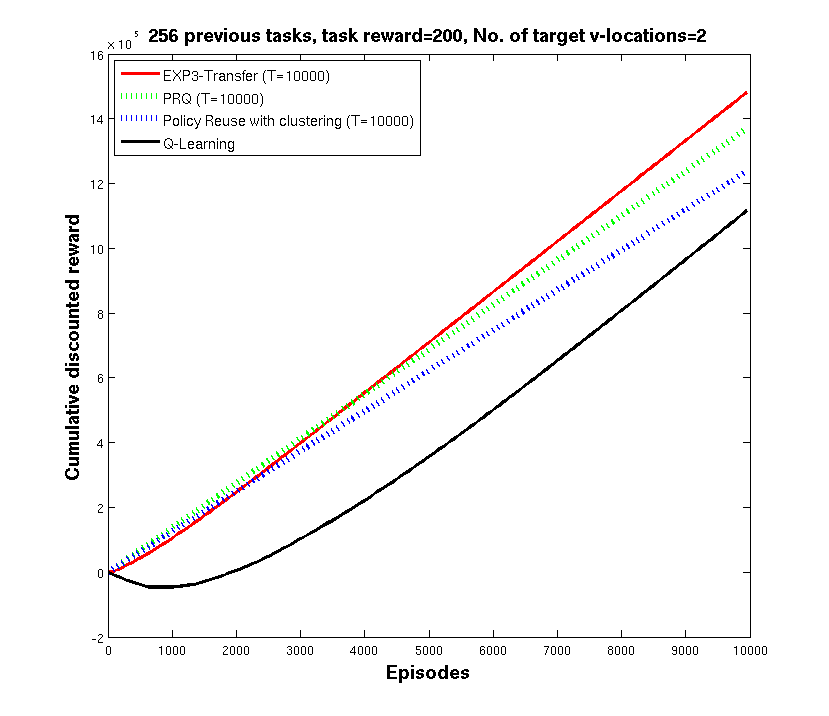}} \quad
 \subfloat[ ]{\label{fig:sf2}
       \includegraphics[width=0.47\textwidth]{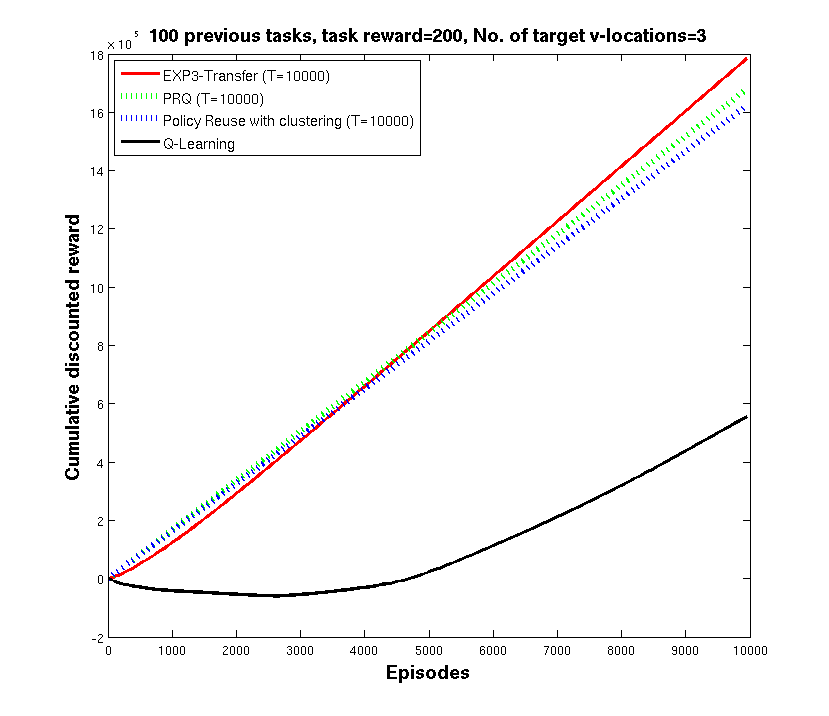}} 
\caption{{\sc Algorithm Comparisons continued.} These figures compares the performance of EXP-3-Transfer with clustering, Policy-Reuse with and without clustering, and Q-learning for various settings of the task (see the figure title). These are the detailed plots of the summary results presented in Section \ref{sec_expClustGain}. }
  \label{fig_alg_res_2}
\end{figure}

\begin{figure}[h]
    \centering
      \subfloat[ ]{\label{fig:sf1}
      \includegraphics[width=0.47\textwidth]{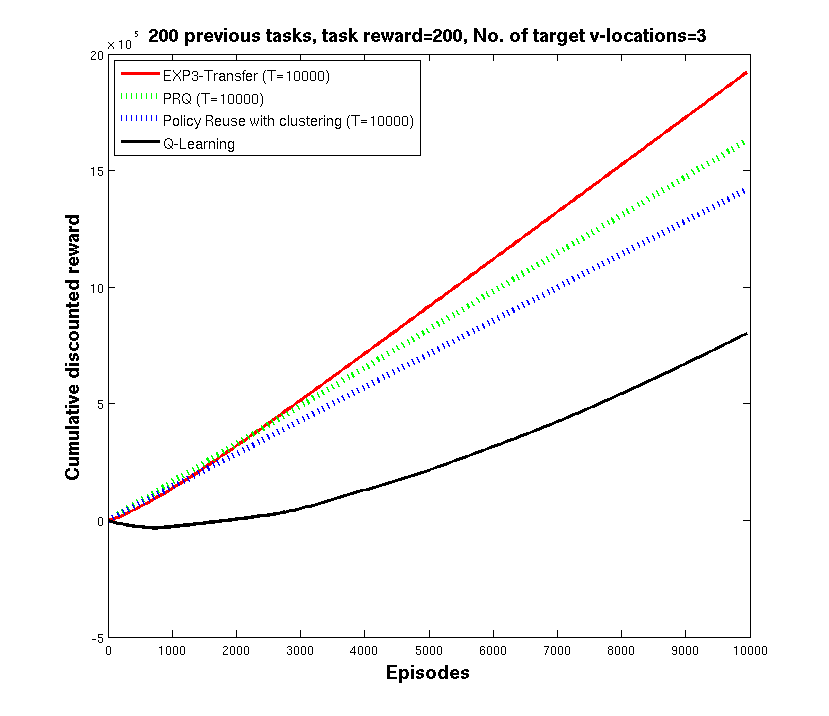}} \quad
 \subfloat[ ]{\label{fig:sf2}
         \includegraphics[width=0.47\textwidth]{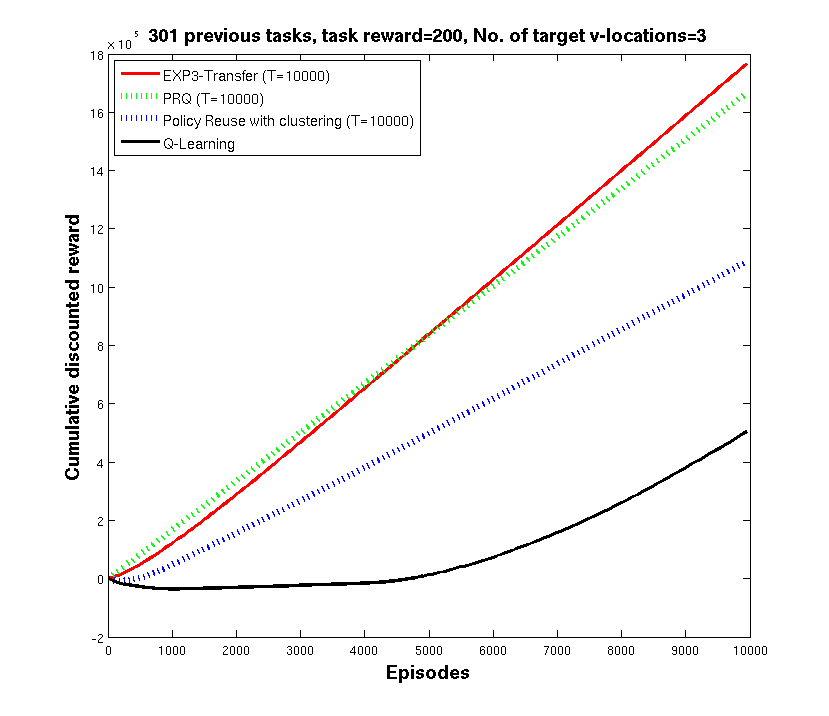}} 
\caption{{\sc Algorithm Comparisons continued.} These figures compares the performance of EXP-3-Transfer with clustering, Policy-Reuse with and without clustering, and Q-learning for various settings of the task (see the figure title). These are the detailed plots of the summary results presented in Section \ref{sec_expClustGain}. }
  \label{fig_alg_res_3}
\end{figure}

%
%

\clearpage

\subsection{Surveillance Domain: Clustering Comparisons}

In this section in figures \ref{fig_clustComp_1} to \ref{fig_clustComp_3} we present the learning curves summarized in figure \ref{fig_clustComp_nostar}. The general trend follows what was observed in Section \ref{sec_expClustComp}.

\begin{figure}[h]
\centering
 \subfloat[ ]{\label{fig:sf1}
    \includegraphics[width=0.47\textwidth]{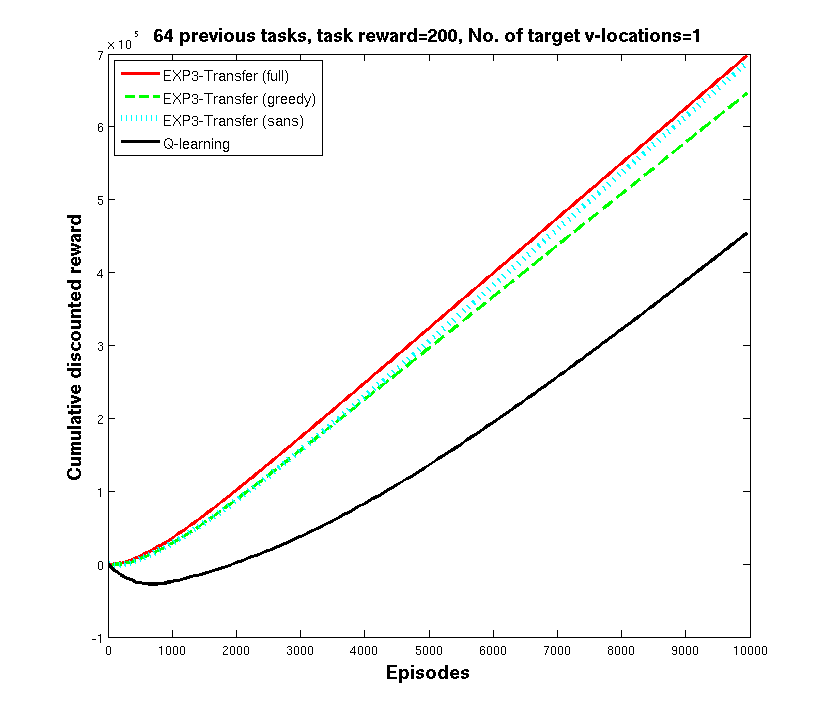}}
 \subfloat[ ]{\label{fig:sf1}
\includegraphics[width=0.47\textwidth]{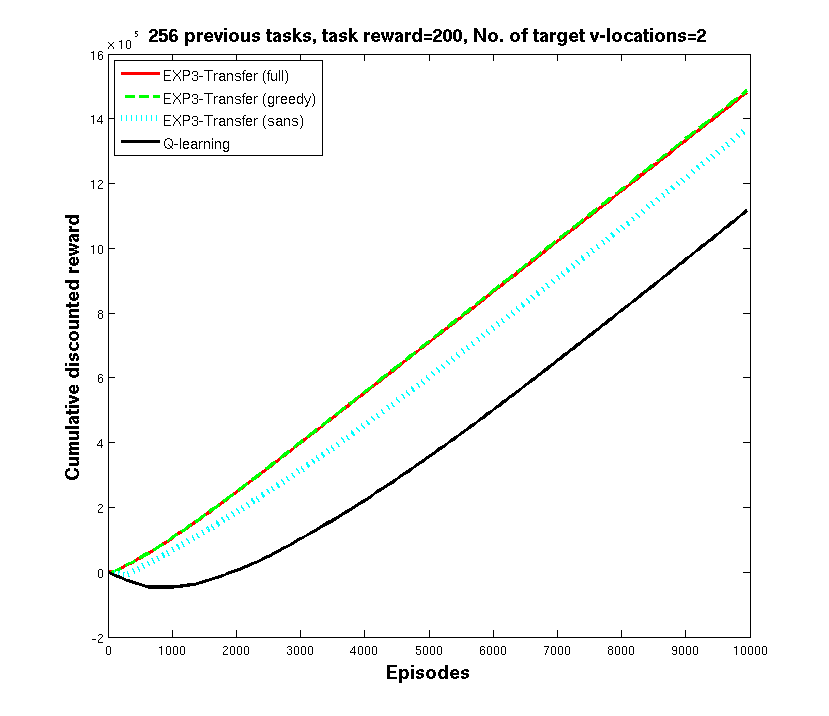}}
\caption{{\sc Clustering comparisions extended Results}. These figures show the results that are summarized in Figure \ref{fig_clustComp_nostar}. The title of the graphs describe the experiment setup.}\label{fig_clustComp_1}
\end{figure}

%

%
%

\begin{figure}[h]
\centering
 \subfloat[ ]{\label{fig:sf1}
      \includegraphics[width=0.47\textwidth]{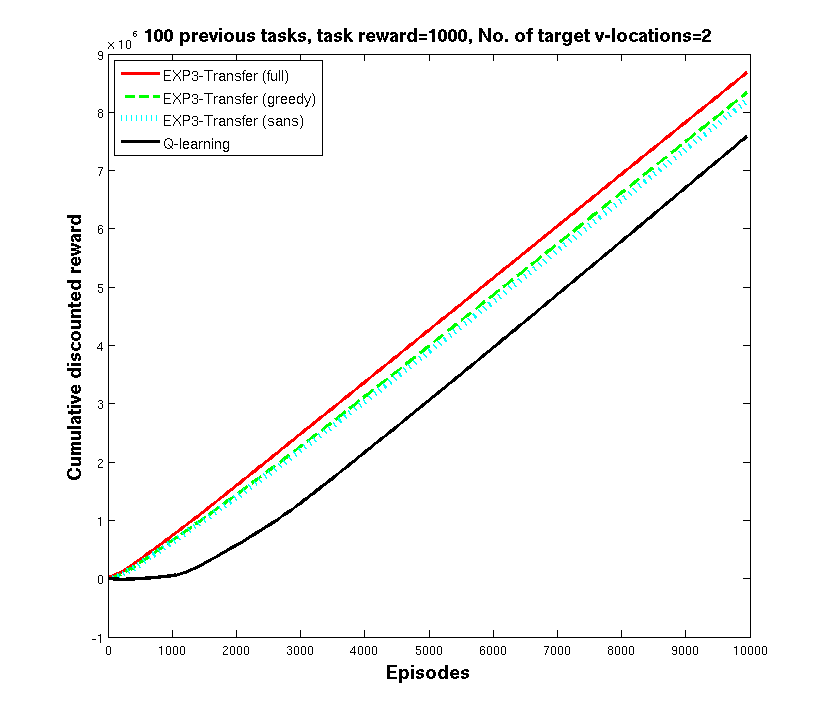}}
 \subfloat[ ]{\label{fig:sf1}
     \includegraphics[width=0.47\textwidth]{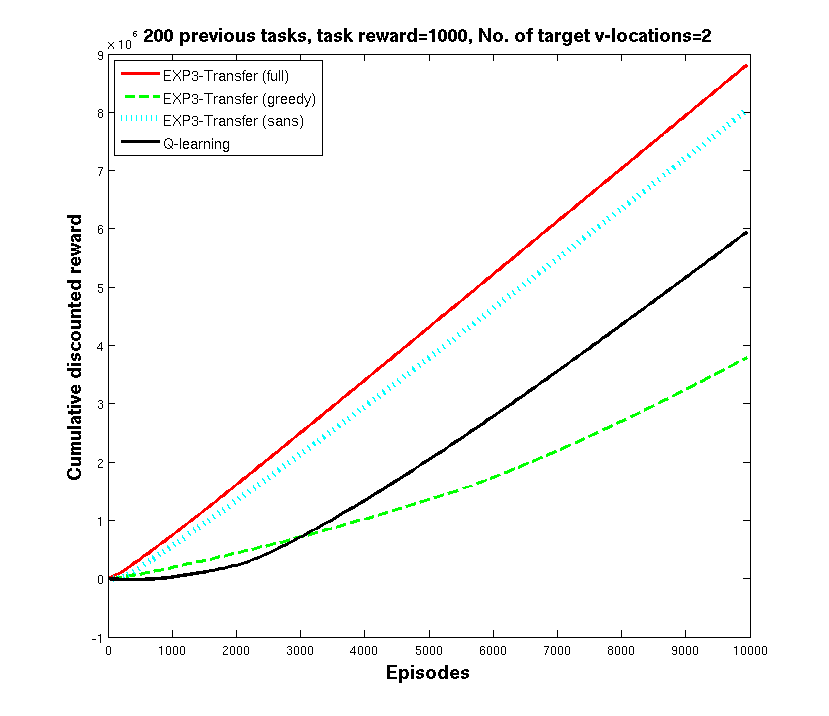}}
\caption{{\sc Clustering comparisions extended results continued.} These figures show the results that are summarized in Figure \ref{fig_clustComp_nostar}. The title of the graphs describe the experiment setup.}\label{fig_clustComp_2}
\end{figure}

\begin{figure}[h]
\begin{center}
\includegraphics[width=0.47\textwidth]{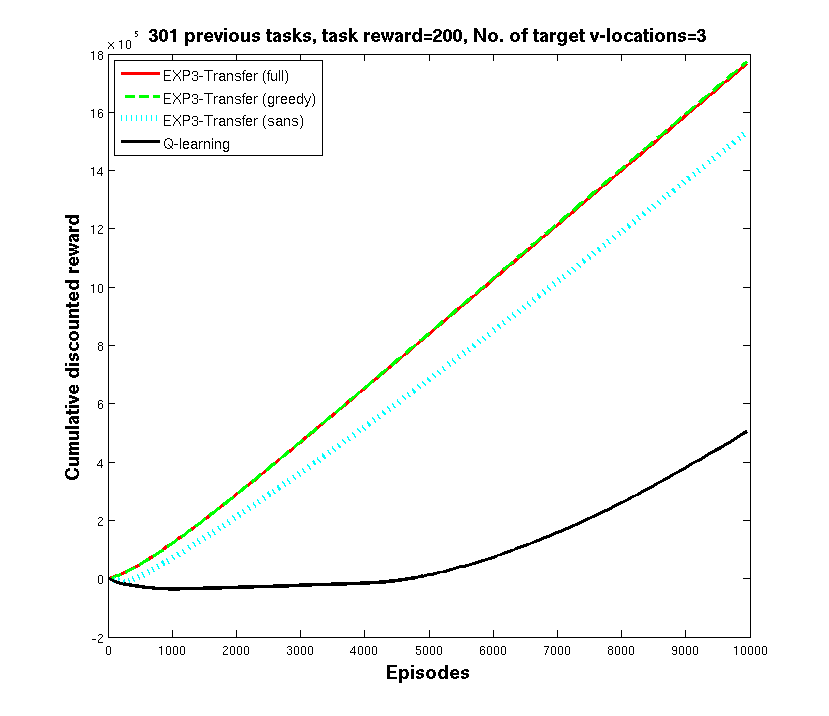}
\end{center}
\caption{{\sc Clustering comparisions extended results continued.} These figures show the results that are summarized in Figure \ref{fig_clustComp_nostar}. The title of the graphs describe the experiment setup.}\label{fig_clustComp_3}

\end{figure}

\subsection{Time Comparisons}\label{app_timeComp}

Figures \ref{fig_timeComp1} to \ref{fig_timeComp3} gives the time comparison results for transfer problems not described in Figure \ref{fig_time_all}.

%
%

\begin{figure}[h]
\centering
 \subfloat[ ]{\label{fig:sf1}
     \includegraphics[width=0.47\textwidth]{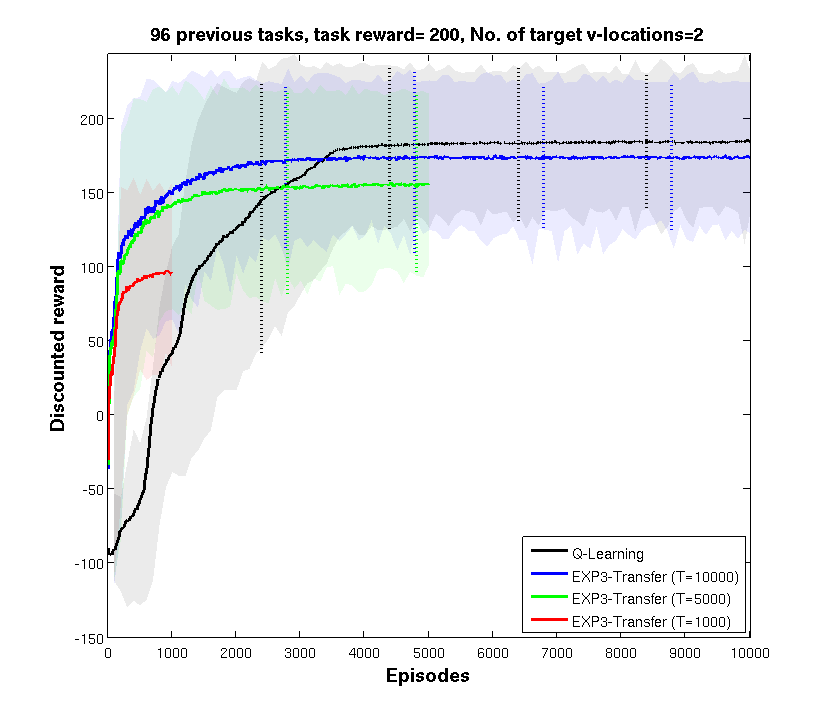}}
 \subfloat[ ]{\label{fig:sf1}
      \includegraphics[width=0.47\textwidth]{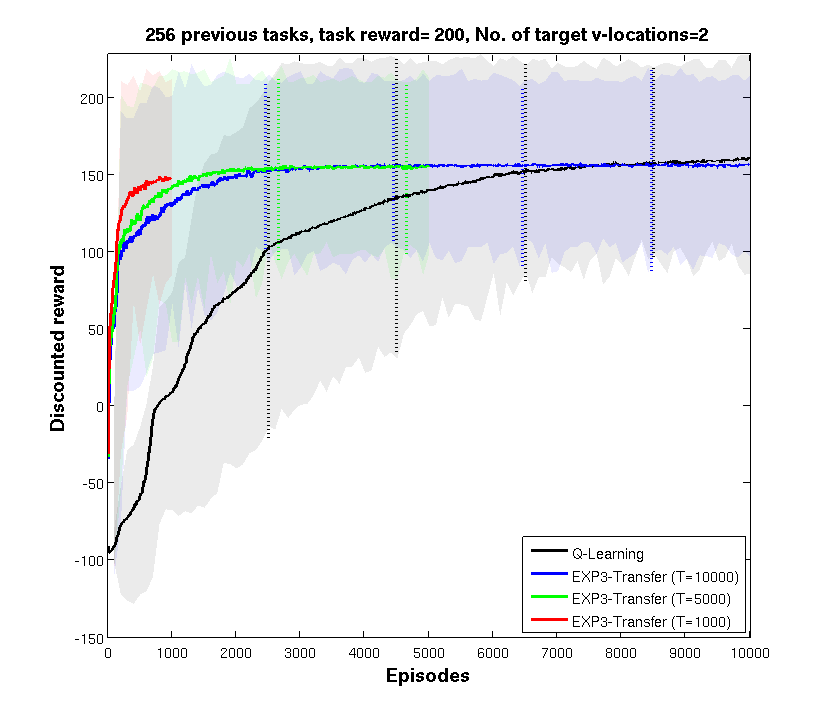}}
\caption{{\sc Time comparisions extended results.} These figures show time comparsion results for transfer tasks in addition to Figure \ref{fig_time_all}. The title of the graphs show the experiment setup. The shaded areas give the standard deviation for the learning curves.}\label{fig_timeComp1}
\end{figure}

%
%

\begin{figure}[h]
\centering
 \subfloat[ ]{\label{fig:sf1}
     \includegraphics[width=0.47\textwidth]{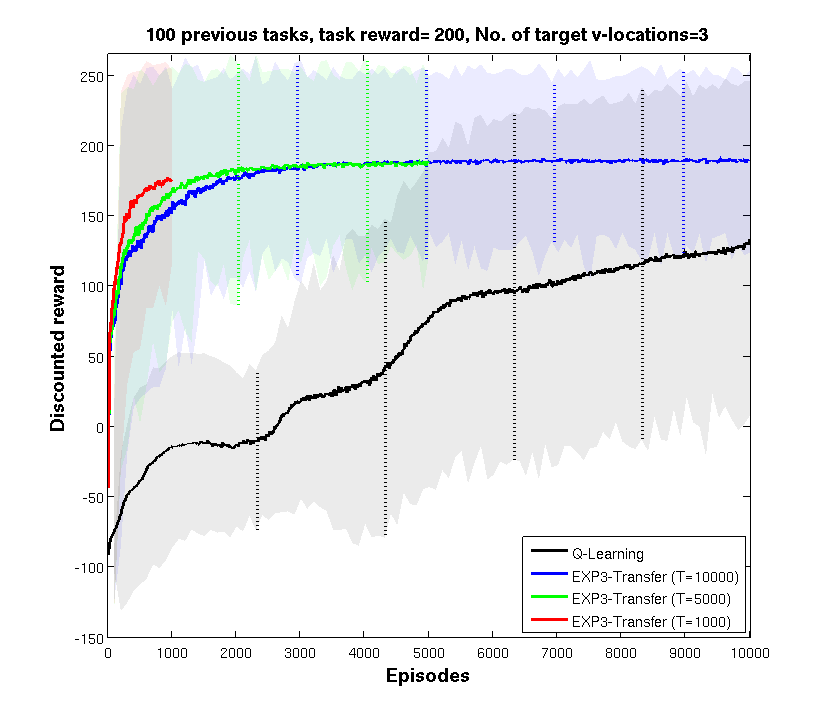}}
 \subfloat[ ]{\label{fig:sf1}
    \includegraphics[width=0.47\textwidth]{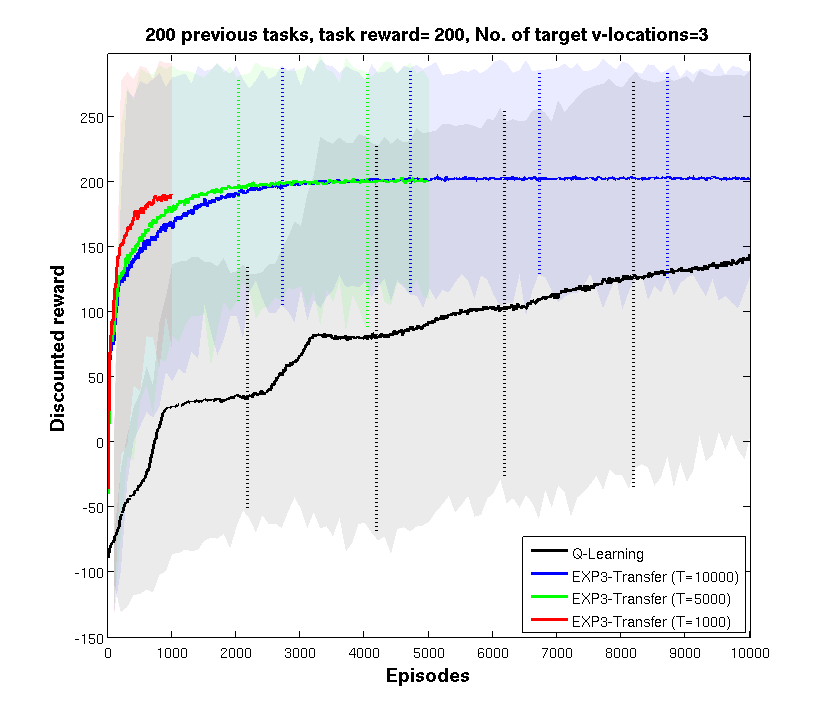}}
\caption{{\sc Time comparisions extended continued.} These figures show time comparsion results for transfer tasks in addition to Figure \ref{fig_time_all}. The title of the graphs show the experiment setup. The shaded areas give the standard deviation for the learning curves.}\label{fig_timeComp2}
\end{figure}

\begin{figure}[h]
\begin{center}
\includegraphics[width=0.47\textwidth]{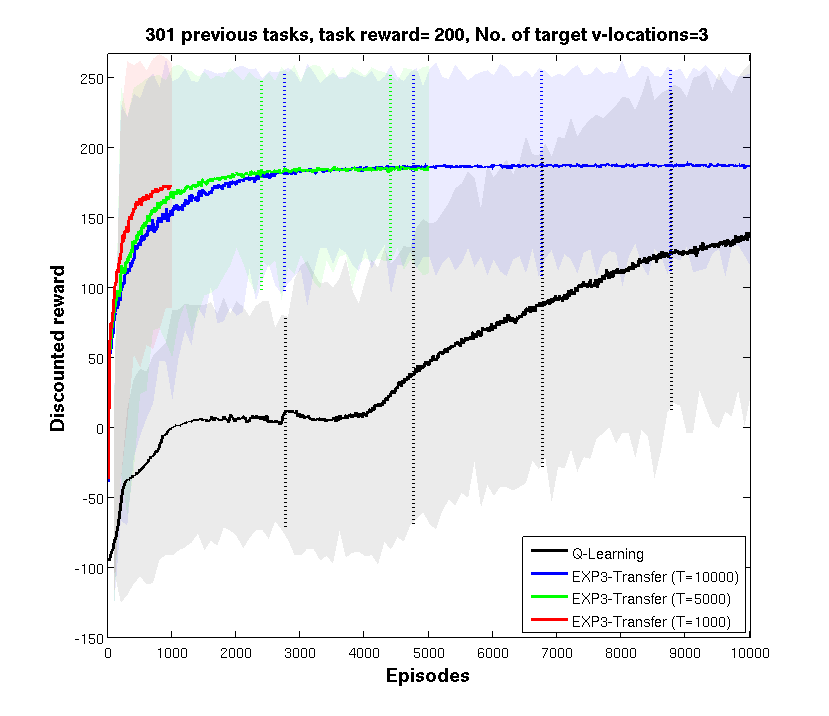}
\end{center}
\caption{{\sc Time comparisions extended continued.} These figures show time comparsion results for transfer tasks in addition to Figure \ref{fig_time_all}. The title of the graphs show the experiment setup. The shaded areas give the standard deviation for the learning curves.}\label{fig_timeComp3}
\end{figure}

\clearpage

\bibliography{hassan}

\end{document}